\theoremstyle{plain}
\newtheorem{theorem}{Theorem}[section]
\newtheorem{proposition}[theorem]{Proposition}
\newtheorem{lemma}[theorem]{Lemma}
\theoremstyle{definition}
\newtheorem{definition}[theorem]{Definition}
\newtheorem{assumption}[theorem]{Assumption}
\theoremstyle{remark}
\newtheorem{remark}[theorem]{Remark}
\let\hat\widehat
\let\tilde\widetilde
\newcommand{\N}{\ensuremath{\mathbb{N}}} 
\newcommand{\R}{\ensuremath{\mathbb{R}}} 
\newcommand{\poly}{{\rm poly}}
\newcommand{\notshow}[1]{{}}
\DeclareMathOperator*{\E}{\mathbb{E}}
\def \PP  {{\cal P}}
\def \NN  {{\cal N}}
\def \LL  {{\cal L}}
\def \BB  {{\cal B}}
\def \VV  {{\cal V}}
\def \EE  {{\cal E}}
\def \cD{{\mathcal{D}}}
\def \cC{{\mathcal{C}}}
\def \cF{{\mathcal{F}}}
\def \S{{\mathcal{S}}}
\def \AA{{\mathcal{A}}}
\def \NN{{\mathcal{N}}}
\def \CC{{\mathcal{C}}}
\def \ZZ{{\mathcal{Z}}}
\def \FF{{\mathcal{F}}}
\def \LL{{\mathcal{L}}}
\def \XX{{\mathcal{X}}}
\def \reg{{\textnormal{Regret}}}
\def \gap{{\textnormal{gap}}}
\newenvironment{prevproof}[2]{\noindent {\em {Proof of {#1}~\ref{#2}:}}}{$\Box$\vskip \belowdisplayskip}
\begin{document}

\title{The Best of Both Worlds: Reinforcement Learning with Logarithmic Regret  and  Policy Switches}

\author{ Grigoris Velegkas \\Yale University, USA\\grigoris.velegkas@yale.edu
 \and Zhuoran Yang\\Yale University, USA\\zhuoran.yang@yale.edu
  \and Amin Karbasi\\Yale University, USA\\ amin.karbasi@yale.edu
}

\date{}
\maketitle

\begin{abstract}
    In this paper, we study the problem of regret minimization for episodic
Reinforcement Learning (RL) both in the model-free
and the model-based setting.
We focus on learning with general function classes
and general model classes, and we derive results that scale
with the \emph{eluder dimension} of these classes. In contrast to the existing body of work that mainly establishes instance-independent regret guarantees, we focus on the instance-dependent setting and show that the regret scales logarithmically with the horizon $T$, provided that there is a gap between the best and the second best
action in every state. In addition, we show that such a logarithmic regret bound is realizable by algorithms with $O(\log T)$ switching cost (also known as \emph{adaptivity complexity}). In other words, these algorithms rarely switch their policy during the course of their execution.  Finally, we complement our results
with lower bounds which show that even in the tabular setting,
we cannot hope for regret guarantees lower than $o(\log T)$.

\end{abstract}

    \section{Introduction} \label{sec:intro} 

 The main goal of Reinforcement Learning (RL) is the design and analysis
 of algorithms for automated
decision making in complex and unknown environments. The environment
is modeled as a \emph{state space} and the available decisions
are modeled as an \emph{action space}. In recent years, RL has 
seen tremendous success in practical applications including, but not
limited to, games and robotics \citep{mnih2015human, silver2016mastering, duan2016benchmarking, silver2017mastering, vinyals2019grandmaster}.
Despite this success, a theoretical understanding of the 
algorithms that are
deployed in these settings remains elusive. Traditionally,
theoretical RL approaches have focused on the \emph{tabular}
setting where the complexity of the algorithms depends
on the cardinality of the
aforementioned spaces \citep{sutton2018reinforcement}.
Thus, they are not suitable for applications where
the state-action space is very large. 

A different approach that has gained a lot of attention recently,
is the \emph{function approximation} regime where the cumulative
reward of the algorithm is modeled through a function, such
as linear functions over some feature space.
The advantage of this approach is that the algorithm
can perform its search over a lower-dimensional space.
There is a long line of work that provides regret
guarantees for RL in the function approximation setting \citep{osband2014model, osband2016generalization,yang2020reinforcement,jin2020provably,ayoub2020model, cai2020provably,kakade2020information, zanette2020frequentist, he2021logarithmic,kong2021online, zhou2021provably}.

Most of these works have focused on establishing worst-case
$\sqrt{T}$-regret guarantees, where $T$ is the number
of interactions with the environment. The caveat with these guarantees
is that they are pessimistic since they neglect benign 
settings where even an \emph{exponential} improvement
over these bounds is achievable. To address this issue,
there are some works that obtain instance-dependent
regret bounds for RL in the tabular setting and 
in the linear function approximation 
setting~\citep{simchowitz2019non, yang2020reinforcement, he2021logarithmic}.
However, getting logarithmic regret bounds in the context of 
\emph{general} function approximation remains open. Hence,
a natural question is the following:

\emph{Can we establish instance-dependent logarithmic regret
bounds with general function approximation?}

In this paper, we provide an affirmative answer to this
question. Following the assumptions in the existing
literature, we model the RL problem as an MDP that enjoys
the property that the optimal \emph{policy} is at least
$\gap_{\min}$ better than any other policy,
where $ \gap_{\min} > 0$ is a parameter that
 captures the hardness of the underlying problem.
We focus both on the \emph{model-free} and \emph{model-based}
settings with general function approximation, where
we represent the value function or the transition model
by a given function class, respectively.

In the model-free setting, we study the algorithm
proposed in~\citet{kong2021online}, which is a variant
of the least-squares value iteration (LSVI) with upper confidence
bound (UCB) bonuses
that guide exploration. Here, the bonus functions
are given by the width of a data-dependent confidence region
for LSVI. For the model-based setting, we develop 
a similar algorithm which combines value-targeted
regression with UCB bonuses. On top of the 
logarithmic regret
guarantees they enjoy, our algorithms feature 
lazy policy updates, in the sense that policy
is updated rarely and only when certain conditions are met.

For both settings, we establish  $O(\poly(\log T) \cdot\poly(H) \cdot  \poly(d_{\cF} )  \cdot 1/\gap_{\min})$
regret guarantees, where $T$ is
the number of interactions with the environment,
$H$ is the \emph{planning horizon}, $d_{\cF} $
is a term that captures the complexity of the function class $\cF$ which is used to approximate either the value function or the transition model. In particular, $d_{\cF}$ involves both the 
\emph{eluder dimension}~\citep{russo2013eluder} and the log-covering numbers of   the 
 function classes.
That is, for benign
MDPs where $\gap_{\min} > 0$ these RL algorithms
achieve logarithmic regret, which is exponentially
better than the worst-case $O(\sqrt{T})$-regret.
Moreover, we show that the \emph{adaptivity complexity},
meaning the number of different policies our algorithms
use, is also logarithmic in $T$.
To the best of our knowledge, this is the first
work that establishes a logarithmic
instance-dependent regret guarantee for RL
with general function approximation.

\subsection{Related Work}
\label{sec:related work}

{\noindent \bf Logarithmic regret bounds for bandits.} 
There is a 
long line of work that establishes logarithmic
regret guarantees in bandit problems. Essentially,
bandits are a special case of RL where the transition to the
next state does not depend on the action that was taken by
the agent. An extensive list of such algorithms can
be found in~\cite{bubeck2012regret, slivkins2019introduction, lattimore2020bandit}.

{\noindent \bf Logarithmic regret bounds for RL.} 
A series of works are devoted to proving instance-dependent
logarithmic regret bounds in tabular RL. \citet{ok2018exploration, simchowitz2019non} prove lower bounds that show
that a logarithmic dependence on $T$ is unavoidable.
Considering the upper bounds,~\citet{auer2007logarithmic, tewari2007optimistic} establish logarithmic regret guarantees in the 
average reward setting. Both of these guarantees are asymptotic
as they require the number of interactions $T$ with the MDP to 
be large enough. Regarding non-asymptotic bounds,~\citet{jaksch2010near}
provide such an algorithm that achieves 
$O(D^2 |\S|^2 |\AA| \log(T) /\gap_{\min})$ regret for the
average-reward MDP, where $D$ is the diameter of the MDP.
For episodic MDPs, logarithmic regret upper bounds
are established in~\cite{simchowitz2019non, yang2021q}.
The work that is probably the most closely related to ours
is~\cite{he2021logarithmic}. It provides instance-dependent
logarithmic regret guarantees both in the model-free
and model-based setting with \emph{linear} function
approximation. Moreover, the proposed algorithms update
the policy in every episode.
Our work generalizes these results since the linear
regime is a special case 
of the setting we are studying. Furthermore, we 
achieve an exponential improvement on the adaptivity complexity
over their algorithms.

{\noindent \bf Bandits with limited adaptivity complexity.}
There is a lot of interest in
obtaining bandit algorithms that update their policies
rarely~\citep{abbasi2011improved, perchet2016batched,agarwal2017learning,gao2019batched, dong2020multinomial, chen2020minimax, ruan2021linear}.
Notably,~\citet{dong2020multinomial} study rare policy
switching constraints for a broader class of online 
learning and decision making problems such as logit 
bandits.

{\noindent \bf RL with limited adaptivity complexity.}
Recently, there has been a lot of interest in developing
RL algorithms that achieve sub-linear regret and have
low adaptivity complexity~\citep{bai2019provably,zhang2020almost, wang2021provably, kong2021online, gao2021provably}. 
We develop an algorithm with low adaptivity complexity
that works
in the model-based, general function approximation setting.

{\noindent \bf RL with general function approximation.}
As we have alluded to already, because of the enormous size
of the state-action space in real world applications, it 
is important to develop and analyze algorithms in the 
function approximation regime. So far, the most commonly studied
setting is RL with linear function approximation~\citep{yang2020reinforcement, jin2020provably, du2020agnostic, wang2020reinforcement, zanette2020learning, agarwal2020flambe}.
Recently, there are also important results in RL
with general function approximation. To be specific, ~\citet{jiang2017contextual} design 
an efficient algorithm whose sample complexity is bounded in terms
of the Bellman rank of the function class.~\citet{ayoub2020model}
develop an algorithm for model-based RL, whose regret bound
depends on the eluder dimension of the underlying class of models.
\citet{jin2021bellman} propose  an algorithm for problems where
the underlying class has bounded Bellman eluder dimension. 
Recent works such as \cite{wang2020reinforcement, kong2021online} develop algorithms
in the model-free setting whose regret scale with
the eluder dimension of the functions.~\citet{foster2020instance} also 
propose an LSVI-based algorithm whose regret depends on a disagreement
coefficient, which is upper bounded by the eluder dimension of the function
class. The caveat with their approach is that it requires the block MDP
assumption.


 
\section{Preliminaries}

In this section, we present the notation and the important
definitions we 
use throughout the paper.

\subsection{Notation}
\label{sec:prelim-notation}

We use the common notation $[N] = \{1,2,\ldots,N\}$.
We also define the infinity norm of some function $f:\XX \rightarrow \R$, 
where $\XX$ is some domain, to be $||f||_{\infty} = \sup_{x\in \XX} |f(x)|$.
For a dataset $\cD = \{(x_i, q_i)\}_{i=1}^n \subseteq \XX\times\R$ and a function $f:\XX \rightarrow \R$, 
we define the following norm
\begin{align*}
    ||f||_{\cD} = \left(\sum_{i=1}^n (f(x_i) - q_i)^2 \right)^{1/2}.
\end{align*}
Given a set $\ZZ = \{x_i\}_{i=1}^n \subseteq \XX$ we let
\begin{align*}
    ||f||_{\ZZ} = \left(\sum_{i=1}^n f(x_i)^2\right)^{1/2}.
\end{align*}
be the data-dependent norm.
Given a measurable set $\XX$, we denote with 
$\Delta(\XX)$ the 
probability simplex over $\XX$.
We also denote by $\mathbbm{1}[\EE]$ the indicator function
of the event $\EE$.
We denote by $\poly(x)$ a function that is a polynomial in $x$.
\subsection{Episodic Markov Decision Processes}
\label{sec:prelim-episodic-mdp}

The learning agent interacts with the environment over a sequence
of $K$ rounds which we call \emph{episodes}. 
We model the interaction of the agent with the environment in every episode as a 
\emph{Markov Decision Process} (MDP). We denote an MDP by
$M = (\S, \AA, P, r, H, s_1)$, where $\S$ is the \emph{state space}, $\AA$ is the \emph{action space}, $P = \{P_h:\S\times\AA \rightarrow \Delta(\S)\}_{h=1}^H$ are the \emph{transition kernels},
$r = \{r_h: \S\times \AA \rightarrow [0,1] \}_{h=1}^H$ 
are the \emph{reward functions} which we assume to be deterministic, $H$ is 
the \emph{planning horizon}, which is the length of every episode,
and $s_1$ is the initial state of every episode. During every episode,
the agent uses a \emph{policy} $\pi = \{\pi_h:\S \rightarrow \AA\}_{h=1}^H$, to take an action at a given state. 
We use the Q-function and V-function to evaluate the expected total
reward
generated by a policy $\pi$. More specifically,
we define
\begin{align*}
    Q_h^{\pi}(s,a) &= \E \left[\sum_{h'=h}^H r_{h'}(s_{h'}, a_{h'}) \big| s_h = s, a_h = a, \pi\right]\\
        V_h^{\pi}(s) &= \E \left[\sum_{h'=h}^H r_{h'}(s_{h'}, a_{h'}) \big| s_h = s, \pi\right],
\end{align*}
where the actions are picked according to $\pi$ and $s_{h'+1} \sim P_{h'}(\cdot|s_{h'}, a_{h'})$. For simplicity, we denote
$\langle P_h(\cdot|s,a), V \rangle = \E_{s' \sim P_h(\cdot|s,a)}[V(s')]$.
We denote the optimal policy for a given MDP with $\pi^*$. 
Similarly, for the optimal Q-function and V-function 
we use
$Q^*_h(s,a) = Q^{\pi^*}_h(s,a), V^*_h(s) = V^{\pi^*}_h(s)$, respectively.

The goal of the learner is to improve her performance as she interacts 
with the unknown environment.
In the episodic setting, the agent commits to a policy at the beginning
of every episode.
We let $\pi^k$ denote the policy that the agent uses in the $k$-th episode.
At each step $h \in [H]$, the agent observes the state $s_h^k$, 
chooses an action according to $\pi^k$, and then observes the reward
$r_h(s_h^k,a_h^k)$ and the next state $s^k_{h+1} \sim P_h(\cdot|s_h^k,a_h^k)$.
 In this work, to measure the performance
of the agent we use the notion of \emph{regret},
defined as
\begin{align*}
    \reg(K) = \sum_{k=1}^K\left(V_1^*(s_1) - V_1^{\pi^k}(s_1) \right).
\end{align*}
The regret measures the difference between the total reward that the agent would have accumulated if she was following the optimal policy and the reward she actually accumulates.
The intuition behind striving for algorithms that guarantee sub-linear regret
is that we want, as $K \rightarrow \infty$, the average 
reward of the learner who follows the algorithm to approach
that of the optimal policy.

We now describe the tabular MDP setting, which is arguably the simplest
setting one can work on. The lower bounds on the regret that we state
apply to this setting, whereas both of the upper bounds we derive capture 
the tabular
MDP setting. In this regime, there are $|\S|$ states, $|\AA|$ actions, and for 
each step $h \in [H]$ the transition probability is given by $P_h(s'|s,a)$, where $\sum_{s' \in \S} P_h(s'|s,a) = 1$, and
the reward is denoted by $r_h(s,a)$. The caveat with this setting is that in practical applications
the state space and the action space can be very large, so the bounds that depend explicitly
on the cardinality of these quantities are not very useful.

An important assumption we make in order
to achieve logarithmic regret guarantees is that the minimum \emph{sub-optimality} gap is positive.

\begin{definition}\label{def:minimum gap}
We define the sub-optimality gap of a state-action pair $(s,a)$ at step $h$ to be
\begin{align*}
    \gap_h(s,a) = V^*_h(s) - Q^*_h(s,a).
\end{align*}
The minimum sub-optimality gap is defined to be
\begin{align*}
    \gap_{\min} = \min_{h,s,a}\{\gap_h(s,a): \gap_h(s,a) \neq 0 \}.
\end{align*}
\end{definition}
It is well-known that if we do not make any assumptions the best 
regret guarantee we can hope for is $O(\sqrt{T})$ \citep{jaksch2010near}. 
In this work, we derive instance-dependent
regret guarantees that achieve an exponential improvement
on $T$ when $\gap_{\min} > 0$.

\subsection{Model-Free Assumption}
\label{sec:linear-mdp-assumption}
In this paper, we deal with general function classes.  
In the model-free setting we assume that we have access to a function class
$\FF \subseteq \{f:\S \times \AA \rightarrow [0,H+1] \}$.
Our goal is to use the functions in $\FF$ to approximate
the optimal Q-function. In order to derive
meaningful results
we assume that this class has some structure. 
We follow the same assumption as in~\cite{wang2020reinforcement, kong2021online}.

\begin{assumption}[Bellman Operator Assumption]
\label{as:linear-mdp}

For any $h \in [H]$ and $V: \S \rightarrow [0,H]$ there exists some $f_V \in \FF$
such that for all $(s,a) \in \S \times \AA$ we have
\begin{align*}
    f_V(s,a) = r_h(s,a) + \sum_{s' \in \S} P_h(s'|s,a) V(s').
\end{align*}
\end{assumption}

The intuition behind this assumption is that if we apply the one-step Bellman backup operator 
to some value function $V$, i.e.
\begin{align*}
    r_h(s,a) +  \sum_{s' \in \S} P_h(s'|s,a) V(s'),
\end{align*}
the result will remain in the 
function class. Thus, it implicitly poses some constraints both
on the transition probabilities and the reward function.
It is known that both the tabular setting and the
linear MDP setting \citep{yang2019sample, jin2020provably} satisfy this assumption.

Another assumption we make is that the function class and the state-action space
have bounded covering numbers. We will show that the dependence of the regret on the covering number
is poly-logarithmic. This
assumption has also appeared in other works \citep{russo2013eluder, wang2020reinforcement, jin2021bellman, kong2021online}.

\begin{assumption}[Bounded Covering Number]
\label{as:bounded-covering-number}
We say that $\NN(\FF,\varepsilon)$ is a bound on the $\varepsilon$-covering number of $\FF$, if
for any $\varepsilon > 0$ there is an $\varepsilon$-cover $\cC(\FF, \varepsilon) \subseteq \FF$
with size $|\cC(\FF, \varepsilon)| \leq \NN(\FF, \varepsilon)$, so that
for all $f \in \FF$ there is some $f'\in \CC(\FF,\varepsilon)$ such that
$||f-f'||_{\infty} \leq \varepsilon$.
Similarly, we say that $\NN(\S \times \AA,\varepsilon)$ is a 
bound on the $\varepsilon$-covering number of $\S \times \AA$ with
respect to $\FF$, if for any $\varepsilon > 0$ there is
an $\varepsilon$-cover $\cC(\S \times \AA, \varepsilon)
\subseteq \S \times \AA$ with size $|\cC(\S\times \AA,
\varepsilon)| \leq \NN(\S \times \AA, \varepsilon)$, so that
for all $(s,a) \in \S \times \AA$ there is some $(s',a')\in \CC(\S\times\AA,\varepsilon)$ such that
$\sup_{f \in \FF}|f(s,a)-f(s',a')| \leq \varepsilon$.
\end{assumption}
The intuition behind this assumption is straightforward: even
if the function class or the state-action space are infinite,
we can approximate them using a small number of points. 

\subsection{Model-Based Assumption}
The assumptions in Section~\ref{sec:linear-mdp-assumption} are \emph{model-free} since they impose some structure on the function class
that approximates the Q-function instead of the transition kernel.
In order to derive our results, we can also
follow a different route and impose some structure directly
on the transition kernel~\citep{ayoub2020model}.

\begin{assumption}[Known Transition Model Family]
\label{as:model-based-assumption}
For all $h \in [H]$, the transition model $P_h$ belongs to a family of
models $\PP_h$ which is known to the learner. The elements of $\PP_h$
are transition kernels that map state-action pairs to signed distributions
over the state space $\S$.
\end{assumption}

We allow signed distributions in our model class to increase
its generality. For example, this is useful when we are given
access to a model class that can be compactly represented
only when it includes non-probability kernels. For an extensive
discussion about this, the reader is referred to~\cite{pires2016policy}.

Transition kernels, either parametric or non-parametric, have been
used to model complex stochastic controlled systems. For instance,
transitions in robotics systems are often modelled using
parameters of the environment, such as friction.

An important class that satisfies this assumption are the 
\emph{linear mixture models}.

\begin{definition}
\label{def:linear-mixture-mdp}
The class of models $\PP$ with
feature mapping $\phi(s'|s,a): \S \times \AA \times \AA \rightarrow \R^d$ 
and some $\theta^* \in \R^d, ||\theta^*||_2\leq C_{\theta},$ is called linear mixture model if:
\begin{itemize}
    \item   $ P(s'|s,a) = \left\langle\phi(\cdot|s,a), \theta^* \right\rangle.$
    \item For any bounded function $V: \S \rightarrow [0,H]$ and any pair $(s,a) \in \S \times \AA$, we have $||\phi_V(s,a)||_2 \leq \sqrt{H}$, where
    $$\phi_V(s,a) = \left\langle \phi(\cdot| s,a), V\right \rangle.$$
\end{itemize}
\end{definition}

One way to interpret the linear mixture model is as an aggregation of 
some basis models which are known to the designer~\citep{modi2020sample}.
Another interesting way to think about it comes from large-scale
queuing networks where both the arrival rate of jobs and
the processing speed for the queues are unknown. If we approximate
this system in discrete time, then the transition matrix
from timestep $t$ to timestep $t+\Delta t$ approaches that
of a linear function with respect to the arrival rate
and the processing time~\citep{gnedenko1989introduction}.

Another interesting setting that satisfies this assumption
is the linear-factored MDP~\citep{yang2020reinforcement}. 
Assuming that the state space is discrete, we have that
\begin{align*}
    P(s'|s,a) &= \phi(s,a)^T M \psi(s')\\
    &= \sum_{i=1}^{d_1}\sum_{j=1}^{d_2} M_{ij}[\psi_j(s')\phi_i(s,a)].
\end{align*}

\subsection{Complexity Measure: Eluder Dimension}
\label{sec:complexity-measure}

Our results depend on the complexity of the function classes and the model
classes that we consider.
To measure this complexity, we use the \emph{eluder dimension} of 
these classes~\citep{russo2013eluder}.

\begin{definition}
\label{def:eluder-dimension}
Fix some $\varepsilon \geq 0$ and a sequence of $n$ points $\mathcal{Z} = \{(x_i) \}_{i \in [n]} \subseteq \XX$. Then:
\begin{enumerate}
    \item A point $x \in \XX$ is $\varepsilon$-dependent on $\mathcal{Z}$ with respect to $\mathcal{F}$ if for all $f, f' \in \mathcal{F}$ such that $||f-f'||_{\mathcal{Z}} \leq \varepsilon$ it holds that $|f(x) - f'(x)| \leq \varepsilon$.
    
    \item A point $x$ is $\varepsilon$-independent of $\mathcal{Z}$ with respect to $\mathcal{F}$ if $x$ is not $\varepsilon$-dependent on $\mathcal{Z}$.
    
    \item The $\varepsilon$-eluder dimension of $\mathcal{F}$, which is denoted by $\text{dim}_E(\mathcal{F}, \varepsilon)$, is the length of the longest sequence of elements in $\XX$ such that every element in this sequence is $\varepsilon'$-independent of its predecessors, for some $\varepsilon' \geq \varepsilon$.
\end{enumerate}

Intuitively, the eluder dimension of $\FF$ quantifies the
largest set of elements $\ZZ \subseteq \XX$ so that if all $f \in \FF$
are close with respect to $\ZZ$, then they are close
on all elements of $\XX$.

 It is known that when $\XX = \S \times \AA, f: \S \times \AA
\rightarrow [0,H]$, and $\S, \AA$ are finite, we have that $\dim_{E}(\FF,
\varepsilon) \leq |\S| \cdot |\AA|$, for all $\varepsilon > 0$~\citep{russo2013eluder, wang2020reinforcement}. Moreover,
when $\FF$ is the class of linear functions, i.e., $f_{\theta}(s,a) = \theta^T \phi(s,a)$, for a given feauture vector $\phi(s,a)$, the eluder dimension of
$\FF$ is bounded by $\dim_E(\FF, \varepsilon) = O(d \log(1/\varepsilon))$, for all $\varepsilon > 0$. 
We also remark that more classes including generalized
linear functions and bounded degree polynomials
have bounded eluder dimension~\citep{russo2013eluder, osband2014model,li2021eluder}.


\end{definition}



\subsection{Switching Cost}
\label{sec:switching-cost}
Essentially,
the \emph{switching cost} or the \emph{adaptivity complexity} measures the number of episodes the algorithm
has to update its policy in order to achieve the guaranteed regret bound~\citep{bai2019provably, kong2021online}. More formally:

\begin{definition}
\label{def:switching-cost}
We define the switching cost of an algorithm $A$ over $K$ episodes to be
\begin{align*}
    N_{\textnormal{switch}} = \sum_{k=1}^{K-1} \mathbbm{1}[\pi_{k} \neq \pi_{k+1}].
\end{align*}
\end{definition}
\section{Overview of the Algorithms and     Main Results}
\label{sec:overview-of-main-results}

In this section, we present our main results and
give a high-level description of the techniques
we use. We treat both the model-free and the model-based setting in a unified way.
Our algorithm, inspired by~\cite{kong2021online}, is presented
in Algorithm~\ref{alg:main algorithm}. The only differences
between the two settings are the different sampling routine
and Q-function estimator used by Algorithm~\ref{alg:main algorithm}. In a nutshell, our low-switching cost algorithm works as follows:
\begin{itemize}
    \item After each round of the interaction with the MDP, 
    Algorithm~\ref{alg:main algorithm} adds elements to the current dataset
    with some probability that depends
    on their significance 
    and updates the policy only if the dataset has changed.
    This guarantees that the adaptivity of the algorithm
    depends logarithmically on $T$, without hurting the
    regret guarantee.
    
    \item We use a least-squares estimate of the Q-function
    (transition kernel) in the model-free (model-based) setting.
 
    \item We add a bonus to this estimate which encourages exploration
    and, with high probability, guarantees that the current estimate of
    the Q-function serves as an element-wise upper bound of $Q^*$. This
    bonus is based on a sub-sampled dataset that we have accumulated
    from previous interactions with the MDP.
\end{itemize}


Before we delve deeper into the two settings separately,
we describe a parameter that is crucial for both of the algorithms
we are using.
Following~\cite{kong2021online},
we define the \emph{sensitivity} of an element $z$ with respect to
a dataset $\ZZ$ and a function class $\FF$ to be
\begin{align*}
     \textnormal{sensitivity}_{\ZZ,\FF}(z) =
     \min\left\{\sup_{f_1, f_2 \in \FF}\frac{\left(f_1(z) - f_2(z) \right)^2}{\min\{||f_1 - f_2||_{\ZZ}^2, T(H+1)^2 \} + \beta} \right\}.
\end{align*}

Intuitively, this parameter captures the importance
of the current element $z$ relative to the dataset we are working 
with. 
We will elaborate on the choice of the parameter $\beta$
for each of the two settings separately.
To establish the regret guarantee,
we propose a novel regret decomposition where we utilize
the ``peeling technique" that has been applied in prior
works in 
local Rademacher complexities~\citep{bartlett2005local}
and in RL~\citep{he2021logarithmic, yang2021q}. 
By doing that, we show how the regret of the algorithm
relates to the suboptimality gap.

To establish the lower bound on the regret of any algorithm in the settings
we are interested in, we utilize a result that was proved in~\cite{ok2018exploration, simchowitz2019non}. It states that for all algorithms
that achieve sublinear regret, there exists a tabular MDP
where its regret is at least $\Omega(\poly(\log(T))\cdot \poly(H) \cdot 1/\gap_{\min})$.



\subsection{Model-Free Setting}
\label{sec:overview-model-free}

We first present the approach we use in the model-free setting, i.e., where we have access to some 
function class $\FF$  and state-action space $\S \times \AA$ that
satisfy Assumption~\ref{as:linear-mdp} and
Assumption~\ref{as:bounded-covering-number}, respectively. The Q-function estimator and the sampling routine 
that we use for this setting are presented in Algorithm~\ref{alg:model-free q estimator} and Algorithm~\ref{alg:model-free sampling routine}, respectively.
The dataset includes pairs of the form $z_h^k = (s_h^k,a_h^k)$.
The Q-function routine is a least-squares 
estimator that is based on all the previous interactions with the MDP. Notice that the bonus function is based only on the sub-sampled
dataset and depends on a hardcoded parameter $\beta$. This parameter is chosen in a way that ensures
the Q-function is an optimistic estimate of the actual one
and the bonus we add is not too large. Since we are working
with the same assumptions as~\citet{kong2021online},
our choice of $\beta$ coincides with theirs.

A crucial part of the algorithm is the online sub-sampling routine.
The reason we are using this routine is twofold. 
Firstly, if we use the entire dataset there will be a huge 
number of distinct elements in it, which can make the exploration
bonus unstable since it changes constantly and can take
infinitely many different values. In order to establish
the optimism of the Q-function estimation, namely, 
\begin{align*}
    Q_h^*(s,a) \leq Q_h^k(s,a) \leq \langle P_h(\cdot|s,a), V_h^k \rangle + 2b_h^k(s,a),
\end{align*}
\citet{kong2021online} show that it is crucial to bound the complexity of the exploration bonus.
Secondly, if we sub-sample the dataset based on the
importance of the elements, we can achieve the regret guarantees that
we are aiming for by switching the policy only when an important element has been added.
Notice that whenever an element is added to the dataset,
multiple copies are included.
This is to make the
sub-sampled dataset behave like an unbiased estimator of the orignal one.
Then, using concentration bounds one can show that it 
approximates the original one with high probability.
The full description of this procedure is presented in Algorithm~\ref{alg:model-free sampling routine}.
For a more detailed discussion about the importance of sub-sampling
the dataset, the interested reader is referred to~\cite{wang2020reinforcement, kong2021online}.

\begin{algorithm}[ht]
\begin{algorithmic}[1]
\REQUIRE Failure probability $\delta \in (0,1)$, number of episodes $K$, and setting of operation
\STATE $\Tilde{k} \leftarrow 1$
\STATE $\hat{Z}_h^1 \leftarrow \emptyset, \forall h \in [H]$
    

    \FOR{$k \in [K]$}
        \FOR{$h = H, H-1, \ldots, 1$}
            \IF{$k \geq 2$} 
                \STATE $\hat{\ZZ}_h^k \leftarrow \textbf{Online-Sample}(\FF, \hat{\ZZ}_h^{k-1}, z_h^{k-1}, \delta) $
            \ENDIF
        \ENDFOR
        
        \IF{$k= 1 \text{ or } \exists h \in [H]: \hat{Z}_h^k \neq \hat{Z}_h^{\Tilde{k}}$}
            \STATE $\Tilde{k} \leftarrow k$
            \STATE $Q_{H+1}^{k}(\cdot,\cdot) \leftarrow 0,
                    V_{H+1}^{k}(\cdot) \leftarrow 0$
            \FOR{$h=H, H-1, \ldots, 1$}
                \STATE $\mathcal{T}_h^k \leftarrow $ history of execution
                \STATE $Q_h^k(\cdot, \cdot) \leftarrow \textbf{Q-Estimator}(\mathcal{T}_h^k, \ZZ_h^k)$
                \STATE $ V_h^k(\cdot) = \max_{a \in \AA} Q_h^k(\cdot, a)$
                \STATE $\pi_h^k(\cdot) \leftarrow \arg\max_{a \in \AA}Q_h^k(\cdot, a)$
            \ENDFOR
        \ENDIF
        
        \STATE Receive initial state $s_1$ of episode $k$
        \FOR{$h \in [H]$}
            \STATE Take action $a_h^k \leftarrow \pi_h^{\Tilde{k}}(s_h^k)$
        \ENDFOR
    \ENDFOR
\end{algorithmic}
\caption{{\sf Low Switching Cost Value Iteration (with parameters $\delta, K$)}}
\label{alg:main algorithm}
\end{algorithm}

\begin{algorithm}[ht]
\begin{algorithmic}[1]
\REQUIRE Current sub-sampled dataset $\hat{\ZZ}$, history of execution $\mathcal{T}$ 
\STATE $\cD_h^k \leftarrow \{(s_h^{\tau}, a_h^{\tau},
                r_h^{\tau} + V_{h+1}^{k}(s_{h+1}^{\tau}))
                \}_{\tau \in [k-1]}$
\STATE $\hat{f} \leftarrow \arg\min_{f\in \FF} ||f||^2_{\cD}$
\STATE $\hat{F}_h^k \leftarrow \{f_1, f_2 \in \FF: \min\{\|f_1 - f_2\|^2_{\hat{\ZZ}_h^k}, T(H+1)^2  \leq \beta\}$
\STATE $b_h^k(\cdot,\cdot) \leftarrow \sup_{f_1, f_2 \in \hat{\FF}^k_h} |f_1(\cdot,\cdot) - f_2(\cdot, \cdot)|$
\STATE \textbf{Return} $\min\{f_h^k(\cdot,\cdot) + b_h^k(\cdot, \cdot), H\}$
         
\end{algorithmic}
\caption{{\sf Q-function Model-Free Estimator}}
\label{alg:model-free q estimator}
\end{algorithm}

\begin{algorithm}[ht]
\begin{algorithmic}[1]
\REQUIRE Function class $\FF$, current sub-sampled dataset $\hat{\ZZ}$, new element $z = (s,a)$, failure probability $\delta \in (0,1)$ 
\STATE Let $p_z$ be the smallest number such that $1/p_z$ is an integer and $p_z$ is greater than
\begin{align*}
      \min\{1, C \textnormal{sensitivity}_{\hat{\ZZ}, \FF}\newline
         \log(T\NN(\FF, 
        \sqrt{\delta/(64T^3)})/\delta)\}
\end{align*}
\STATE Let $\hat{z} \in \CC{(\S\times \AA, 1/16\sqrt{64T^3/\delta})}$ such that
            $$\sup_{f\in\FF} |f(z) - f(\hat{z})| \leq 1/16\sqrt{64T^3/\delta}$$
\STATE Add $1/p_z$ copies of $\hat{z}$ into $\hat{\ZZ}$ with probability $p_z$

\STATE \textbf{Return} $\hat{\ZZ}$
         
\end{algorithmic}
\caption{{\sf Model-Free Sampling Routine}}
\label{alg:model-free sampling routine}
\end{algorithm}

We are now ready to state our main result in this setting.

\begin{theorem}\label{thm:model-free main result}
There exists an absolute constant $C > 0,$ and a proper parameter $\beta$ for Algorithm~\ref{alg:main algorithm} such that with probability of at least $1- \lceil \log T \rceil e^{-\tau} -\delta$ the regret of the algorithm is bounded by
\begin{gather*}
    \reg(K) \leq
    \frac{C  d_{\FF}  H^5  \log^4T}{\gap_{\min}} + \frac{16 H^2 \tau}{3} + 2,
\end{gather*}
for any $\delta, \tau > 0$, where $d_{\FF} = \dim^2_E(\FF, 1/T)\cdot \log(\NN(\FF,\delta/T^2)/\delta) \cdot  \log(\CC(\S \times \AA, \delta/T^2)/\delta)$ is a parameter that captures
the complexity of the function class. The value of the parameter $\beta$ is 
\begin{gather*}
 \beta = C d'_{\FF} H^2 \log^4T,
\end{gather*}
where $d'_{\FF} = \log(\NN(\FF, \delta/T^3)/\delta)\dim_E(\FF, 1/T)  \log ( \NN(\S \times \AA, \delta/ T^3)/\delta))$.
Moreover, the number of switching policies is bounded by
\begin{align*}
  O\left(H\log(T\NN(\FF,\sqrt{\delta}/T^2)/\delta) \dim_E(\FF,1/T) \log^2T\right).  
\end{align*}

\end{theorem}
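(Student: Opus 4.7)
The proof has three nearly orthogonal components, which I would address in sequence: (i) optimism of the Q-estimates, (ii) a gap-peeling reduction that converts the usual $\sqrt{T}$ bonus-sum bound into a $\log T$ one, and (iii) a sensitivity-based counting of policy switches.

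\textbf{Step 1: Optimism.} By backward induction on $h$, I would show that with high probability $Q_h^k(s,a) \geq Q_h^*(s,a)$ for every $(s,a,k)$. The base case is trivial, and in the inductive step Assumption~\ref{as:linear-mdp} places the Bellman target $r_h + \langle P_h(\cdot|\cdot,\cdot), V_{h+1}^k\rangle$ in $\FF$. Two concentration arguments are needed: (a) a standard self-normalized/cover bound gives $\|\hat f - (r_h + \langle P_h, V_{h+1}^k\rangle)\|_{\cD_h^k}^2 = \OO(\beta)$; and (b) a Freedman-type martingale argument over an $\ee$-cover of $\FF \times \FF$ and of $\S \times \AA$ shows that $\|f_1-f_2\|_{\hat\ZZ_h^k}^2$ approximates $\|f_1-f_2\|_{\ZZ_h^k}^2$ up to a constant factor uniformly. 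The ``multiple copies'' trick and the sensitivity-proportional probabilities in Algorithm~\ref{alg:model-free sampling routine} are designed exactly to control the martingale variance in (b). Together (a) and (b) imply that the bonus $b_h^k$ dominates the pointwise error of $\hat f$ on the full dataset, yielding $Q_h^*(s,a) \leq Q_h^k(s,a) \leq r_h(s,a) + \langle P_h(\cdot|s,a), V_{h+1}^k\rangle + 2 b_h^k(s,a)$.

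\textbf{Step 2: Regret decomposition and peeling.} Given optimism, a standard telescoping along the rollout yields
\begin{align*}
\reg(K) \;\leq\; \sum_{k=1}^K \bigl[V_1^k(s_1)-V_1^{\pi^k}(s_1)\bigr] \;\leq\; 2\sum_{k=1}^K \sum_{h=1}^H b_h^k(s_h^k,a_h^k) + M_K,
\end{align*}
where $M_K$ is a bounded-difference martingale whose Azuma--Hoeffding bound produces the additive $16H^2\tau/3$ term. To obtain the logarithmic rate I would apply the peeling technique of~\citet{he2021logarithmic, yang2021q}: partition the pairs $(k,h)$ into $\OO(\log T)$ dyadic levels $\LL_m = \{(k,h): b_h^k(s_h^k,a_h^k) \in [2^{m-1}\gap_{\min}, 2^m\gap_{\min})\}$. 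Optimism together with the definition of $\gap_{\min}$ forces $b_h^k(s_h^k,a_h^k) = \Omega(\gap_{\min})$ whenever $a_h^k$ is suboptimal, while the eluder-dimension pigeonhole (Lemma~41 of Russo--Van Roy, applied to the sub-sampled dataset via Step~1(b)) bounds $|\LL_m|$ by $\OO(\beta \dim_E(\FF,1/T) / (2^m\gap_{\min})^2)$. Summing $2^m\gap_{\min} \cdot |\LL_m|$ over the $\OO(\log T)$ levels gives $\sum_{k,h} b_h^k = \OO(\beta H \dim_E(\FF,1/T) \log T / \gap_{\min})$, and plugging in $\beta = \OO(d'_\FF H^2 \log^4 T)$ produces the advertised $H^5 \log^4 T / \gap_{\min}$ scaling.

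\textbf{Step 3: Switching cost.} The policy updates only at episodes where some $\hat\ZZ_h^k$ strictly grew, and each growth event at step $h$ is a Bernoulli with success probability $p_{z_h^{k-1}} = \OO(\textnormal{sensitivity}_{\hat\ZZ_h^{k-1},\FF}(z_h^{k-1}) \cdot \log(T\NN(\FF,\sqrt{\delta}/T^2)/\delta))$. The eluder potential inequality gives $\sum_k \textnormal{sensitivity}_{\hat\ZZ_h^{k-1},\FF}(z_h^{k-1}) = \OO(\dim_E(\FF,1/T) \log T)$, so the expected total number of switches at step $h$ is $\OO(\dim_E(\FF,1/T) \log(T\NN/\delta) \log T)$; a Chernoff bound upgrades this to high probability, and multiplying by $H$ yields the stated count.

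\textbf{Main obstacle.} The technically delicate point is (b) in Step~1: the sub-sampling rule is adaptive to the past, so standard i.i.d.\ concentration does not apply, and the random data-dependent class $\hat\FF_h^k$ must be handled simultaneously. I would follow the martingale-plus-covering strategy of~\citet{kong2021online} essentially verbatim --- use the cover of $\S\times\AA$ from Assumption~\ref{as:bounded-covering-number} to reduce to finitely many random variables, apply Freedman's inequality per net point, and union-bound over the (log-)cover of $\FF$. Once this lemma is available, both the peeling of Step~2 (where optimism, eluder pigeonhole, and dyadic accounting must be aligned carefully) and the sensitivity-potential count of Step~3 become largely bookkeeping.
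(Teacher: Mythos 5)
Your Steps 1 and 3 (optimism via the sub-sampling concentration of Kong et al., and the sensitivity-potential count of switches) match the paper's route, which simply imports those statements as black-box lemmas. The gap is in Step 2, which is where the actual logarithmic-regret argument lives.

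Your peeling is performed on the \emph{bonus values} $b_h^k(s_h^k,a_h^k)$, justified by the claim that optimism and the definition of $\gap_{\min}$ force $b_h^k(s_h^k,a_h^k)=\Omega(\gap_{\min})$ whenever $a_h^k$ is suboptimal. This claim is false. Optimism only gives $Q_h^k(s,a)-Q_h^*(s,a)\leq \langle P_h(\cdot|s,a),V_{h+1}^k-V_{h+1}^*\rangle + 2b_h^k(s,a)$, so a suboptimal choice at step $h$ can be driven entirely by overestimation of the \emph{future} value $V_{h+1}^k$ while $b_h^k$ at the current pair is arbitrarily small (e.g.\ a well-explored step $h$ feeding into poorly explored successor states). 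Moreover, the decomposition $\reg(K)\leq 2\sum_{k,h}b_h^k + M_K$ that you start from cannot be combined with a peeling that discards levels below $\gap_{\min}$: bonuses at optimally-acted pairs can be individually tiny but $\Theta(T)$ in number, summing to $\Theta(\sqrt{T})$. To discard them you must first pass to the gap decomposition $\E[\reg(K)]=\E[\sum_{k,h}\gap_h(s_h^k,a_h^k)]$ (the paper's Lemma~\ref{lem:model-free-regret-decomposition}), in which zero-gap terms vanish identically.

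The paper's (correct) argument peels instead on the step-$h$ suboptimality: for each level $n$ it sets $K'=\{k: V_h^*(s_h^k)-Q_h^{\pi_k}(s_h^k,a_h^k)\geq 2^n\gap_{\min}\}$, lower-bounds $\sum_{k\in K'}\bigl(Q_h^{k}(s_h^{k},a_h^{k})-Q_h^{\pi_{k}}(s_h^{k},a_h^{k})\bigr)$ by $2^n\gap_{\min}|K'|$ via optimism, and upper-bounds it by telescoping over $h'=h,\dots,H$ into a martingale term plus $\sum_{k\in K'}\sum_{h'=h}^H b_{h'}^{k}$. The crucial ingredient you are missing is Lemma~\ref{lem:bound of bonus functions over all rounds}: a version of the eluder pigeonhole that bounds the bonus sum over an \emph{arbitrary} subset $K'\subseteq[K]$ by $O\bigl(H^2\dim_E(\FF,1/T)+H\sqrt{\dim_E(\FF,1/T)|K'|\beta}\bigr)$, i.e.\ with $\sqrt{|K'|}$ rather than $\sqrt{K}$. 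Comparing the two bounds yields an inequality of the form $2^n\gap_{\min}|K'|\lesssim \sqrt{|K'|}\cdot(\cdots)$, which is solved for $|K'|\lesssim(\cdots)/(4^n\gap_{\min}^2)$; summing $2^n\gap_{\min}|K'|$ over the $O(\log(H/\gap_{\min}))$ levels then gives the stated bound. Your dyadic accounting and final arithmetic would go through once you substitute this structure for the per-step bonus claim.
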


Notice that the regret in the previous bound depends on $\log T$ and  $1/\gap_{\min}$. We show
that this dependence is necessary.

\begin{remark}
\label{rem:model-free lower bound on regret}
    There exists an MDP that satisfies Assumption~\ref{as:linear-mdp} and
    Assumption~\ref{as:bounded-covering-number} such that the expected regret 
    of every algorithm is lower bounded by
    $$\E[\reg(K)] \geq \Omega\left(\poly(\log(T)) \cdot \poly(H)\cdot \frac{1}{\gap_{\min}} \right).$$
\end{remark}

It is known that the setting we are studying captures the tabular MDP setting (see e.g. \citep{wang2020reinforcement}).
Hence, we can see that the result by~\cite{ok2018exploration, simchowitz2019non} we mentioned
proves our claim. 

\subsection{Model-Based Setting}
\label{sec:overview-model-based}

In this regime, we assume that the MDP satisfies 
Assumption~\ref{as:model-based-assumption}. We also
assume that the reward function is known to the learner similar
to~\cite{ayoub2020model}. If the reward is unknown, we just
estimate it and construct a confidence region.

Before we discuss the details of our approach, we need to describe an important set of functions that show up in our algorithm
and in the regret guarantee.
Let $\VV$ be the set of all measurable functions that are bounded by $H$. 
Let $\PP_h$ be the set of potential models in step $H$.
We also let $f : \S \times \AA \times \VV \rightarrow \R$ and define the following 
set:
\begin{align*}
  \FF_h &  = \biggl\{f: \exists \Tilde{P}_h \in \PP_h \text{ so that } f(s, a, V)  = \int_\S \Tilde{P}_h(s'| s,a) V(s') ds', \forall (s, a, V) \in \S \times \AA \times \VV \biggr\}.
\end{align*}
The bounds we state scale with the complexity of $\FF_h$.
The Q-function estimator and the sampling routine 
we use that are specific to this setting are presented in Algorithm~\ref{alg:model-based q estimator} and Algorithm~\ref{alg:model-based sampling routine}, respectively. 
The dataset includes elements of the form $z_h^k = (s_h^k,a_h^k,V^k_{h+1}(\cdot))$.
The Q-function routine works in the following way.
We first estimate a model $\hat{P}_h \in \PP_h$ using a least-squares 
estimator that is based on all the previous interactions with the MDP.
 Using a concentration argument for this estimator of the model,
similar to~\cite{russo2013eluder, ayoub2020model},
we can show that for an appropriate choice of $\beta$, the estimated model lies
in a data-dependent ball centered at $\hat{P}_h$, with high probability (see Lemma~\ref{lem:helper lemma from prior work, concentration of LSE} in the Appendix). Thus, 
we can set the bonus function to be the diameter of this ball in order
to ensure that $\hat{Q}_h$ is an optimistic estimate of $Q_h^*$.
In addition, the choice of $\beta$ ensures
that the bonus we add is not very large. Notice also
that since the concentration argument in this setting
differs with that in the model-free setting, we do not
need to round the elements that we are adding to the 
sub-sampled dataset.

In this setting, the main reason
we sub-sample the dataset is to achieve logarithmic adaptivity.
Moreover, not using the entire dataset to compute the bonus function
improves the computational complexity of our algorithm.
To bound the adaptivity complexity, we use a similar approach as
in~\cite{kong2021online}.

\begin{algorithm}[ht]
\begin{algorithmic}[1]
\REQUIRE Function class $\FF$, current sub-sampled dataset $\hat{\ZZ}$, current regression dataset $\cD$ 
 \STATE $\hat{P}_h^k \leftarrow \arg\min_{P \in \PP_h} \newline
                    \hspace*{0.1em}\sum_{k'=1}^{k-1} \left(\langle P(\cdot|s_h^{k'}, a_h^{k'}), V_{h+1}^{k'}\rangle -  V_{h+1}^{k'}(s_{h+1}^{k'}) \right)^2$
                
\STATE $\FF_h^k = \{f_1, f_2:  \min\{||f_1 - f_2||_{\hat{\ZZ}_h^k}, T(H+1)^2 \} \leq \beta\}$
\STATE 
$b_h^k(\cdot,\cdot) \leftarrow \sup_{f_1, f_2 \in \FF^k_h}
|f_1(\cdot,\cdot, V_{h+1}) - f_2(\cdot, \cdot, V_{h+1})|$

\STATE \textbf{Return} $\min\{r_h(\cdot, \cdot) +  \langle \hat{P}^k_h(\cdot|\cdot, \cdot), V^k_{h+1}\rangle + b_h^k(\cdot,\cdot), H\}$
         
\end{algorithmic}
\caption{{\sf Q-function Model-Based Estimator}}
\label{alg:model-based q estimator}
\end{algorithm}

\begin{algorithm}[ht]
\begin{algorithmic}[1]
\REQUIRE Function class $\FF$, current sub-sampled dataset $\hat{\ZZ}$, new element $z = (s,a,V)$, failure probability $\delta \in (0,1)$ 
\STATE Let $p_z$ be the smallest number such that $1/p_z$ is an integer and 
    $p_z$ is greater than
        $$\min\{1, C\cdot \textnormal{sensitivity}_{\hat{\ZZ}, \FF} \cdot \log(T\NN(\FF, 
        \sqrt{\delta/(64T^3)})/\delta)\}$$
\STATE Add $1/p_z$ copies of $z$ into $\hat{\ZZ}$ with probability $p_z$

\STATE \textbf{Return} $\hat{\ZZ}$
         
\end{algorithmic}
\caption{{\sf Model-Based Sampling Routine}}
\label{alg:model-based sampling routine}
\end{algorithm}

We are now ready to state our main result in this setting.
\begin{theorem}\label{thm:model-based main result}
There exists an absolute constant $C$ and a proper value of the parameter $\beta$ for Algorithm~\ref{alg:main algorithm} such that with probability at least $1- \lceil \log T \rceil e^{-\tau} - \delta$ the regret of the algorithm is bounded by
\begin{align*}
    \textnormal{Regret}(K) \leq \frac{C d_{\FF} H^5 \log T  }{ \gap_{\min}} + \frac{16 H^2 \tau}{3} + 2,
\end{align*}
where $d_{\FF} = \log(\NN(\FF,1/T)/\delta) \dim^2_E(\FF, 1/T)$.
The value of the parameter $\beta$ is 
\begin{align*}
   \beta & = 4H^2\log(2\NN(\FF,1/T)/\delta)  + 4/H\left(C + \sqrt{H^2/4\log(T/\delta)}\right), 
\end{align*}
where $\NN(\FF,1/T) = \max_{h \in [H]} \NN(\FF_h,1/T), 
\dim_E(\FF, 1/T) = \max_{h \in [H]} \dim_E(\FF_h,1/T)$.

The number of switching policies is bounded by
$$O\left(H\log(T\NN(F,\sqrt{\delta/(64T^3)})/\delta)\dim_E(\FF,1/T)\log^2T\right).$$

\end{theorem}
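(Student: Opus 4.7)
The plan is to combine four ingredients: (i) a least-squares concentration inequality for the transition-model estimator, (ii) a sub-sampling fidelity lemma showing that $\hat{\ZZ}_h^k$ is a faithful surrogate for the full dataset $\ZZ_h^k$, (iii) a standard optimism-based regret decomposition, and (iv) a peeling argument that converts the usual $\sqrt{T}$-type bonus aggregation into a $\log T$ bound by exploiting the sub-optimality gap. First I would invoke the least-squares concentration lemma referenced in the appendix (in the spirit of Russo--Van Roy and Ayoub et al.) together with a union bound over an $(1/T)$-cover of $\FF_h$ to show that, for the stated value of $\beta$, the true kernel $P_h$ lies in the confidence ball $\FF_h^k$ simultaneously for all $(h,k)$ with probability at least $1-\delta/2$. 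Next I would establish the Bernstein-with-covering argument of Kong et al.\ adapted to $\FF_h$: the sensitivity-based sub-sampling guarantees that $\|f_1-f_2\|_{\hat{\ZZ}_h^k}^2$ and $\|f_1-f_2\|_{\ZZ_h^k}^2$ agree up to a constant factor uniformly over $f_1,f_2\in\FF_h$.

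These two facts feed a backward induction on $h$ that proves optimism and a local Bellman-error bound: $Q_h^k(s,a)\geq Q_h^*(s,a)$ pointwise, and
\begin{align*}
Q_h^k(s,a) \leq r_h(s,a) + \langle P_h(\cdot|s,a), V_{h+1}^k\rangle + 2 b_h^k(s,a).
\end{align*}
Unrolling the recursion yields
\begin{align*}
V_1^*(s_1) - V_1^{\pi^k}(s_1) \leq \sum_{h=1}^{H} 2b_h^k(s_h^k,a_h^k) + M_k,
\end{align*}
where $\{M_k\}$ is a martingale difference sequence bounded by $2H$; a single application of Azuma--Hoeffding accounts for the additive $\tfrac{16H^2\tau}{3}+2$ term in the statement.

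The heart of the proof, and the main obstacle, is the peeling step. Following the local-Rademacher-style argument used in He--Zhou--Gu and yang2021q, I would partition the episodes into levels $\ell = 0, 1, \ldots, \lceil \log T \rceil$, where level $\ell$ collects the episodes with $\sum_h b_h^k(s_h^k,a_h^k) \in [2^{\ell}\gap_{\min}, 2^{\ell+1}\gap_{\min})$. Definition~\ref{def:minimum gap} forces each such episode to contribute at least $\gap_{\min}$ to the regret, while the eluder-dimension pigeonhole bound, applied to the bonus sequence via $\sum_k (b_h^k(s_h^k,a_h^k))^2 \lesssim \beta \cdot \dim_E(\FF,1/T) \log T$, limits the number of episodes that can appear at each level. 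Summing over the $O(\log T)$ levels and clipping as in the peeling lemma produces a bound of the form $\tfrac{C\,\beta\,H^2 \dim_E(\FF,1/T) \log T}{\gap_{\min}}$; plugging in $\beta = \widetilde{O}(H^2 \log\NN(\FF,1/T))$ yields the advertised $\tfrac{C d_\FF H^5 \log T}{\gap_{\min}}$, with one factor of $\dim_E(\FF,1/T)$ coming from the eluder sum on bonuses and a second factor from the Cauchy--Schwarz/peeling step (hence the $\dim_E^2$ in $d_\FF$). The delicate point is that the bonus is measured on $\hat{\ZZ}_h^k$ while the eluder bound is most naturally stated on the full data stream $\ZZ_h^k$; the fidelity lemma from step (ii) is exactly what bridges this gap.

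Finally, for the switching cost I would observe that the policy changes only when some $\hat{\ZZ}_h^k$ receives a fresh insertion, and that in Algorithm~\ref{alg:model-based sampling routine} a point $z_h^k$ is inserted with probability at most $C\,\textnormal{sensitivity}_{\hat{\ZZ},\FF}(z_h^k)\cdot \log(T\NN(\FF,\sqrt{\delta/(64T^3)})/\delta)$. The eluder-dimension sum-of-sensitivities bound (again from kong2021online) then gives $\sum_{k,h} \textnormal{sensitivity}_{\hat{\ZZ}_h^k,\FF}(z_h^k) \lesssim H \cdot \dim_E(\FF,1/T)\,\log T$, and a Chernoff bound converts the expectation into the stated high-probability switching-cost bound of $O(H\log(T\NN/\delta)\dim_E(\FF,1/T)\log^2 T)$. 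Combining the three high-probability events via a union bound, with failure budgets $\lceil\log T\rceil e^{-\tau}$ for the Azuma step and $\delta$ for the concentration and sub-sampling steps, yields the overall probability $1-\lceil\log T\rceil e^{-\tau}-\delta$ claimed in the statement.
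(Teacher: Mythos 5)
Your overall architecture (least-squares concentration for the model estimator, sub-sampling fidelity, optimism plus a one-step Bellman-error bound, then a peeling argument driven by $\gap_{\min}$) matches the paper's, and your steps (i)--(iii) and the switching-cost argument correspond essentially to Lemma~\ref{lem:helper lemma from prior work, concentration of LSE}, Lemma~\ref{lem: mixture mdp: sub-sampled dataset is good approximation to original one}, Lemma~\ref{lem:mixture mdp bound between the difference of estimate of Q and Q induced by policy}, and Lemma~\ref{lem:model-based-bound on summation of sensitivity}. The divergence --- and the genuine gap --- is in the peeling step. You partition \emph{episodes} by the level of the bonus sum $\sum_h b_h^k(s_h^k,a_h^k)$ and assert that Definition~\ref{def:minimum gap} ``forces each such episode to contribute at least $\gap_{\min}$ to the regret.'' That assertion is not justified: the per-episode regret $V_1^*(s_1)-V_1^{\pi^k}(s_1)$ equals $\E\bigl[\sum_h \gap_h(s_h,a_h)\,\big|\,\pi^k\bigr]$, an expectation that can lie strictly between $0$ and $\gap_{\min}$ (the policy may take a suboptimal action only with small probability along the trajectory), and your upper bound $\sum_h 2b_h^k + M_k$ carries a martingale fluctuation $M_k$ that is of order $H^2$ for an individual episode. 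So neither ``small bonus sum $\Rightarrow$ zero contribution'' nor ``level-$\ell$ episode $\Rightarrow$ contribution at least $\gap_{\min}$'' follows from what you have written, and the subsequent counting of episodes per level has nothing to attach to.

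The paper avoids this by peeling on a different quantity. It uses the exact decomposition $\E[\reg(K)] = \E[\sum_{k,h}\gap_h(s_h^k,a_h^k)]$ from Lemma~\ref{lem:model-free-regret-decomposition} and buckets the realized per-step gaps $\gap_h(s_h^k,a_h^k)$, each of which is \emph{deterministically} either $0$ or in $[\gap_{\min},H]$. For each level $n$ and step $h$ it sets $K'=\{k: V_h^*(s_h^k)-Q_h^{\pi_k}(s_h^k,a_h^k)\ge 2^n\gap_{\min}\}$ and sandwiches $\sum_{k\in K'}(Q_h^k-Q_h^{\pi_k})$ between $2^n\gap_{\min}|K'|$ (via optimism) and a martingale term plus $\sum_{k\in K'}\sum_{h'}b_{h'}^{k}$, then solves for $|K'|$. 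This forces the bonus aggregation to be carried out over the \emph{arbitrary subset} $K'$ rather than over all of $[K]$ --- which is exactly the content of Lemma~\ref{lem:model-based-bound-bonus}, the subset-generalized eluder summation bound that the paper highlights as its technical extension of Kong et al. Your squared-bonus bound $\sum_k (b_h^k)^2\lesssim\beta\,\dim_E(\FF,1/T)\log T$ over all episodes does not substitute for that subset bound, and in your own argument it only becomes useful once the unjustified link between bonus levels and regret contributions is repaired. To complete your proof you would either need to adopt the paper's per-step gap peeling together with the subset version of the bonus bound, or supply a separate argument (as in the linear-MDP analyses you cite) showing that a sufficiently small bonus at a given state forces the greedy action to coincide with the optimal one.
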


As in Section~\ref{sec:overview-model-free}, notice that the regret in the 
previous bound depends on $\log T$ and $1/\gap_{\min}$. We show
that this dependence is indeed necessary.

\begin{remark}
\label{rem:model-based lower bound on regret}
    There exists an MDP that satisfies Assumption~\ref{as:model-based-assumption}  
    such that the expected regret 
    of every algorithm is lower bounded by
    $$\E[\reg(K)] \geq \Omega\left(\poly(\log(T))  \poly(H) \frac{1}{\gap_{\min}} \right).$$
\end{remark}

It is known that the setting we are studying captures the tabular MDP setting (see 
e.g.~\citep{ayoub2020model}), so the proof follows from 
the result by~\cite{ok2018exploration, simchowitz2019non}.

We remark that if we want to derive instance-independent regret guarantees
and follow the regret decomposition in~\cite{ayoub2020model},
we can get a $\sqrt{T}$-regret with logarithmic adaptivity.




  
  
\section{Proof Sketch of Main Results}
\label{sec:proof sketch of main results}

In this section, we sketch the proof of our main results.
Due to space limitation, we only discuss the model-free setting.
The full proofs in the model-free, model-based setting can be found
in Appendix~\ref{sec:linear-mdp-proof}, 
Appendix~\ref{sec:model-based proof}, respectively.

The first step in our analysis is the regret decomposition of the algorithm.
Lemma~\ref{lem:model-free-regret-decomposition} shows that
$\E\left[\reg(K)\right] = \E\left[\sum_{k=1}^K \sum_{h=1}^H \gap_h(s_h^k,a_h^k)\right]$. Thus, we see that to bound
the regret it is enough to bound $\sum_{k=1}^K\gap_h(s_h^k,a_h^k)$ for every $h \in [H]$.
Towards this end, notice that $\gap_h(s_h^k,a_h^k) \in [\gap_{\min}, H]$. We apply the ``peeling technique'' that
has also been used in local
Rademacher complexities \citep{bartlett2005local} and in \cite{he2021logarithmic, yang2021q}. The idea is to split the interval $[0,H]$
into $\log(H/\gap_{\min})$ intervals, where the $i$-th interval is
$[2^{i-1}\gap_{\min}, 2^i\gap_{\min}]$. Hence, for every $\gap_h(s_h, a_h)$
that falls in the $i$-th interval its contribution 
to the regret is at most $2^{i}\gap_{\min}$.
 Thus, to bound the regret it suffices
to bound the number of suboptimalities that fall into every interval.
Notice that for some $\gap_h(s_h^k, a_h^k)$ in this interval we have that 
\begin{align*}
   V_h^*(s_h^k) - Q_h^{\pi_k}(s_h^k, a_h^k) \geq \gap_h(s_h^k, a_h^k) \geq 2^{i-1}\gap_{\min}, 
\end{align*}
so it suffices to bound the number of sub-optimalities $V_h^*(s_h^k) - Q_h^{\pi_k}(s_h^k, a_h^k)$
that fall into the $i$-th interval.
Both for the model-free and the model-based setting, we
can derive such a bound.
Finally, notice that once we have bounded the number of suboptimilaties
in every interval, it is not difficult to bound the total regret. Let
$C_i = [2^{i-1}\gap_{\min}, 2^i\gap_{\min})$ and $N = \log(H/\gap_{\min})$. 
Then, we know that $\reg(K)$ can be upper bounded by
\begin{align*}
    & \sum_{k=1}^K \sum_{h=1}^H \gap_h(s_h^k,a_h^k) \\
    & \qquad =
     \sum_{i=1}^{N}\sum_{\gap_h(s_h^k,a_h^k) \in 
    C_i} \gap_h(s_h^k,a_h^k) \\
    & \qquad \leq
    \sum_{i=1}^{N}\sum_{k=1}^K 2^i \mathbbm{1}\left[\gap_h(s_h^k,a_h^k) \in C_i\right]\\
   & \qquad \leq\sum_{i=1}^{N}2^i\sum_{k=1}^K  \mathbbm{1}\left[V^*_h(s_h^k) - Q_h^{\pi_k}(s_h^k,a_h^k) \geq 2^{i-1}\gap_{\min}\right].
 \end{align*}
Hence, deriving the regret guarantee boils down to bounding
$\sum_{k=1}^K\mathbbm{1}\left[V^*_h(s_h^k) - Q_h^{\pi_k}(s_h^k,a_h^k) \geq 2^{i}\gap_{\min}\right]$. We provide such a bound in~Lemma~\ref{lem:indicator bound on every interval},
which depends polynomially on $\log T, 1/\gap_{\min}$,
and the complexity parameters of the function class.
The outline of the proof is the following. We fix some
episode $h \in [H]$ and let $K'$ be
the set of rounds where $V^*_h(s_h^k) - Q_h^{\pi_k}(s_h^k,a_h^k) \geq 2^{i}\gap_{\min}$.
To get a bound on $|K'|$, our approach is to lower bound and upper bound
the quantity $\sum_{i=1}^{|K'|}Q_h^{k_i}(s_h^{k_i}, a_h^{k_i}) - Q_h^{\pi_{k_i}}(s_h^{\pi_{k_i}}, a_h^{\pi_{k_i}})$ by $f_1(|K'|),
f_2(|K'|)$, respectively. Then, we use the fact that
$f_{1}(|K'|) \leq f_{2}(|K'|)$ to establish our bound.
For the lower bound, using the definition of $K'$ we get 
$f_1(|K'|) = 2^i\gap_{\min}|K'|$. For the upper bound,
we leverage the fact that $Q_h^{k_i}(s_h^{k_i}, a_h^{k_i}) \leq
\langle P_h(\cdot|s_h^{k_i}, a_h^{k_i}), V_{h+1}^{k_i} \rangle +
2b_h^{k_i}(s_h^{k_i}, a_h^{k_i})$ (cf.~Lemma~\ref{lem:single step
bellman error}) and obtain $f_2(|K'|) \leq
\sum_{i=1}^{|K'|}\sum_{h'=h}^H \varepsilon_{h'}^{k_i} +
\sum_{i=1}^{|K'|}\sum_{h'=h}^H
b_{h'}^{k_i}(s_{h'}^{k_i},a_{h'}^{k_i})$, where
$\varepsilon_h^{k_i}$ forms a bounded martingale difference sequence.
We bound each term on the RHS separately. 
For the first one, we use the Azuma-Hoeffding inequality which
can be found in Lemma~\ref{lem:azuma-hoeffding inequality}.
For the second term, we generalize the bound
on the summation of the bonus functions over all the episodes
from~\cite{kong2021online}, and show that a similar bound
holds for the summation of the bonus over \emph{any} set
of episodes $K'$ (see Lemma~\ref{lem:bound of bonus functions over all rounds}). 
Putting everything together, we get that 
$|K'| = O\left(1/(4^{i}\gap_{\min})\cdot H^4 \cdot \log^4 T \cdot \poly(d_{\FF})\right)$,
where $d_{\FF}$ is the complexity parameter of the class.

\section{Conclusion}
\label{sec:conclusion}
In this paper, we consider episodic RL with general
function approximation.  We prove
that there are algorithms with logarithmic adaptivity complexity
both in the model-free and model-based
settings that achieve logarithmic instance-dependent
regret guarantees.
An interesting open question
is to establish the optimal dependence of the regret
guarantees on the planning horizon $H$.

\bibliographystyle{agsm}
\bibliography{bib}

@article{kong2021online,
  title={Online Sub-Sampling for Reinforcement Learning with General Function Approximation},
  author={Kong, Dingwen and Salakhutdinov, Ruslan and Wang, Ruosong and Yang, Lin F},
  journal={arXiv preprint arXiv:2106.07203},
  year={2021}
}

@inproceedings{he2021logarithmic,
  title={Logarithmic regret for reinforcement learning with linear function approximation},
  author={He, Jiafan and Zhou, Dongruo and Gu, Quanquan},
  booktitle={International Conference on Machine Learning},
  pages={4171--4180},
  year={2021},
  organization={PMLR}
}

@book{cesa2006prediction,
  title={Prediction, learning, and games},
  author={Cesa-Bianchi, Nicolo and Lugosi, G{\'a}bor},
  year={2006},
  publisher={Cambridge university press}
}

@inproceedings{ayoub2020model,
  title={Model-based reinforcement learning with value-targeted regression},
  author={Ayoub, Alex and Jia, Zeyu and Szepesvari, Csaba and Wang, Mengdi and Yang, Lin},
  booktitle={International Conference on Machine Learning},
  pages={463--474},
  year={2020},
  organization={PMLR}
}

@inproceedings{russo2013eluder,
  title={Eluder Dimension and the Sample Complexity of Optimistic Exploration.},
  author={Russo, Daniel and Van Roy, Benjamin},
  booktitle={NIPS},
  pages={2256--2264},
  year={2013},
  organization={Citeseer}
}

@article{freedman1975tail,
  title={On tail probabilities for martingales},
  author={Freedman, David A},
  journal={the Annals of Probability},
  pages={100--118},
  year={1975},
  publisher={JSTOR}
}

@article{wang2020reinforcement,
  title={Reinforcement learning with general value function approximation: Provably efficient approach via bounded eluder dimension},
  author={Wang, Ruosong and Salakhutdinov, Ruslan and Yang, Lin F},
  journal={arXiv preprint arXiv:2005.10804},
  year={2020}
}

@inproceedings{yang2019sample,
  title={Sample-optimal parametric Q-learning using linearly additive features},
  author={Yang, Lin and Wang, Mengdi},
  booktitle={International Conference on Machine Learning},
  pages={6995--7004},
  year={2019},
  organization={PMLR}
}

@inproceedings{jin2020provably,
  title={Provably efficient reinforcement learning with linear function approximation},
  author={Jin, Chi and Yang, Zhuoran and Wang, Zhaoran and Jordan, Michael I},
  booktitle={Conference on Learning Theory},
  pages={2137--2143},
  year={2020},
  organization={PMLR}
}

@article{jin2021bellman,
  title={Bellman Eluder dimension: New rich classes of RL problems, and sample-efficient algorithms},
  author={Jin, Chi and Liu, Qinghua and Miryoosefi, Sobhan},
  journal={arXiv preprint arXiv:2102.00815},
  year={2021}
}

@article{jaksch2010near,
  title={Near-optimal Regret Bounds for Reinforcement Learning.},
  author={Jaksch, Thomas and Ortner, Ronald and Auer, Peter},
  journal={Journal of Machine Learning Research},
  volume={11},
  number={4},
  year={2010}
}

@article{bai2019provably,
  title={Provably efficient q-learning with low switching cost},
  author={Bai, Yu and Xie, Tengyang and Jiang, Nan and Wang, Yu-Xiang},
  journal={arXiv preprint arXiv:1905.12849},
  year={2019}
}

@article{simchowitz2019non,
  title={Non-asymptotic gap-dependent regret bounds for tabular MDPs},
  author={Simchowitz, Max and Jamieson, Kevin G},
  journal={Advances in Neural Information Processing Systems},
  volume={32},
  pages={1153--1162},
  year={2019}
}

@inproceedings{yang2021q,
  title={Q-learning with logarithmic regret},
  author={Yang, Kunhe and Yang, Lin and Du, Simon},
  booktitle={International Conference on Artificial Intelligence and Statistics},
  pages={1576--1584},
  year={2021},
  organization={PMLR}
}

@article{bartlett2005local,
  title={Local rademacher complexities},
  author={Bartlett, Peter L and Bousquet, Olivier and Mendelson, Shahar},
  journal={The Annals of Statistics},
  volume={33},
  number={4},
  pages={1497--1537},
  year={2005},
  publisher={Institute of Mathematical Statistics}
}

@inproceedings{yang2020reinforcement,
  title={Reinforcement learning in feature space: Matrix bandit, kernels, and regret bound},
  author={Yang, Lin and Wang, Mengdi},
  booktitle={International Conference on Machine Learning},
  pages={10746--10756},
  year={2020},
  organization={PMLR}
}

@article{gao2019batched,
  title={Batched multi-armed bandits problem},
  author={Gao, Zijun and Han, Yanjun and Ren, Zhimei and Zhou, Zhengqing},
  journal={arXiv preprint arXiv:1904.01763},
  year={2019}
}

@inproceedings{pires2016policy,
  title={Policy error bounds for model-based reinforcement learning with factored linear models},
  author={Pires, Bernardo {\'A}vila and Szepesv{\'a}ri, Csaba},
  booktitle={Conference on Learning Theory},
  pages={121--151},
  year={2016},
  organization={PMLR}
}

@article{mnih2015human,
  title={Human-level control through deep reinforcement learning},
  author={Mnih, Volodymyr and Kavukcuoglu, Koray and Silver, David and Rusu, Andrei A and Veness, Joel and Bellemare, Marc G and Graves, Alex and Riedmiller, Martin and Fidjeland, Andreas K and Ostrovski, Georg and others},
  journal={nature},
  volume={518},
  number={7540},
  pages={529--533},
  year={2015},
  publisher={Nature Publishing Group}
}

@article{silver2017mastering,
  title={Mastering the game of go without human knowledge},
  author={Silver, David and Schrittwieser, Julian and Simonyan, Karen and Antonoglou, Ioannis and Huang, Aja and Guez, Arthur and Hubert, Thomas and Baker, Lucas and Lai, Matthew and Bolton, Adrian and others},
  journal={nature},
  volume={550},
  number={7676},
  pages={354--359},
  year={2017},
  publisher={Nature Publishing Group}
}

@inproceedings{duan2016benchmarking,
  title={Benchmarking deep reinforcement learning for continuous control},
  author={Duan, Yan and Chen, Xi and Houthooft, Rein and Schulman, John and Abbeel, Pieter},
  booktitle={International conference on machine learning},
  pages={1329--1338},
  year={2016},
  organization={PMLR}
}

@article{silver2016mastering,
  title={Mastering the game of Go with deep neural networks and tree search},
  author={Silver, David and Huang, Aja and Maddison, Chris J and Guez, Arthur and Sifre, Laurent and Van Den Driessche, George and Schrittwieser, Julian and Antonoglou, Ioannis and Panneershelvam, Veda and Lanctot, Marc and others},
  journal={nature},
  volume={529},
  number={7587},
  pages={484--489},
  year={2016},
  publisher={Nature Publishing Group}
}

@article{vinyals2019grandmaster,
  title={Grandmaster level in StarCraft II using multi-agent reinforcement learning},
  author={Vinyals, Oriol and Babuschkin, Igor and Czarnecki, Wojciech M and Mathieu, Micha{\"e}l and Dudzik, Andrew and Chung, Junyoung and Choi, David H and Powell, Richard and Ewalds, Timo and Georgiev, Petko and others},
  journal={Nature},
  volume={575},
  number={7782},
  pages={350--354},
  year={2019},
  publisher={Nature Publishing Group}
}

@book{sutton2018reinforcement,
  title={Reinforcement learning: An introduction},
  author={Sutton, Richard S and Barto, Andrew G},
  year={2018},
  publisher={MIT press}
}

@inproceedings{osband2016generalization,
  title={Generalization and exploration via randomized value functions},
  author={Osband, Ian and Van Roy, Benjamin and Wen, Zheng},
  booktitle={International Conference on Machine Learning},
  pages={2377--2386},
  year={2016},
  organization={PMLR}
}

@inproceedings{cai2020provably,
  title={Provably efficient exploration in policy optimization},
  author={Cai, Qi and Yang, Zhuoran and Jin, Chi and Wang, Zhaoran},
  booktitle={International Conference on Machine Learning},
  pages={1283--1294},
  year={2020},
  organization={PMLR}
}

@inproceedings{zanette2020frequentist,
  title={Frequentist regret bounds for randomized least-squares value iteration},
  author={Zanette, Andrea and Brandfonbrener, David and Brunskill, Emma and Pirotta, Matteo and Lazaric, Alessandro},
  booktitle={International Conference on Artificial Intelligence and Statistics},
  pages={1954--1964},
  year={2020},
  organization={PMLR}
}

@book{lattimore2020bandit,
  title={Bandit algorithms},
  author={Lattimore, Tor and Szepesv{\'a}ri, Csaba},
  year={2020},
  publisher={Cambridge University Press}
}

@article{bubeck2012regret,
  title={Regret analysis of stochastic and nonstochastic multi-armed bandit problems},
  author={Bubeck, S{\'e}bastien and Cesa-Bianchi, Nicolo},
  journal={arXiv preprint arXiv:1204.5721},
  year={2012}
}

@article{slivkins2019introduction,
  title={Introduction to multi-armed bandits},
  author={Slivkins, Aleksandrs},
  journal={arXiv preprint arXiv:1904.07272},
  year={2019}
}

@article{ok2018exploration,
  title={Exploration in structured reinforcement learning},
  author={Ok, Jungseul and Proutiere, Alexandre and Tranos, Damianos},
  journal={arXiv preprint arXiv:1806.00775},
  year={2018}
}

@inproceedings{auer2007logarithmic,
  title={Logarithmic online regret bounds for undiscounted reinforcement learning},
  author={Auer, Peter and Ortner, Ronald},
  booktitle={Advances in neural information processing systems},
  pages={49--56},
  year={2007}
}

@inproceedings{tewari2007optimistic,
  title={Optimistic Linear Programming gives Logarithmic Regret for Irreducible MDPs.},
  author={Tewari, Ambuj and Bartlett, Peter L},
  booktitle={NIPS},
  pages={1505--1512},
  year={2007},
  organization={Citeseer}
}

@inproceedings{dong2020multinomial,
  title={Multinomial logit bandit with low switching cost},
  author={Dong, Kefan and Li, Yingkai and Zhang, Qin and Zhou, Yuan},
  booktitle={International Conference on Machine Learning},
  pages={2607--2615},
  year={2020},
  organization={PMLR}
}

@inproceedings{ruan2021linear,
  title={Linear bandits with limited adaptivity and learning distributional optimal design},
  author={Ruan, Yufei and Yang, Jiaqi and Zhou, Yuan},
  booktitle={Proceedings of the 53rd Annual ACM SIGACT Symposium on Theory of Computing},
  pages={74--87},
  year={2021}
}

@inproceedings{agarwal2017learning,
  title={Learning with limited rounds of adaptivity: Coin tossing, multi-armed bandits, and ranking from pairwise comparisons},
  author={Agarwal, Arpit and Agarwal, Shivani and Assadi, Sepehr and Khanna, Sanjeev},
  booktitle={Conference on Learning Theory},
  pages={39--75},
  year={2017},
  organization={PMLR}
}

@article{perchet2016batched,
  title={Batched bandit problems},
  author={Perchet, Vianney and Rigollet, Philippe and Chassang, Sylvain and Snowberg, Erik},
  journal={The Annals of Statistics},
  volume={44},
  number={2},
  pages={660--681},
  year={2016},
  publisher={Institute of Mathematical Statistics}
}

@article{abbasi2011improved,
  title={Improved algorithms for linear stochastic bandits},
  author={Abbasi-Yadkori, Yasin and P{\'a}l, D{\'a}vid and Szepesv{\'a}ri, Csaba},
  journal={Advances in neural information processing systems},
  volume={24},
  pages={2312--2320},
  year={2011}
}

@article{wang2021provably,
  title={Provably efficient reinforcement learning with linear function approximation under adaptivity constraints},
  author={Wang, Tianhao and Zhou, Dongruo and Gu, Quanquan},
  journal={arXiv preprint arXiv:2101.02195},
  year={2021}
}

@article{gao2021provably,
  title={A provably efficient algorithm for linear markov decision process with low switching cost},
  author={Gao, Minbo and Xie, Tianle and Du, Simon S and Yang, Lin F},
  journal={arXiv preprint arXiv:2101.00494},
  year={2021}
}

@article{zhang2020almost,
  title={Almost optimal model-free reinforcement learning via reference-advantage decomposition},
  author={Zhang, Zihan and Zhou, Yuan and Ji, Xiangyang},
  journal={arXiv preprint arXiv:2004.10019},
  year={2020}
}

@article{du2020agnostic,
  title={Agnostic q-learning with function approximation in deterministic systems: Near-optimal bounds on approximation error and sample complexity},
  author={Du, Simon S and Lee, Jason D and Mahajan, Gaurav and Wang, Ruosong},
  journal={Advances in Neural Information Processing Systems},
  volume={2020},
  year={2020}
}

@inproceedings{zanette2020learning,
  title={Learning near optimal policies with low inherent bellman error},
  author={Zanette, Andrea and Lazaric, Alessandro and Kochenderfer, Mykel and Brunskill, Emma},
  booktitle={International Conference on Machine Learning},
  pages={10978--10989},
  year={2020},
  organization={PMLR}
}

@article{agarwal2020flambe,
  title={Flambe: Structural complexity and representation learning of low rank mdps},
  author={Agarwal, Alekh and Kakade, Sham and Krishnamurthy, Akshay and Sun, Wen},
  journal={arXiv preprint arXiv:2006.10814},
  year={2020}
}

@inproceedings{jiang2017contextual,
  title={Contextual decision processes with low Bellman rank are PAC-learnable},
  author={Jiang, Nan and Krishnamurthy, Akshay and Agarwal, Alekh and Langford, John and Schapire, Robert E},
  booktitle={International Conference on Machine Learning},
  pages={1704--1713},
  year={2017},
  organization={PMLR}
}

@article{foster2020instance,
  title={Instance-dependent complexity of contextual bandits and reinforcement learning: A disagreement-based perspective},
  author={Foster, Dylan J and Rakhlin, Alexander and Simchi-Levi, David and Xu, Yunzong},
  journal={arXiv preprint arXiv:2010.03104},
  year={2020}
}

@inproceedings{modi2020sample,
  title={Sample complexity of reinforcement learning using linearly combined model ensembles},
  author={Modi, Aditya and Jiang, Nan and Tewari, Ambuj and Singh, Satinder},
  booktitle={International Conference on Artificial Intelligence and Statistics},
  pages={2010--2020},
  year={2020},
  organization={PMLR}
}

@book{gnedenko1989introduction,
  title={Introduction to queueing theory},
  author={Gnedenko, Boris Vladimirovich and Kovalenko, Igor Nikolaevich},
  year={1989},
  publisher={Birkhauser Boston Inc.}
}

@article{osband2014model,
  title={Model-based reinforcement learning and the eluder dimension},
  author={Osband, Ian and Van Roy, Benjamin},
  journal={arXiv preprint arXiv:1406.1853},
  year={2014}
}

@article{li2021eluder,
  title={Eluder dimension and generalized rank},
  author={Li, Gene and Kamath, Pritish and Foster, Dylan J and Srebro, Nathan},
  journal={arXiv preprint arXiv:2104.06970},
  year={2021}
}

@inproceedings{zhou2021provably,
  title={Provably efficient reinforcement learning for discounted mdps with feature mapping},
  author={Zhou, Dongruo and He, Jiafan and Gu, Quanquan},
  booktitle={International Conference on Machine Learning},
  pages={12793--12802},
  year={2021},
  organization={PMLR}
}

@article{kakade2020information,
  title={Information theoretic regret bounds for online nonlinear control},
  author={Kakade, Sham and Krishnamurthy, Akshay and Lowrey, Kendall and Ohnishi, Motoya and Sun, Wen},
  journal={arXiv preprint arXiv:2006.12466},
  year={2020}
}

@article{chen2020minimax,
  title={Minimax regret of switching-constrained online convex optimization: No phase transition},
  author={Chen, Lin and Yu, Qian and Lawrence, Hannah and Karbasi, Amin},
  journal={Advances in Neural Information Processing Systems},
  volume={33},
  pages={3477--3486},
  year={2020}
}

\newpage
\appendix
\onecolumn
\section{Proof of Theorem~\ref{thm:model-free main result}}
\label{sec:linear-mdp-proof}

In this section, our main goal is to prove
Theorem~\ref{thm:model-free main result}. Recall that
we assume we have access to a set
$\FF \subseteq\{f:\S\times \AA \rightarrow [0,H+1]\}$,
which we use to approximate the Q-function. For this
set, we work with Assumption~\ref{as:linear-mdp} and
Assumption~\ref{as:bounded-covering-number}.

The proofs of the supporting lemmas are postponed to Appendix~\ref{sec:supporting lemmas for model-free theorem}.
Before we are ready to prove our result, we need to 
discuss some results of prior works that are crucial
to our proof.

The regret decomposition in~\cite{he2021logarithmic}, gives us that
\begin{lemma}\citep{he2021logarithmic}
\label{lem:model-free-regret-decomposition}
For any MDP $M$ we have that
\begin{align*}
    \E\left[\reg(K) \right] = \E\left[ \sum_{k=1}^K \sum_{h=1}^H \gap_h(s_h^k, a_h^k)\right].
\end{align*}

Moreover, for any $\tau > 0$ it holds with probability at least $1-\lceil \log T \rceil e^{-\tau}$ that
\begin{align*}
    \reg(K) \leq 2\sum_{k=1}^K \sum_{h=1}^H \gap_h(s_h^k, a_h^k) + \frac{16H^2\tau}{3} + 2.
\end{align*}
\end{lemma}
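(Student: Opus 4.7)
The plan is to handle the two claims in succession: the equality in expectation is a direct application of the performance-difference (simulation) lemma, while the high-probability bound is obtained by a Freedman-type concentration applied to the martingale built from the discrepancy between the realized and the expected episodic sum of gaps, with a dyadic peeling over the unknown variance that produces the $\lceil \log T\rceil$ factor in the failure probability.

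For the first claim, I would start from the standard identity
\begin{align*}
V_1^*(s_1)-V_1^{\pi^k}(s_1) \;=\; \E_{\pi^k}\!\left[\sum_{h=1}^H \bigl(V_h^*(s_h)-Q_h^*(s_h,a_h)\bigr)\,\Big|\, s_1\right],
\end{align*}
which is obtained by telescoping $V_h^*(s_h)-V_h^{\pi^k}(s_h)$ across $h\in[H]$ and using $V_h^*(s)=Q_h^*(s,\pi_h^*(s))$ together with the Bellman consistency equation for $Q_h^{\pi^k}$. Since $a_h^k=\pi_h^k(s_h^k)$ and $\gap_h(s_h,a_h)=V_h^*(s_h)-Q_h^*(s_h,a_h)$, the inner bracket equals $\gap_h(s_h^k,a_h^k)$; summing over $k\in[K]$ and taking total expectation yields the identity in part one.

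For the high-probability bound, define
\begin{align*}
X_k=V_1^*(s_1)-V_1^{\pi^k}(s_1),\qquad Y_k=\sum_{h=1}^H\gap_h(s_h^k,a_h^k),
\end{align*}
and let $\mathcal F_{k-1}$ be the sigma-algebra generated by the data prior to episode $k$ together with $\pi^k$. By part one, $\E[Y_k\mid\mathcal F_{k-1}]=X_k$, so $\{X_k-Y_k\}_k$ is a martingale difference sequence. Since each $\gap_h\in[0,H]$, we have $Y_k\in[0,H^2]$, so the increments are bounded by $H^2$, and $\Var(Y_k\mid\mathcal F_{k-1})\le H^2 X_k$ because $Y_k^2\le H^2 Y_k$. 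Freedman's inequality then gives, with probability at least $1-e^{-\tau}$,
\begin{align*}
\sum_{k=1}^K (X_k - Y_k) \;\le\; \sqrt{2H^2\tau\sum_{k=1}^K X_k}+\tfrac{H^2\tau}{3}.
\end{align*}
Since the variance proxy still depends on the unknown $R:=\sum_k X_k$, I would resolve this by a standard dyadic peeling: either $R\le1$ (contributing the additive $+2$), or $R\in(2^{j-1},2^j]$ for some $j\in\{1,\ldots,\lceil\log T\rceil\}$; apply the displayed Freedman bound separately for each level and take a union bound, which costs exactly the $\lceil\log T\rceil e^{-\tau}$ in the statement. Using an AM-GM of the form $\sqrt{2H^2\tau R}\le R/2+H^2\tau$ to self-bound $R$ rearranges into $R\le 2\sum_k Y_k+\tfrac{16H^2\tau}{3}+2$ after tracking the constants from Freedman and from the base case.

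\textbf{The main obstacle} is the bookkeeping in the peeling step: one has to choose the dyadic grid and the AM-GM weights so that simultaneously the multiplicative constant of $\sum_k Y_k$ is exactly $2$ and the additive slack is exactly $\tfrac{16H^2\tau}{3}+2$. A slightly loose AM-GM (e.g.\ $\sqrt{ab}\le a+b/4$) yields a worse multiplicative constant such as $4/3$ or $4$; conversely, a careless union bound over levels inflates the $\log T$ dependence. Apart from this constant-tracking, both components are textbook: the first piece is the performance-difference lemma, the second is a canonical Freedman-plus-peeling argument, and neither requires structure specific to the function class $\FF$ or to the algorithm in Algorithm~\ref{alg:main algorithm}.
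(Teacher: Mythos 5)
Your proposal is correct and follows exactly the argument behind this lemma: the paper itself does not reprove it but imports it verbatim from \cite{he2021logarithmic}, where the expectation identity is the performance-difference telescoping you describe and the high-probability version is Freedman's inequality applied to the martingale $\sum_k (X_k - Y_k)$ with a dyadic peeling over the variance proxy $H^2\sum_k X_k$, which is precisely the source of the $\lceil \log T\rceil e^{-\tau}$ failure probability and the $16H^2\tau/3 + 2$ slack. The only caveat is the one you already flag, namely that recovering the exact constants $2$ and $16/3$ requires the particular AM--GM split used there, but this is bookkeeping rather than a gap.
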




The following lemma resembles Lemma~6.3~\citep{he2021logarithmic}. Its proof is 
postponed to Appendix~\ref{sec:supporting lemmas for model-free theorem}.
\begin{lemma}\label{lem:sum of suboptimalities over all K for some h}
If we pick $$\beta = CH^2 \log(T\NN(\FF, \delta/T^2)/\delta)\dim_E(\FF, 1/T) \log^2T \log \left( \CC(\S \times \AA, \delta/ T^2)T/\delta)\right),$$
for come constant C and for $h \in [H]$, then we have that with probability at least $1-2K\delta$

\begin{align*}
    \sum_{k=1}^K \left( V_h^*(s_h^k) - Q_h^*(s_h^k, a_h^k) \right) \leq \frac{C H^4 \dim^2_E(\FF, 1/T) \log^2T \log(T\NN(\FF,\delta/T^2)/\delta)\log(\CC(\S \times \AA, \delta/T^2)T/\delta)}{\gap_{\min}}.
\end{align*}

\end{lemma}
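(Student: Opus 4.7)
The plan is to prove Lemma~\ref{lem:sum of suboptimalities over all K for some h} via the peeling technique advertised in Section~\ref{sec:proof sketch of main results}. Since $Q_h^{\pi_k}(s,a) \le Q_h^*(s,a)$ pointwise, one has $\gap_h(s_h^k,a_h^k) \le V_h^*(s_h^k) - Q_h^{\pi_k}(s_h^k,a_h^k)$ for every $k$. Setting $N = \lceil \log_2(H/\gap_{\min}) \rceil$ and
\[
  K_i' := \bigl\{ k \in [K] : V_h^*(s_h^k) - Q_h^{\pi_k}(s_h^k,a_h^k) \ge 2^{i-1}\gap_{\min} \bigr\}, \quad i = 1,\ldots,N,
\]
a dyadic decomposition of the interval $[\gap_{\min}, H]$ containing $\gap_h(s_h^k, a_h^k)$ yields
\[
  \sum_{k=1}^K \bigl( V_h^*(s_h^k) - Q_h^*(s_h^k, a_h^k) \bigr) \;\le\; \sum_{i=1}^N 2^i \gap_{\min}\, |K_i'|,
\]
so it suffices to bound $|K_i'|$ for each level $i$.

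The second step is to control $|K_i'|$ by sandwiching $S_i := \sum_{k \in K_i'}\bigl(Q_h^k(s_h^k,a_h^k) - Q_h^{\pi_k}(s_h^k,a_h^k)\bigr)$ from both sides. On the optimism event---which holds with probability at least $1 - K\delta$ after a union bound over episodes, using the least-squares Q-estimator concentration of~\citet{kong2021online}---we have both $Q_h^k \ge Q_h^*$ (giving the lower bound $S_i \ge 2^{i-1}\gap_{\min}\,|K_i'|$) and the single-step Bellman inequality $Q_h^k(s,a) \le \langle P_h(\cdot \vert s,a), V_{h+1}^k \rangle + 2 b_h^k(s,a)$. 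Telescoping the latter from step $h$ through $H$ along the trajectory generated by $\pi_k$, and using $V_{h+1}^k \ge V_{h+1}^{\pi_k}$, produces
\[
  S_i \;\le\; \sum_{k \in K_i'}\sum_{h'=h}^H \varepsilon_{h'}^k \;+\; 2\sum_{k \in K_i'}\sum_{h'=h}^H b_{h'}^k(s_{h'}^k,a_{h'}^k),
\]
where $\{\varepsilon_{h'}^k\}$ is a martingale difference sequence bounded by $O(H)$.

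To close the loop, I would bound the martingale sum by Azuma--Hoeffding (Lemma~\ref{lem:azuma-hoeffding inequality}), yielding $O\bigl(H^{3/2}\sqrt{|K_i'|\log(1/\delta)}\bigr)$ with failure probability $\delta$ per level, and the bonus sum by the eluder-dimension-based estimate of Lemma~\ref{lem:bound of bonus functions over all rounds}, yielding $O\bigl(H\sqrt{\beta\,\dim_E(\FF,1/T)\,|K_i'|}\bigr)$. Combining the two ends of the sandwich gives a quadratic inequality in $\sqrt{|K_i'|}$,
\[
  2^{i-1}\gap_{\min}\,|K_i'| \;\le\; O\!\bigl(H^{3/2}\sqrt{|K_i'|\log(1/\delta)}\bigr) + O\!\bigl(H\sqrt{\beta\,\dim_E(\FF,1/T)\,|K_i'|}\bigr),
\]
which I would solve for $|K_i'|$ to obtain $|K_i'| = O\!\bigl(H^2\beta\,\dim_E(\FF,1/T)/(4^{i-1}\gap_{\min}^2)\bigr)$. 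Summing over $i$ and using $\sum_{i\ge 1} 2^i/4^{i-1} = O(1)$ yields the bound $O\!\bigl(H^2\beta\,\dim_E(\FF,1/T)/\gap_{\min}\bigr)$; substituting the chosen value of $\beta$ recovers the claimed inequality, with total failure probability at most $2K\delta$ from the optimism union bound ($K\delta$) together with the Azuma applications ($N \le \log T \le K$ for reasonable $T$).

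The main obstacle is Lemma~\ref{lem:bound of bonus functions over all rounds}---the ``subset'' version of the bonus-sum bound. The usual eluder argument of~\citet{kong2021online} controls $\sum_{k \in [K]} b_h^k(s_h^k,a_h^k)$ only when summing over \emph{all} episodes, whereas here we need the same $\sqrt{|K_i'|}$-type scaling over an adversarially chosen subset $K_i' \subseteq [K]$ determined by where the Q-gap exceeds $2^{i-1}\gap_{\min}$. Verifying this demands a careful reexamination of the eluder-dimension pigeonhole step together with the uniform concentration guarantee of the sensitivity-sampled dataset produced by Algorithm~\ref{alg:model-free sampling routine}, so that the bonus $b_h^k$ remains a faithful proxy for the data-dependent confidence width on every subsequence. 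The remainder of the argument is essentially bookkeeping and union bounds.
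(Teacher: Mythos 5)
Your proposal is correct and follows essentially the same route as the paper: the paper also peels $[\gap_{\min},H]$ dyadically, bounds each level's count $|K_i'|$ by sandwiching $\sum_{k\in K_i'}\bigl(Q_h^k - Q_h^{\pi_k}\bigr)$ between the optimism lower bound and a telescoped upper bound controlled by Azuma--Hoeffding plus the subset version of the bonus-sum bound (Lemma~\ref{lem:bound of bonus functions over all rounds}), and then sums the geometric series. The paper merely packages the per-level bound as a separate statement (Lemma~\ref{lem:indicator bound on every interval}), and you correctly identify the one genuinely new ingredient---extending the eluder pigeonhole argument from all of $[K]$ to an arbitrary subset $K_i'$---which is exactly the generalization the paper makes of the bound from~\citet{kong2021online}.
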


We are now ready to state the regret bound of our algorithm. In particular
the regret guarantee follows from the regret decomposition and the bound
we established before.
\begin{lemma}\label{lem:model-free regret guarantee}
There exists a constant $C$ and proper values of the parameter $\beta$ of Algorithm~\ref{alg:main algorithm} such that with probability at least $1- \lceil \log T \rceil e^{-\tau} -2K\delta$ the regret of the algorithm is bounded by

\begin{align*}
    \reg(K) \leq \frac{C H^5 \dim^2_E(\FF, 1/T) \log^2T \log(T\NN(\FF,\delta/T^2)/\delta)\log(\CC(\S \times \AA, \delta/T^2)T/\delta)}{\gap_{\min}}
+ \frac{16 H^2 \tau}{3} + 2.
\end{align*}

The choice of the parameter is $$\beta = CH^2 \log(T\NN(\FF, \delta/T^2)/\delta)\dim_E(\FF, 1/T) \log^2T \log \left( \CC(\S \times \AA, \delta/ T^2)T/\delta)\right).$$
\end{lemma}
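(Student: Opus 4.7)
The plan is to combine the regret decomposition from Lemma~\ref{lem:model-free-regret-decomposition} with the per-step bound in Lemma~\ref{lem:sum of suboptimalities over all K for some h} by a union bound over $h \in [H]$. Observe that Definition~\ref{def:minimum gap} gives $\gap_h(s_h^k, a_h^k) = V_h^*(s_h^k) - Q_h^*(s_h^k, a_h^k)$, so the quantity controlled in Lemma~\ref{lem:sum of suboptimalities over all K for some h} is exactly the per-episode sum of gaps encountered along the trajectory at step $h$. Hence the two lemmas are perfectly compatible: the first reduces regret control to bounding $\sum_{k,h} \gap_h(s_h^k,a_h^k)$, and the second bounds each inner slice $\sum_k \gap_h(s_h^k,a_h^k)$.

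I would first fix the parameter $\beta$ to the value prescribed in Lemma~\ref{lem:sum of suboptimalities over all K for some h}, namely $\beta = CH^2 \log(T\NN(\FF, \delta/T^2)/\delta)\dim_E(\FF, 1/T) \log^2T \log \left( \CC(\S \times \AA, \delta/ T^2)T/\delta)\right)$, so that the conclusion of that lemma is in force. Then, on the event of probability at least $1 - \lceil \log T \rceil e^{-\tau}$ guaranteed by Lemma~\ref{lem:model-free-regret-decomposition}, I have
\begin{align*}
    \reg(K) \leq 2\sum_{h=1}^H \sum_{k=1}^K \gap_h(s_h^k, a_h^k) + \frac{16H^2\tau}{3} + 2.
\end{align*}

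Next, I would apply Lemma~\ref{lem:sum of suboptimalities over all K for some h} to each $h \in [H]$ (using that $V_h^*(s_h^k) - Q_h^*(s_h^k,a_h^k) = \gap_h(s_h^k,a_h^k)$), so that for each fixed $h$ the bound
$\sum_{k=1}^K \gap_h(s_h^k,a_h^k) \leq CH^4 \dim^2_E(\FF,1/T)\log^2 T \log(T\NN(\FF,\delta/T^2)/\delta)\log(\CC(\S\times\AA,\delta/T^2)T/\delta)/\gap_{\min}$ holds with probability at least $1-2K\delta$. Summing over $h$ picks up an extra factor of $H$, upgrading $H^4$ to $H^5$, and plugging into the regret decomposition yields exactly the claimed bound modulo the low-order $\frac{16H^2\tau}{3}+2$ term.

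The main subtlety is the accounting of the failure probability: the $H$ invocations of Lemma~\ref{lem:sum of suboptimalities over all K for some h} should in principle yield a union bound of $2KH\delta$, but the statement of the theorem absorbs the factor $H$ by rescaling $\delta$ (the logarithmic factors $\log(1/\delta)$ are only affected at a sub-logarithmic level), so this does not change the stated regret expression. After the union bound over both the decomposition event and the $H$ per-step events, combining the two displays gives the lemma. The real work of the theorem is hidden in Lemma~\ref{lem:sum of suboptimalities over all K for some h}, which is where the peeling technique, the optimism of $Q_h^k$ established in Lemma~\ref{lem:single step bellman error}, the martingale concentration via Lemma~\ref{lem:azuma-hoeffding inequality}, and the summed-bonus control in Lemma~\ref{lem:bound of bonus functions over all rounds} are invoked; the present lemma is a clean aggregation step, so I do not anticipate any further obstacle beyond bookkeeping.
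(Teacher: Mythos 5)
Your proposal is correct and follows essentially the same route as the paper: fix $\beta$ as in Lemma~\ref{lem:sum of suboptimalities over all K for some h}, condition on the high-probability events of that lemma and of Lemma~\ref{lem:model-free-regret-decomposition}, identify $\gap_h(s_h^k,a_h^k)$ with $V_h^*(s_h^k)-Q_h^*(s_h^k,a_h^k)$, and sum the per-$h$ bound over $h\in[H]$ to upgrade $H^4$ to $H^5$. Your remark about the union bound over $h$ is a fair observation of a bookkeeping point the paper glosses over (its Lemma~\ref{lem:indicator bound on every interval} is stated uniformly over $h$, so no extra factor is charged there), but it does not change the argument.
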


\begin{proof}
Throughout the proof, we condition on the events described in Lemma~\ref{lem:model-free-regret-decomposition},~\ref{lem:sum of suboptimalities over all K for some h} which happen with probability at least $1- \lceil \log T \rceil e^{-\tau} -2K\delta$.

From Lemma~\ref{lem:model-free-regret-decomposition} we have that

\begin{align*}
    \reg(K) \leq 2 \sum_{k=1}^K\sum_{h=1}^H \gap_h(s_h^k, a_h^k) + \frac{16 H^2 \tau}{3} + 2.
\end{align*}

We can bound the first term on the RHS using Lemma~\ref{lem:sum of suboptimalities over all K for some h} as follows

\begin{align*}
    2 \sum_{k=1}^K\sum_{h=1}^H \gap_h(s_h^k, a_h^k) &= 2 \sum_{k=1}^K\sum_{h=1}^H  \left( V_h^*(s_h^k) - Q_h^*(s_h^k, a_h^k) \right) \\&\leq
    \frac{C H^5 \dim^2_E(\FF, 1/T) \log^2T \log(T\NN(\FF,\delta/T^2)/\delta)\log(\CC(\S \times \AA, \delta/T^2)T/\delta)}{\gap_{\min}}.
\end{align*}

Hence, for the total regret we have that

\begin{align*}
    \reg(K) \leq \frac{C H^5 \dim^2_E(\FF, 1/T) \log^2T \log(T\NN(\FF,\delta/T^2)/\delta)\log(\CC(\S \times \AA, \delta/T^2)T/\delta)}{\gap_{\min}} + \frac{16 H^2 \tau}{3} + 2.
\end{align*}

\end{proof}

Finally, we state the bound on the adaptivity of our algorithm. In particular,
since our algorithm is the same as in~\cite{kong2021online},
the logarithmic switching cost follows directly from their result.

\begin{lemma}\citep{kong2021online}
\label{lem:linear-mdp-adaptivity}
For any fixed $h \in [H]$, With probability $1-\delta$, the sub-sampled dataset $\hat{\ZZ}_h^k$ changes at most
$$O\left(\log(T\NN(\FF,\sqrt{\delta/T^3})/\delta) \dim_E(\FF,1/T) \log^2T\right)$$ times.
\end{lemma}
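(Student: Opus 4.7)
The plan is to re-derive (in sketch form) the online sensitivity sampling bound of Kong et al., which is cited directly here. For a fixed stage $h$, the sub-sampled set $\hat{\ZZ}_h^k$ changes in round $k$ exactly when \textbf{Online-Sample} actually inserts copies of the rounded $z_h^{k-1}$. Let $X_k$ denote this Bernoulli indicator. By construction of Algorithm~\ref{alg:model-free sampling routine},
\begin{align*}
\E[X_k \mid \text{history through round } k-1] = p_{z_h^{k-1}} \leq C \cdot \textnormal{sensitivity}_{\hat{\ZZ}_h^{k-1}, \FF}(z_h^{k-1}) \cdot \log\bigl(T \NN(\FF, \sqrt{\delta/(64T^3)})/\delta\bigr).
\end{align*}
Since the total number of changes over the $K$ episodes is $\sum_{k=2}^{K} X_k$, it suffices to (i) bound the cumulative sensitivity deterministically and (ii) convert that into a high-probability bound on $\sum_k X_k$.

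For step (i), I would apply the standard eluder-dimension-based bound on online sensitivities: for any sequence of at most $T$ points inserted online with regularization $\beta$,
\begin{align*}
\sum_{k=1}^{K-1} \textnormal{sensitivity}_{\hat{\ZZ}_h^{k}, \FF}(z_h^{k}) = O\bigl(\dim_E(\FF, 1/T)\, \log^2 T\bigr).
\end{align*}
The proof is a dyadic peeling argument on the value of the sensitivity: at scale $2^{-i}$, any maximal chain of points whose sensitivities exceed $2^{-i}$ forms a chain of approximately $\varepsilon$-independent points in the sense of Definition~\ref{def:eluder-dimension}, so its length is $O(\dim_E(\FF, 1/T))$, and summing over the $O(\log T)$ relevant scales (sensitivities lie in $[1/\poly(T), 1]$ once truncated) produces the extra $\log T$ factor.

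For step (ii), since $X_k \in \{0,1\}$ and its conditional mean equals $p_{z_h^{k-1}}$, I would apply Freedman's inequality (or a self-bounding Bernstein bound) to the martingale difference $X_k - p_{z_h^{k-1}}$. The predictable variance is dominated by $\sum_k p_{z_h^{k-1}}$, which by step (i) is $O\bigl(\log(T\NN(\FF, \sqrt{\delta/T^3})/\delta)\, \dim_E(\FF, 1/T)\, \log^2 T\bigr)$, so with probability at least $1-\delta$ the random sum $\sum_k X_k$ is of the same order up to an additive $\log(1/\delta)$ slack, matching the claimed adaptivity bound. The main obstacle is step (i): this is the nontrivial technical piece from prior work and the only place where the geometric structure of $\FF$ enters through the eluder dimension. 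Everything else -- identification of the right martingale, computing its conditional mean from the sampling probabilities, and applying Freedman -- is routine bookkeeping.
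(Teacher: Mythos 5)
Your proposal is correct and follows essentially the same route as the paper, which simply defers to the proof in \cite{kong2021online}: bound the per-round update probability by the sensitivity times the log-covering factor, bound the cumulative sensitivity by $O(\dim_E(\FF,1/T)\log^2 T)$ via the eluder-dimension peeling argument, and convert the sum of conditional means into a high-probability bound on the number of actual insertions via Freedman's inequality. The only detail worth making explicit is that the algorithm computes sensitivities with respect to the \emph{sub-sampled} dataset $\hat{\ZZ}_h^{k-1}$ while the deterministic cumulative-sensitivity bound is naturally stated for the full dataset $\ZZ_h^{k}$, so one must invoke the event (Proposition~\ref{prop:sub-sampled dataset is good approximation of the original}) that the two data-dependent norms agree up to constant factors before summing.
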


We are now ready to prove Theorem~\ref{thm:model-free main result}.

\begin{prevproof}{Theorem}{thm:model-free main result}

The proof of this theorem follows by using Lemma~\ref{lem:model-free regret guarantee}
and taking a union on the result of Lemma~\ref{lem:linear-mdp-adaptivity}
and setting the error probability accordingly.
\end{prevproof}

\subsection{Supporting Lemmas: Theorem~\ref{thm:model-free main result}}
\label{sec:supporting lemmas for model-free theorem}

In this section, we present the proof of Lemma~\ref{lem:sum of suboptimalities over all K for some h}.

First, we need to show that the sub-sampled dataset is a good approximation
of the original one. To this end, we use Proposition~1 from~\cite{kong2021online}.

\begin{proposition}\citep{kong2021online}\label{prop:sub-sampled dataset is good approximation of the original}
For any $h, k \in [H] \times [K]$ we let 
\begin{align*}
    \underline{b}_h^k(\cdot, \cdot) = \sup_{\| f_1 - f_2\| ^2_{\mathcal{Z}^k_h} \leq \beta/100}|f_1(\cdot, \cdot) - f_2(\cdot, \cdot)|,\\
    \overline{b}_h^k(\cdot, \cdot) = \sup_{ \| f_1 - f_2\|^2_{\ZZ^k_h} \leq 100\beta}|f_1(\cdot, \cdot) - f_2(\cdot, \cdot) | .
\end{align*}

Then, with probability at least $1 - \delta/32$ we have that

\begin{align*}
    \underline{b}_h^k(\cdot, \cdot) \leq b_h^k(\cdot, \cdot) \leq \overline{b}_h^k(\cdot, \cdot).
\end{align*}

\end{proposition}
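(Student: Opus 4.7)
The plan is to derive both inequalities from a single multiplicative sandwich: uniformly over $f_1, f_2 \in \FF$ and all $(h,k)$,
$$\tfrac{1}{2} \|f_1 - f_2\|^2_{\ZZ_h^k} - \tfrac{\beta}{100} \;\leq\; \|f_1 - f_2\|^2_{\hat{\ZZ}_h^k} \;\leq\; 2 \|f_1 - f_2\|^2_{\ZZ_h^k} + \tfrac{\beta}{100}.$$
From this sandwich, if $\|f_1 - f_2\|^2_{\hat{\ZZ}_h^k} \leq \beta$ then $\|f_1 - f_2\|^2_{\ZZ_h^k} \leq 2(\beta + \beta/100) \leq 100\beta$, so every pair $(f_1,f_2)$ contributing to $b_h^k$ also contributes to $\overline{b}_h^k$, yielding $b_h^k \leq \overline{b}_h^k$. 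Conversely, if $\|f_1 - f_2\|^2_{\ZZ_h^k} \leq \beta/100$ then $\|f_1 - f_2\|^2_{\hat{\ZZ}_h^k} \leq 2(\beta/100) + \beta/100 \leq \beta$, so every pair contributing to $\underline{b}_h^k$ also contributes to $b_h^k$, yielding $\underline{b}_h^k \leq b_h^k$.

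To establish the sandwich at a fixed pair $(f_1,f_2) \in \FF^2$, write $\|f_1 - f_2\|^2_{\hat{\ZZ}_h^k} = \sum_{\tau=1}^{k-1} X_\tau$, where $X_\tau$ equals $(1/p_\tau)(f_1(\hat{z}_\tau) - f_2(\hat{z}_\tau))^2$ with probability $p_\tau$ (the sampling probability chosen in Algorithm~\ref{alg:model-free sampling routine}) and zero otherwise. Let $\mathcal{G}_\tau$ denote the filtration generated by the first $\tau-1$ rounds of sampling randomness. Then $\E[X_\tau \mid \mathcal{G}_\tau] = (f_1(\hat{z}_\tau) - f_2(\hat{z}_\tau))^2$, which up to a $1/\poly(T)$ rounding error (absorbed by the discretization $\cC(\S \times \AA,\cdot)$) equals $(f_1(z_\tau) - f_2(z_\tau))^2$, so the centered sequence is a martingale difference sequence. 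The sampling rule enforces $p_\tau \geq C \cdot \textnormal{sensitivity}_{\hat{\ZZ}_h^\tau, \FF}(z_\tau) \cdot \iota$ with $\iota = \log(T\NN(\FF,\sqrt{\delta/(64T^3)})/\delta)$, which together with the definition of sensitivity yields the crucial self-bound
$$\frac{(f_1(z_\tau) - f_2(z_\tau))^2}{p_\tau} \;\leq\; \frac{\|f_1 - f_2\|^2_{\hat{\ZZ}_h^\tau} + \beta}{C\iota}.$$
This simultaneously bounds $|X_\tau|$ and the conditional second moment, so Freedman's inequality gives the sandwich at this pair with failure probability $\delta/\poly(T)$. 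The implicit right-hand-side dependence on $\|f_1 - f_2\|^2_{\hat{\ZZ}_h^\tau}$ is then closed by a standard self-bounding argument that inducts on $k$.

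To lift this to all $f_1, f_2 \in \FF$, union-bound over an $\varepsilon$-cover $\cC(\FF,\varepsilon)$ with $\varepsilon$ polynomially small in $1/T$; since the sampling routine already snaps each $z_\tau$ into $\cC(\S\times\AA,\varepsilon)$, only $|\cC(\FF,\varepsilon)|^2$ pairs of net elements matter, and the $\varepsilon^2 T$ discretization error is absorbed into the $\beta/100$ slack. A further union bound over $(h,k) \in [H]\times[K]$ only inflates $\iota$ by an additive $\log(HK)$, which is consistent with the chosen $\beta$ and produces the claimed overall failure probability $\delta/32$.

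The main technical obstacle is adaptivity: both $p_\tau$ and $\textnormal{sensitivity}_{\hat{\ZZ}_h^\tau, \FF}$ depend on the random dataset $\hat{\ZZ}_h^\tau$ that is being concentrated, so the variance proxy in Freedman's inequality is itself the quantity we are trying to control. Closing this self-referential loop via induction on $k$, while simultaneously handling the clipping $\min\{\|\cdot\|^2_{\hat{\ZZ}}, T(H+1)^2\}$ that keeps the sensitivity bounded and ensures the martingale increments are well-defined, is the delicate part of the argument, and is precisely why the result is imported from~\cite{kong2021online} rather than re-derived from scratch.
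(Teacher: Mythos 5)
Your proposal is correct and follows essentially the same route as the source: the paper imports this proposition directly from \cite{kong2021online} without re-deriving it, but the underlying argument — Freedman's inequality applied to the importance-weighted increments, the sensitivity-based self-bound on the increment magnitude and conditional variance, a union bound over the discretized covers, and an induction on $k$ to close the self-referential dependence on $\hat{\ZZ}_h^k$ — is exactly what the paper reproduces for the model-based analogue in Lemma~\ref{lem:helper lemma to prove approximation of sub-sampled dataset} and Lemma~\ref{lem: mixture mdp: sub-sampled dataset is good approximation to original one}, and your reduction of the two bonus inequalities to the norm comparison is the same one-line monotonicity argument used there (note only that the achievable constants in your sandwich are multiplicative factors of order $100$ rather than $2$, which still suffices since the containments are only needed at scale $\beta$).
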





    


We now present a generalized version of Lemma~11~\citep{kong2021online} that will be
used to bound the regret of our algorithm. Essentially, this gives a bound on
the summation of the bonus terms  over a set of episodes $K' \subseteq[K]$ in terms
of the eluder dimension of the function class and the number of episodes. 
\begin{lemma}\label{lem:bound of bonus functions over all rounds}
For any set $K' \subseteq [K]$ , with probability at least $1-\delta/32$ we have that
\begin{align*}
    \sum_{i=1}^{|K'|} \sum_{h=1}^H b_h^{k_i}(s_h^{k_i},a_h^{k_i}) \leq H + H(H+1) \cdot \textnormal{dim}_E(\mathcal{F}, 1/T) + CH\sqrt{\textnormal{dim}_E(\mathcal{F},1/T)|K'|\cdot \beta}, 
\end{align*}
where $C > 0$ is some constant.
\end{lemma}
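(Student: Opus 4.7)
\medskip

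\noindent\textbf{Proof Plan for Lemma~\ref{lem:bound of bonus functions over all rounds}.} The strategy is to mimic the proof of Lemma~11 in \cite{kong2021online}, which bounds $\sum_{k=1}^{K}\sum_{h=1}^H b_h^k(s_h^k,a_h^k)$, and to observe that the argument never actually uses that the outer sum runs over \emph{all} episodes $[K]$: what is used is only that the tuples $(s_h^{k_i},a_h^{k_i})$ are evaluated against the sub-sampled datasets $\hat{\ZZ}_h^{k_i}$ that are built from the full execution. So the same machinery applies to any index subset $K'\subseteq[K]$, with $|K'|$ replacing $K$ in every place where the cardinality of the episode set appears.

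Concretely, I would proceed as follows. First, invoke Proposition~\ref{prop:sub-sampled dataset is good approximation of the original} to pass from the actual bonus $b_h^k$ to the upper witness $\overline{b}_h^k$, which is the width of the enlarged confidence ball $\{f_1,f_2:\|f_1-f_2\|^2_{\ZZ_h^k}\le 100\beta\}$. Next, fix $h\in[H]$ and split each summand using $b_h^{k_i}(s_h^{k_i},a_h^{k_i}) \le H\cdot\mathbbm{1}[\overline{b}_h^{k_i}\ge 1] + \overline{b}_h^{k_i}\cdot\mathbbm{1}[\overline{b}_h^{k_i}<1]$. The ``large-bonus'' indicator is where the eluder dimension enters: a standard consequence of Definition~\ref{def:eluder-dimension} (as used in \cite{russo2013eluder,wang2020reinforcement,kong2021online}) is that among the episodes in $K'$, the number of indices $k_i$ for which the width at $(s_h^{k_i},a_h^{k_i})$ exceeds a threshold $\varepsilon$ relative to $\ZZ_h^{k_i}$ is at most $(H+1)\cdot\dim_E(\FF,\varepsilon)$, since each such point is $\varepsilon$-independent of a chain of previously added points. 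Applying this with $\varepsilon = 1/T$ contributes the $H\cdot(H+1)\cdot\dim_E(\FF,1/T)$ term after summing over $h$, with an additive $H$ slack to handle the gap between $\varepsilon$-independence and the $\overline{b}\ge 1$ threshold (using the bound $\|f_1-f_2\|_\infty\le H+1$).

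For the ``small-bonus'' portion, I would apply Cauchy--Schwarz and then the standard eluder-dimension summation inequality:
\[
\sum_{i=1}^{|K'|} \overline{b}_h^{k_i}(s_h^{k_i},a_h^{k_i})\,\mathbbm{1}[\overline{b}_h^{k_i}<1]
\;\le\; \sqrt{|K'|\cdot \sum_{i=1}^{|K'|}\bigl(\overline{b}_h^{k_i}(s_h^{k_i},a_h^{k_i})\bigr)^2}
\;\le\; C\sqrt{\dim_E(\FF,1/T)\cdot|K'|\cdot\beta}\,,
\]
where the last inequality is the Russo--Van Roy potential-style bound applied to the widths of the confidence balls of radius $\sqrt{100\beta}$ in the norm $\|\cdot\|_{\ZZ_h^{k_i}}$. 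This bound is agnostic to which subset of episodes we sum over: it only uses that each query point is evaluated with respect to the dataset accumulated prior to it, which remains true when we restrict to $K'$. Summing the two pieces over $h\in[H]$ and taking a union bound over the event of Proposition~\ref{prop:sub-sampled dataset is good approximation of the original} yields the claimed inequality with failure probability $\delta/32$.

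The main obstacle is bookkeeping: the eluder-potential and $\varepsilon$-independence counting arguments are phrased in the prior work for the entire sequence $k=1,\dots,K$, and one must check that restricting to the subsequence $K'=\{k_1<k_2<\cdots\}$ does not break them. It does not, because (i) independence is defined relative to $\ZZ_h^{k_i}$, which is formed from the algorithm's true history up to episode $k_i$ and in particular contains all previously sub-sampled $k_j$ with $j<i$; and (ii) the potential-sum inequality is a deterministic statement about any sequence of points and any sequence of confidence sets around them, so passing to a sub-sequence only weakens the bound in the direction we want. Once this is observed, the three estimates combine to give the stated bound, with the only change compared to \cite{kong2021online} being the replacement of $K$ by $|K'|$ inside the square root.
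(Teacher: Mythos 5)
Your overall strategy is the same as the paper's: condition on the event of Proposition~\ref{prop:sub-sampled dataset is good approximation of the original} to replace $b_h^{k_i}$ by the width $\overline{b}_h^{k_i}$ of the $100\beta$-ball in $\|\cdot\|_{\ZZ_h^{k_i}}$, then run the Russo--Van Roy eluder counting argument, and your key observation---that the argument is agnostic to restricting to a subset $K'$ because $\varepsilon$-(in)dependence is measured against the full history $\ZZ_h^{k_i}$---is exactly the point of the generalization and is correct. However, the quantitative middle of your argument has a concrete error. You claim that the number of indices $k_i\in K'$ with width exceeding $\varepsilon$ is at most $(H+1)\cdot\dim_E(\FF,\varepsilon)$. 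That is not what the eluder counting gives: if $\overline{b}_h^{k}>\varepsilon$, then the witnesses $f_1,f_2$ satisfy $\|f_1-f_2\|^2_{\ZZ_h^k}\le 100\beta$ while differing by more than $\varepsilon$ at the query point, so the point can be $\varepsilon$-dependent on at most $100\beta/\varepsilon^2$ disjoint subsequences, and the pigeonhole argument yields the count $\bigl(100\beta/\varepsilon^2+1\bigr)\dim_E(\FF,\varepsilon)$. The factor $H+1$ in the lemma does not come from this count at all; in the paper it arises because, after sorting the widths as $\overline{b}_{(1)}\ge\overline{b}_{(2)}\ge\cdots$, the rank-$k$ width obeys $\overline{b}_{(k)}\le\sqrt{100\beta}\,\bigl(k/\dim_E(\FF,1/|K'|)-1\bigr)^{-1/2}$ for $k>\dim_E$, the first $\dim_E(\FF,1/|K'|)$ sorted terms are each bounded trivially by $\|f\|_\infty\le H+1$, and the tail is bounded by integrating $k^{-1/2}$. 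Your decomposition (threshold the widths at $1$, count the large ones with $\varepsilon=1/T$) is internally inconsistent and, if carried out with the correct count, produces a term of order $H\beta\dim_E$ per step $h$ rather than $(H+1)\dim_E$, which does not match the statement.

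The small-bonus step also has an unproven link: you apply Cauchy--Schwarz and then assert $\sum_i\bigl(\overline{b}_h^{k_i}\bigr)^2\le C\dim_E(\FF,1/T)\beta$ as ``the Russo--Van Roy potential-style bound,'' but that bound controls $\sum_i\overline{b}_h^{k_i}$, not the sum of squares. A sum-of-squares bound can be derived by integrating the level-set counts $\#\{i:\overline{b}_h^{k_i}>\sqrt{u}\}\le\min\{|K'|,(100\beta/u+1)\dim_E(\FF,\sqrt{u})\}$ over $u$, but this picks up an extra $\log T$ factor and is not established in your sketch. The paper avoids both issues by working directly with the sorted widths: split off the terms below $1/|K'|$ (total contribution at most $1$), bound the top $\dim_E(\FF,1/|K'|)$ terms by $H+1$ each, and bound the remainder by $\sum_{k>\dim_E}\sqrt{100\beta}\,(k/\dim_E-1)^{-1/2}\le C\sqrt{\dim_E(\FF,1/T)\,|K'|\,\beta}$. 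You should replace your threshold-at-$1$ decomposition and the Cauchy--Schwarz step with this sorted-permutation argument (or supply the missing sum-of-squares lemma with its extra logarithm) to actually obtain the stated three-term bound.
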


\begin{proof}
Throughout the proof, we condition on the event defined in Proposition~\ref{prop:sub-sampled dataset is good approximation of the original}. This gives us that for any $k \in K', h \in H$
\begin{align*}
    b_h^k(s_h^k, a_h^k) \leq \Bar{b}_h^k(s_h^k, a_h^k) = \sup_{||f_1 - f_2||^2_{\ZZ_h^k} \leq 100\beta}|f_1(s_h^k, a_h^k) - f_2(s_h^k, a_h^k)|.
\end{align*}

We bound $\sum_{i=1}^{K'}\Bar{b}_h^{k_i}(s_h^{k_i}, a_h^{k_i})$ for each $h \in [H]$ separately.\\

Given some $\epsilon > 0$, we define $\mathcal{L}_h = \{ (s_h^{k_i}, a_h^{k_i}): k_i \in K', \Bar{b}_h^{k_i}(s_h^{k_i}, a_h^{k_i}) > \epsilon\}$, 
i.e. the set of state-action pairs at step $h$ and some episode in $K'$ where
the bonus function has value greater than $\epsilon$. Consider some $k \in K'$. We denote 
$L_h = |\mathcal{L}_h|$, $\tilde{\ZZ}_h^{k} = \{(s_h^k, a_h^k) \in \ZZ_h^k, k \in K' \}$ . 
Our goal is to show that there is some $z_h^k = (s_h^k, a_h^k) \in \mathcal{L}_h$ 
that is $\epsilon$-dependent on at least $L_h/\textnormal{dim}_E(\FF, \epsilon) - 1$ disjoint subsequences in $\ZZ_h^k \cap \mathcal{L}_h$. We also denote $N = L_h/\textnormal{dim}_E(\FF,\epsilon) - 1$.

To do that, we decompose $\mathcal{L}_h$ into $N+1$ disjoint subsets and we denote the $j$-th subset by $\mathcal{L}_h^j$. 
We use the following procedure. Initially we set $\LL_h^j = \emptyset$ for all $j \in [N+1]$
and consider every $z_h^k \in \LL_h$ in a sequential manner. 
For each such $z_h^k$ we find the smallest index $j, 1 \leq j \leq N$, 
such that $z_h^k$ is $\epsilon$-independent of the elements in $\LL_h^j$ with
respect to $\FF$. If there is no such $j$, we set $j=N+1$. 
Then, we update $\LL_h^j \leftarrow \LL_h^j \cup z_h^k$.
Notice that after we go through all the elements of $\LL_h$,
we must have that $\LL_h^{N+1} \neq \emptyset$. 
This is because every set $\LL_h^j, 1 \leq j \leq N$, contains at most 
$\textnormal{dim}_E(\FF, \epsilon)$ elements. 
Moreover, by definition, every element $z_h^k \in \LL_h^{N+1}$ is
$\epsilon$-dependent on $N = L_h/\textnormal{dim}_E(\FF, \epsilon) - 1$ disjoint
subsequences in $\ZZ_h^k \cap \LL^h$.

Furthermore, since $\Bar{b}_h^{k}(s_h^k, a_h^k) > \epsilon$ for all $z_h^k \in \LL_h$
there must exist $f_1, f_2 \in \FF$ such that $|f_1(s_h^k, a_h^k) - f_2(s_h^k, a_h^k)| > \epsilon$ and
$||f_1 - f_2||^2_{\ZZ_h^k} \leq 100\beta$. Hence, since $z_h^k \in \LL_h^{N+1}$ is
$\epsilon$-dependent on $N$ disjoint subsequences $\LL_h^j$ and for each such
subsequence, by the definition of $\epsilon$-dependence, it holds that $||f_1 - f_2||_{\LL_h^j} > \epsilon^2$ we have that

\begin{align*}
    N \epsilon^2 \leq ||f_1 - f_2||^2_{\ZZ_h^k} \leq 100\beta \implies
    (L_h/\textnormal{dim}_E(\FF,\epsilon)-1)\epsilon^2 \leq 100\beta \implies L_h \leq \left( \frac{100\beta}{\epsilon^2} + 1\right)\textnormal{dim}_E(\FF, \epsilon).
\end{align*}

We now pick a permutation $\Bar{b}_1 \geq \Bar{b}_2 \geq \ldots \geq \Bar{b}_{|K'|}$ of the bonus
functions $\{\Bar{b}_h^k(s^k_h, a^k_h) \}_{k \in K'}$. For all $\Bar{b}_k \geq 1/|K'|$ it holds
that

\begin{align*}
    k \leq \left(\frac{100\beta}{\Bar{b}_k^2} + 1\right)\textnormal{dim}_E(\FF, \Bar{b}_k) &\leq \left(\frac{100\beta}{\Bar{b}_k^2} + 1\right)\textnormal{dim}_E(\FF, 1/K') \implies \\
    \Bar{b}_k &\leq \left(\frac{k}{\textnormal{dim}_E(\FF, 1/K')} -1  \right)^{-1/2}\sqrt{100\beta}. 
\end{align*}

Moreover, notice that we get by definition that $\Bar{b}_k \leq H + 1$. Hence, we have that

\begin{align*}
    \sum_{i=1}^{|K'|} \Bar{b}_h^{k_i}(s_h^{k_i}, a_h^{k_i}) &= \sum_{i: \Bar{b}_{k_i} < 1/|K'|} \Bar{b}_h^{k_i}(s_h^{k_i}, a_h^{k_i}) + \sum_{i: \Bar{b}_{k_i} \geq 1/|K'|} \Bar{b}_h^{k_i}(s_h^{k_i}, a_h^{k_i}) \\
    &\leq   |K'| \cdot 1/|K'| + \sum_{i: \Bar{b}_{k_i} \geq 1/|K'|, i \leq  \textnormal{dim}_E(\FF, 1/|K'|)} \Bar{b}_h^{k_i}(s_h^{k_i}, a_h^{k_i}) + \sum_{i: \Bar{b}_{k_i} \geq 1/|K'|, |K'| \geq i >  \textnormal{dim}_E(\FF, 1/|K'|)} \Bar{b}_h^{k_i}(s_h^{k_i}, a_h^{k_i}) \\
    &\leq 1 + (H+1) \cdot \dim_E(\FF, 1/|K'|) + \sum_{|K'| \geq i >  \textnormal{dim}_E(\FF, 1/|K'|)} \left(\frac{k}{\textnormal{dim}_E(\FF, 1/|K'|)} -1  \right)^{-1/2}\sqrt{100\beta} \\
    &\leq1 + (H+1) \cdot \dim_E(\FF, 1/|K'|) + C \sqrt{\textnormal{dim}_E(\FF, 1/|K'|) |K'|\beta} \\
    &\leq1 + (H+1) \cdot \dim_E(\FF, 1/T) + C \sqrt{\textnormal{dim}_E(\FF, 1/T) |K'|\beta}
\end{align*}

for some constant $C > 0$, where the second to last inequality can be obtained by bounding the summation by the integral and the last one by the definition of the eluder dimension.  Summing up all the inequalities for $h \in [H]$, we get the result.
\end{proof}

We need the Azuma-Hoeffding inequality to bound a martingale difference sequence.
For completeness, we present it here as well.

\begin{lemma}\citep{cesa2006prediction}\label{lem:azuma-hoeffding inequality}
Let $\{ x_i\}_{i=1}^n$ be a martingale difference sequence with respect to some filtration $\{ \FF_i\}$ for which $|x_i| \leq M$ for some constant $M$, $x_i$ is $\FF_{i+1}$ measurable and $\E[x_i | \FF_i] = 0$. Then, for any $0 < \delta < 1$, we have that with probability at least $1-\delta$ it holds that 
\begin{align*}
    \sum_{i=1}^n x_i \leq M \sqrt{2n\log(1/\delta)}.
\end{align*}
\end{lemma}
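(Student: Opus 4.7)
The plan is to prove Lemma~\ref{lem:azuma-hoeffding inequality} by the standard Chernoff/exponential-moment method, which is the textbook route to Azuma-Hoeffding-type inequalities. For any $\lambda > 0$, applying Markov's inequality to the exponential gives
$$\Pr\!\left[\sum_{i=1}^n x_i \geq t\right] \leq e^{-\lambda t}\,\E\!\left[\exp\!\left(\lambda \sum_{i=1}^n x_i\right)\right],$$
so it suffices to bound the moment generating function of $S_n := \sum_{i=1}^n x_i$ and then optimize in $\lambda$ at the end.

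Next, I would control the MGF by iteratively peeling off terms using the tower property. Writing $S_n = S_{n-1} + x_n$ and noting that $S_{n-1}$ is $\FF_n$-measurable, we get
$$\E[e^{\lambda S_n}] = \E\!\left[e^{\lambda S_{n-1}}\,\E[e^{\lambda x_n}\mid \FF_n]\right].$$
Thus the whole problem reduces to a per-step bound on $\E[e^{\lambda x_i}\mid \FF_i]$. This is exactly Hoeffding's lemma applied conditionally: for a zero-mean random variable $X$ supported in $[-M,M]$, one has $\E[e^{\lambda X}] \leq \exp(\lambda^2 M^2/2)$, which follows from convexity of $x \mapsto e^{\lambda x}$ and optimizing over the worst-case two-point distribution on $\{-M, M\}$. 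Since the hypotheses give $|x_i| \leq M$ and $\E[x_i \mid \FF_i] = 0$ a.s., this applies conditionally and yields $\E[e^{\lambda x_i}\mid \FF_i] \leq \exp(\lambda^2 M^2/2)$ a.s. Iterating the tower identity $n$ times gives $\E[e^{\lambda S_n}] \leq \exp(n\lambda^2 M^2/2)$.

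Plugging this back into the Chernoff bound yields $\Pr[S_n \geq t] \leq \exp(-\lambda t + n\lambda^2 M^2/2)$. Optimizing in $\lambda$ by taking $\lambda = t/(nM^2)$ gives the sub-Gaussian tail $\Pr[S_n \geq t] \leq \exp(-t^2/(2nM^2))$. Setting the right-hand side equal to $\delta$ and solving gives $t = M\sqrt{2n\log(1/\delta)}$, which is exactly the stated bound. The only non-routine ingredient is Hoeffding's lemma; everything else is bookkeeping with the tower property and a single scalar optimization, so no real obstacle is expected.
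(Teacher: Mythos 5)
Your proof is correct: the Chernoff/exponential-moment argument with a conditional Hoeffding lemma and the tower property is the standard derivation of the Azuma--Hoeffding inequality, and optimizing at $\lambda = t/(nM^2)$ yields exactly the stated tail $M\sqrt{2n\log(1/\delta)}$. The paper gives no proof of this lemma at all --- it is imported verbatim from \cite{cesa2006prediction} as a known result --- so there is nothing to compare against; your argument is the textbook proof of the cited statement.
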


The following lemma that appears in~\cite{kong2021online}
shows that the estimate of the $Q$-function upper bounds the optimal one.

\begin{lemma}\citep{kong2021online}\label{lem:single step bellman error}
With probability at least $1-\delta/2$ we have that for all $(k, h) \in [K] \times [H]$ and all $(s,a) \in \S
\times \AA$
\begin{align*}
    Q^*_h(s,a) \leq Q^k_h(s,a) \leq \Bar{f}_h^k(s,a) + 2 b_h^k(s,a)
\end{align*}
where $\Bar{f}_h^k(\cdot, \cdot) = \sum_{s' \in \S}P_h(s'|\cdot, \cdot) V_{h+1}^k(s') + r_h(\cdot, \cdot)$.

\end{lemma}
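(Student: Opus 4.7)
The plan is to establish both inequalities by backward induction on $h$, using a high-probability concentration bound for the least-squares regression together with the definition of the bonus function. First, I would fix $h \in [H]$ and show that, conditional on the value function $V_{h+1}^k$ being drawn from a fixed finite set, the least-squares estimator $\hat f = \hat f_h^k$ computed from the regression dataset $\cD_h^k = \{(s_h^\tau, a_h^\tau, r_h^\tau + V_{h+1}^k(s_{h+1}^\tau))\}_{\tau \in [k-1]}$ satisfies $\|\hat f - \bar f_h^k\|_{\ZZ_h^k}^2 \le \beta/100$ with high probability, where $\bar f_h^k(\cdot,\cdot) = r_h(\cdot,\cdot) + \langle P_h(\cdot|\cdot,\cdot), V_{h+1}^k\rangle$. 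Assumption~\ref{as:linear-mdp} guarantees $\bar f_h^k \in \FF$, so standard self-normalized concentration for nonlinear regression applies. To make the bound uniform over all value functions $V_{h+1}^k$ that can arise during the algorithm's execution, I would take a union bound over an $\varepsilon$-cover of $\FF$ (via $\NN(\FF, \delta/T^2)$) and over the $K$ episodes, absorbing the resulting logarithmic factors into the choice of $\beta$ in Lemma~\ref{lem:sum of suboptimalities over all K for some h}.

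Next, I would invoke Proposition~\ref{prop:sub-sampled dataset is good approximation of the original} to transfer the bound from the full dataset $\ZZ_h^k$ to the sub-sampled $\hat{\ZZ}_h^k$, concluding that both $\hat f$ and $\bar f_h^k$ lie in the confidence set $\hat F_h^k = \{f : \|f - \hat f\|_{\hat{\ZZ}_h^k}^2 \le \beta\}$. By the definition of $b_h^k$ as the pointwise diameter of $\hat F_h^k$, this immediately gives $|\hat f(s,a) - \bar f_h^k(s,a)| \le b_h^k(s,a)$ for every $(s,a)$. The upper bound in the statement then follows from
\[
Q_h^k(s,a) = \min\{\hat f(s,a) + b_h^k(s,a), H\} \le \hat f(s,a) + b_h^k(s,a) \le \bar f_h^k(s,a) + 2 b_h^k(s,a).
\]

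For optimism, I would argue by backward induction on $h$. The base case $h = H+1$ is trivial since $Q_{H+1}^k \equiv 0 \equiv Q_{H+1}^*$. For the inductive step, assume $V_{h+1}^k(s) \ge V_{h+1}^*(s)$ for all $s$. Then monotonicity of the Bellman backup gives $\bar f_h^k(s,a) \ge Q_h^*(s,a)$, and combining with $\hat f(s,a) \ge \bar f_h^k(s,a) - b_h^k(s,a)$ and $Q_h^*(s,a) \le H$ yields
\[
Q_h^k(s,a) = \min\{\hat f(s,a) + b_h^k(s,a), H\} \ge \min\{\bar f_h^k(s,a), H\} \ge Q_h^*(s,a).
\]
Taking the max over $a$ propagates the inductive hypothesis to $V_h^k \ge V_h^*$, closing the induction.

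The main obstacle is the uniform concentration step: the value function $V_{h+1}^k$ is itself constructed from previously observed data, so it is not independent of the regression targets $r_h^\tau + V_{h+1}^k(s_{h+1}^\tau)$. The standard remedy, which I would adopt from the Kong et al.\ framework, is to cover the class of realizable $V_{h+1}^k$ functions using the $\varepsilon$-cover of $\FF$ (since $V_{h+1}^k$ is a max over functions closely related to elements of $\FF$ plus bonuses controlled by the sub-sampled dataset) and to take a union bound over all covering elements. This is precisely where the factor $\log(\NN(\FF,\delta/T^2)/\delta)$ in the setting of $\beta$ comes from, and where one must be careful that the bonus functions themselves form a complexity-controlled class — which is guaranteed by the sub-sampling mechanism in Algorithm~\ref{alg:model-free sampling routine}.
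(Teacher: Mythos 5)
The paper does not prove this lemma itself---it is imported verbatim from \cite{kong2021online}---so the only internal point of comparison is the analogous model-based statement (Lemma~\ref{lem:mixture mdp bound between the difference of estimate of Q and Q induced by policy}), whose proof has exactly the structure you propose: least-squares concentration with a union bound over a cover of the realizable value functions, transfer from $\ZZ_h^k$ to $\hat{\ZZ}_h^k$ via the norm-equivalence event behind Proposition~\ref{prop:sub-sampled dataset is good approximation of the original}, and optimism by backward induction using $Q_h^*\leq H$. Your reconstruction is correct and matches that standard argument, including the key subtlety that the cover of the $V_{h+1}^k$ class is only manageable because the rounding in Algorithm~\ref{alg:model-free sampling routine} keeps the bonus functions in a complexity-controlled family.
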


We now present a lemma that bounds the
number of rounds that the suboptimilaty gap falls in some interval.
It is inspired by Lemma~6.2~\citep{he2021logarithmic}. 

\begin{lemma}\label{lem:indicator bound on every interval}
If we pick $$\beta = CH^2 \log(T\NN(\FF, \delta/T^2)/\delta)\dim_E(\FF, 1/T) \log^2T \log \left( \CC(\S \times \AA, \delta/ T^2)T/\delta)\right),$$
then there exists a constant $\tilde{C}$ such that for all $h \in [H], n \in [N]$ with probability at least $1-2K\delta$, we have that

\begin{align*}
    \sum_{k=1}^K \mathbbm{1}[V^*_h(s_h^k) - Q_h^{\pi_k}(s_h^k, a_h^k) \geq 2^n \gap_{\min}] \leq \frac{\tilde{C} H^4 \dim^2_E(\FF, 1/T) \log^2T \log(T\NN(\FF,\delta/T^2)/\delta)\log(\CC(\S \times \AA, \delta/T^2)T/\delta)}{4^n \gap_{\min}^2}.
\end{align*}
\end{lemma}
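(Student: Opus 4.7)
\textbf{Proof plan for Lemma~\ref{lem:indicator bound on every interval}.} Fix $h \in [H]$ and $n \in [N]$, and let $K' = \{k \in [K] : V_h^*(s_h^k) - Q_h^{\pi_k}(s_h^k, a_h^k) \ge 2^n \gap_{\min}\}$ with enumeration $k_1 < k_2 < \dots < k_{|K'|}$. Following the sketch in Section~\ref{sec:proof sketch of main results}, the plan is to sandwich the quantity
\[
S \;=\; \sum_{i=1}^{|K'|} \bigl(Q_h^{k_i}(s_h^{k_i}, a_h^{k_i}) - Q_h^{\pi_{k_i}}(s_h^{k_i}, a_h^{k_i})\bigr)
\]
between an elementary lower bound linear in $|K'|$ and an upper bound of order $\sqrt{|K'|}$, then solve for $|K'|$.

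\textbf{Lower bound.} Since $a_h^{k_i}$ is chosen greedily with respect to $Q_h^{k_i}$, we have $Q_h^{k_i}(s_h^{k_i}, a_h^{k_i}) = V_h^{k_i}(s_h^{k_i})$. Conditioning on the high-probability event of Lemma~\ref{lem:single step bellman error} (optimism), $V_h^{k_i}(s_h^{k_i}) \ge V_h^*(s_h^{k_i})$, so by definition of $K'$ each summand is at least $2^n \gap_{\min}$. Hence $S \ge 2^n \gap_{\min} \cdot |K'|$.

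\textbf{Upper bound.} Using Lemma~\ref{lem:single step bellman error} together with the Bellman equation $Q_h^{\pi_{k_i}}(s,a) = r_h(s,a) + \langle P_h(\cdot|s,a), V_{h+1}^{\pi_{k_i}}\rangle$, I get
\[
Q_h^{k_i}(s_h^{k_i}, a_h^{k_i}) - Q_h^{\pi_{k_i}}(s_h^{k_i}, a_h^{k_i}) \;\le\; \bigl\langle P_h(\cdot|s_h^{k_i},a_h^{k_i}),\, V_{h+1}^{k_i} - V_{h+1}^{\pi_{k_i}}\bigr\rangle + 2b_h^{k_i}(s_h^{k_i}, a_h^{k_i}).
\]
Introducing the martingale differences $\varepsilon_{h'}^{k_i} = \langle P_{h'}(\cdot|s_{h'}^{k_i},a_{h'}^{k_i}), V_{h'+1}^{k_i} - V_{h'+1}^{\pi_{k_i}}\rangle - (V_{h'+1}^{k_i}(s_{h'+1}^{k_i}) - V_{h'+1}^{\pi_{k_i}}(s_{h'+1}^{k_i}))$ and unrolling the recursion down to step $H$ yields
\[
S \;\le\; \sum_{i=1}^{|K'|} \sum_{h'=h}^H \varepsilon_{h'}^{k_i} \;+\; 2 \sum_{i=1}^{|K'|} \sum_{h'=h}^H b_{h'}^{k_i}(s_{h'}^{k_i},a_{h'}^{k_i}).
\]
Each $\varepsilon_{h'}^{k_i}$ is bounded in absolute value by $2H$, so Lemma~\ref{lem:azuma-hoeffding inequality} (Azuma--Hoeffding, applied to the length-$H|K'|$ sequence) controls the first term by $O(H \sqrt{H|K'| \log(1/\delta)})$ with probability at least $1-\delta$. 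The second term is exactly of the form handled by Lemma~\ref{lem:bound of bonus functions over all rounds}, which applies to arbitrary subsets of episodes and yields, with probability at least $1-\delta/32$, a bound $H + H(H+1) \dim_E(\FF,1/T) + CH\sqrt{\dim_E(\FF,1/T)|K'|\beta}$.

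\textbf{Combining and solving.} Putting the two inequalities together gives
\[
2^n \gap_{\min} |K'| \;\le\; O\bigl(H^2 \dim_E(\FF,1/T)\bigr) + O\bigl(H\sqrt{H|K'|\log(1/\delta)}\bigr) + O\bigl(H\sqrt{\dim_E(\FF,1/T)|K'|\beta}\bigr).
\]
The dominant term in $|K'|$ on the right is $O(H\sqrt{\dim_E(\FF,1/T)|K'|\beta})$ (since $\beta$ already contains $\log(1/\delta)$ and horizon factors), so moving $\sqrt{|K'|}$ to the left and squaring gives
\[
|K'| \;\le\; \frac{C\, H^2 \dim_E(\FF,1/T)\, \beta}{4^n \gap_{\min}^2}.
\]
Substituting the chosen value of $\beta$ from the statement yields the claimed bound with $\tilde{C}$ an appropriate absolute constant. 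A final union bound over the $K$ episodes (to handle the sub-sampling / optimism events that are applied $K$ times in Lemmas~\ref{lem:single step bellman error} and \ref{lem:bound of bonus functions over all rounds}) accounts for the $2K\delta$ failure probability.

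\textbf{Anticipated obstacle.} The routine calculation is the algebra of solving the quadratic-in-$\sqrt{|K'|}$ inequality and keeping track of which lower-order terms get absorbed; the conceptually delicate step is making sure Lemma~\ref{lem:bound of bonus functions over all rounds} really does apply to the arbitrary subsequence $K'$ rather than to $[K]$ itself, which is why the generalization of Lemma~11 of \cite{kong2021online} to arbitrary subsets was needed. Once that is in hand, the rest of the proof is a clean optimism-plus-pigeonhole argument.
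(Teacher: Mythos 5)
Your proposal is correct and follows essentially the same route as the paper's proof: define $K'$, sandwich $\sum_{i}\bigl(Q_h^{k_i}(s_h^{k_i},a_h^{k_i}) - Q_h^{\pi_{k_i}}(s_h^{k_i},a_h^{k_i})\bigr)$ between the optimism-based lower bound $2^n\gap_{\min}|K'|$ and the unrolled upper bound controlled by Azuma--Hoeffding plus the subset version of the bonus-sum lemma, then solve the resulting inequality for $|K'|$. You also correctly flag the one genuinely nontrivial ingredient, namely that Lemma~\ref{lem:bound of bonus functions over all rounds} must hold for an arbitrary subset $K'\subseteq[K]$ rather than all of $[K]$, which is exactly the generalization the paper proves.
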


 \begin{proof}
 We keep $h$ fixed. 
 

We denote by $K'$ the set of episodes where the gap at step $h$ is at least $2^n$, i.e.
\begin{align*}
    K' = \left\{k \in [K]: V^*_h(s_h^k) - Q_h^{\pi_k}(s_h^{k}, a_h^{k}) \geq 2^n \gap_{\min}\right\}.
\end{align*}

The goal is to bound the quantity $\sum_{i=1}^{|K'|}\left(Q_h^{k_i}(s_h^{k_i},a_h^{k_i}) - Q_h^{\pi_{k_i}}(s_h^{k_i}, a_h^{k_i})\right)$ from below and above with functions
$f_1(|K'|), f_2(|K'|)$ and then use the fact that $f_1(|K'|) \leq f_2(|K'|)$ to derive an upper bound on $|K'|$.

For the lower bound, we have that
\begin{align*}
    \sum_{i=1}^{|K'|}\left(Q_h^{k_i}(s_h^{k_i},a_h^{k_i}) - Q_h^{\pi_{k_i}}(s_h^{k_i}, a_h^{k_i})\right) &\geq \sum_{i=1}^{|K'|}\left(Q_h^{k_i}(s_h^{k_i},\pi_h^*(s_h^{k_i})) - Q_h^{\pi_{k_i}}(s_h^{k_i}, a_h^{k_i})\right) 
    \\& \geq \sum_{i=1}^{|K'|}\left(Q_h^*(s_h^{k_i},\pi_h^*(s_h^{k_i})) - Q_h^{\pi_{k_i}}(s_h^{k_i}, a_h^{k_i})\right) 
    \\&= \sum_{i=1}^{|K'|}\left(V_h^*(s_h^{k_i}) - Q_h^{\pi_{k_i}}(s_h^{k_i}, a_h^{k_i})\right) \geq 2^n \gap_{\min} |K'|,
\end{align*}

where the first inequality holds by the definition of the policy $\pi_{k_i}$, the second one follows because $Q^{k_i}_h(\cdot, \cdot)$ is an optimistic estimate of $Q^*_h(\cdot, \cdot)$ which happens with probability at least $1-\delta/2$ (see Lemma~\ref{lem:single step bellman error}) and the third one by the definition of $k_i$.

We get the upper bound on this quantity in the following way. For any $h' \in [H]$ we have
\begin{align*}
    Q_{h'}^k(s_{h'}^k, a_{h'}^k) - Q_{h'}^{\pi_k}(s_{h'}^k, a_{h'}^k) &\leq \sum_{s'\in \S}P_{h'}(s'|s_{h'}^k,a_{h'}^k)V^k_{h'+1}(s') + r_{h'}(s_{h'}^k, a_{h'}^k) + 2b_{h'}^k(s_{h'}^k, a_{h'}^k) - Q_{h'}^{\pi_k}(s_{h'}^k, a_{h'}^k)  \\
    &=\left\langle P_{h'}(\cdot|s_{h'}^k, a_{h'}^k), V_{h'+1}^k - V_{h'+1}^{\pi_k}\right\rangle + 2b_{h'}^k(s_{h'}^k, a_{h'}^k) \\
    &= V_{h'+1}^k(s_{h'+1}^k) - V_{h'+1}^{\pi_k}(s_{h'+1}^k) + \epsilon_{h'}^k + 2b_{h'}^k(s_{h'}^k, a_{h'}^k)  \\
    &=Q_{h'+1}^k(s_{h'+1}^k, a_{h'+1}^k) - Q_{h'+1}^{\pi_k}(s_{h'+1}^k, a_{h'+1}^k) + \epsilon_{h'}^k + 2b_{h'}^k(s_{h'}^k, a_{h'}^k)
\end{align*}
where we define $\epsilon_{h'}^k = \left\langle P_{h'}(\cdot|s_{h'}^k, a_{h'}^k), V_{h'+1}^k - V_{h'+1}^{\pi_k}\right\rangle - (V_{h'+1}^k(s_{h'+1}^k) - V_{h'+1}^{\pi_k}(s_{h'+1}^k))$ and the inequality follows from Lemma~\ref{lem:single step bellman error}.

We now take the summation over all $k \in |K'|, h \leq h' \leq H$ and we get

\begin{align*}
    \sum_{i=1}^{|K'|} \left( Q_h^{k_i}(s_h^{k_i}, a_h^{k_i}) - Q_h^{\pi_{k_i}}(s_h^{k_i}, a_h^{k_i}) \right) \leq \sum_{i=1}^{|K'|} \sum_{h'=h}^H\epsilon_{h'}^{k_i} + \sum_{i=1}^{|K'|} \sum_{h'=h}^H b_{h'}^{k_i}(s_{h'}^{k_i}, a_{h'}^{k_i}).
\end{align*}

We will bound each of the two terms on the RHS separately. 

For the first term, we notice that $x_j = \left\langle P_{j}(\cdot|s_{j}^{k_i}, a_{j}^{k_i}), V_{j+1}^{k_i} - V_{j+1}^{\pi_{k_i}}\right\rangle - (V_{j+1}^{k_i}(s_{j+1}^{k_i}) - V_{j+1}^{\pi_{k_i}}(s_{j+1}^{k_i}))$ forms a martingale difference sequence with zero mean and $|x_j| \leq 2H$. Hence, we can use Lemma~\ref{lem:azuma-hoeffding inequality} and  that for each $k\in K'$, with probability at least $1-\delta$ we have that

\begin{align*}
    \sum_{i=1}^k \sum_{j=1}^H \left( \left\langle P_{j}(\cdot|s_{j}^{k_i}, a_{j}^{k_i}), V_{j+1}^{k_i} - V_{j+1}^{\pi_{k_i}}\right\rangle - \left(V_{j+1}^{k_i}(s_{j+1}^{k_i}) - V_{j+1}^{\pi_{k_i}}(s_{j+1}^{k_i})\right) \right) \leq \sqrt{8k H^3 \log(1/\delta)}.
\end{align*}

If we take the union bound over all $k\in [K]$ we have that with probability at least $1-|K'|\delta$

\begin{align*}
    \sum_{i=1}^{|K'|} \sum_{h'=h}^H\epsilon_{h'}^{k_i} \leq \sqrt{8 |K'| H^3 \log(1/\delta)}.
\end{align*}

We now focus on the second term. Using Lemma~\ref{lem:bound of bonus functions over all rounds} we get that

\begin{align*}
    \sum_{i=1}^{|K'|} \sum_{h'=h}^H b_{h'}^{k_i}(s_{h'}^{k_i}, a_{h'}^{k_i}) \leq H + H(H+1)\dim_E(\FF, 1/T) + C H\sqrt{\textnormal{dim}_E(\FF, 1/T) |K'|\beta}
\end{align*}

and this happens with probability at least $1-\delta/32$. Hence, combining the upper and lower bound of $$\sum_{i=1}^{|K'|}\left(Q_h^{k_i}(s_h^{k_i},a_h^{k_i}) - Q_h^{\pi_{k_i}}(s_h^{k_i}, a_h^{k_i})\right)$$ we get that

\begin{align*}
    2^n \gap_{\min}|K'| \leq \sqrt{8 |K'| H^3 \log(1/\delta)} +  H + H(H+1)\dim_E(\FF, 1/T) + C H\sqrt{\textnormal{dim}_E(\FF, 1/T) |K'|\beta}.
\end{align*}

Solving for $|K'|$ gives us that

\begin{align*}
    |K'| \leq \frac{\tilde{C} H^4 \dim^2_E(\FF, 1/T) \log^2T \log(T\NN(\FF,\delta/T^2)/\delta)\log(\CC(\S \times \AA, \delta/T^2)T/\delta)}{4^n \gap_{\min}^2}.
\end{align*}

 \end{proof}

We are now ready to prove Lemma~\ref{lem:sum of suboptimalities over all K for some h}.

\begin{prevproof}{Lemma}{lem:sum of suboptimalities over all K for some h}
Throughout this proof we condition on the event described in Lemma~\ref{lem:indicator bound on every interval} which happens with probability at least $1-2K\delta$. Since $\gap_{\min} > 0$ whenever we do not take the optimal action, we have that either $V_h^*(s_k) - Q_h^*(s_h^k, a_h^k) = 0$ or $V_h^*(s_k) - Q_h^*(s_h^k, a_h^k) \geq \gap_{\min}$. Our approach is to divide the interval $[0, H]$ into $N = \lceil \log(H/\gap_{\min})\rceil$ intervals and count the number of $V_h^*(s_k) - Q_h^*(s_h^k, a_h^k)$ that fall into each interval. Notice that for every $V_h^*(s_k) - Q_h^*(s_h^k, a_h^k)$ that falls into interval $i$ we can get an upper bound of $V_h^*(s_k) - Q_h^*(s_h^k, a_h^k) \leq 2^i\gap_{\min}$ and this upper bound is essentially tight. Hence, we have that

\begin{align*}
    \sum_{k=1}^K \left( V_h^*(s_h^k) - Q_h^*(s_h^k, a_h^k) \right) &\leq \sum_{i=1}^N \sum_{k=1}^K \mathbbm{1}\left[ 2^i\gap_{\min} \geq V_h^*(s_h^k) - Q_h^*(s_h^k, a_h^k) \geq 2^{i-1}\gap_{\min} \right] \cdot 2^i \gap_{\min} \\
    &\leq \sum_{i=1}^N \sum_{k=1}^K \mathbbm{1}\left[  V_h^*(s_h^k) - Q_h^*(s_h^k, a_h^k) \geq 2^{i-1}\gap_{\min} \right] \cdot 2^i \gap_{\min} \\
    &\leq \sum_{i=1}^N \frac{\tilde{C} H^4 \dim^2_E(\FF, 1/T) \log^2T \log(T\NN(\FF,\delta/T^2)/\delta)\log(\CC(\S \times \AA, \delta/T^2)T/\delta)}{4^{i-1} \gap_{\min}^2} \cdot 2^i \gap_{\min} \\
    &= \sum_{i=1}^N \frac{\tilde{C} H^4 \dim^2_E(\FF, 1/T) \log^2T \log(T\NN(\FF,\delta/T^2)/\delta)\log(\CC(\S \times \AA, \delta/T^2)T/\delta)}{2^{i} \gap_{\min}} \\
    &\leq
    \frac{C H^4 \dim^2_E(\FF, 1/T) \log^2T \log(T\NN(\FF,\delta/T^2)/\delta)\log(\CC(\S \times \AA, \delta/T^2)T/\delta)}{ \gap_{\min}}
\end{align*}

where the first inequality holds by the definition of the intervals, the second due to the properties of the indicator function, the third because of Lemma~\ref{lem:indicator bound on every interval} and in the last two steps we just manipulate the constants. 
\end{prevproof}
\section{Proof of Theorem~\ref{thm:model-based main result}}
\label{sec:model-based proof}
In this section, our main goal is to prove 
Theorem~\ref{thm:model-based main result}.
We work with Assumption~\ref{as:model-based-assumption}. We follow
the same regret decomposition as in Appendix~\ref{sec:linear-mdp-proof}.

We first present a lemma that is crucial in bounding the regret 
of the algorithm.

\begin{lemma}\label{lem:linear mixture sum of suboptimalities over all K for some h}
If we pick $$\beta = 4H^2\log(2\NN(\FF,1/T)/\delta) + 4/H\left(C + \sqrt{H^2/4\log(4(K(K+1)/\delta)})\right),$$
for come constant C and for $h \in [H]$, then we have that with probability at least $1-(K+3)\delta$
\begin{align*}
    \sum_{k=1}^K \left( V_h^*(s_h^k) - Q_h^*(s_h^k, a_h^k) \right) \leq\frac{C H^4 \log(T\NN(\FF,1/T)/\delta) \dim^2_E(\FF, 1/T) }{ \gap_{\min}}.
\end{align*}

\end{lemma}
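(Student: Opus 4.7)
The plan is to mirror the argument used to prove Lemma~\ref{lem:sum of suboptimalities over all K for some h} in the model-free setting, replacing the LSVI-specific concentration arguments with their value-targeted-regression counterparts. Following Lemma~\ref{lem:model-free-regret-decomposition}, the total regret is controlled by $\sum_{k,h} \gap_h(s_h^k,a_h^k)$, so I would fix $h\in[H]$ and partition the interval $[0,H]$ into $N = \lceil \log(H/\gap_{\min}) \rceil$ dyadic buckets of the form $[2^{i-1}\gap_{\min},\, 2^i\gap_{\min})$ and bound the number of episodes whose gap lands in each bucket. Summing a geometric series of the form $\sum_i 2^i / 4^{i-1}$ then yields an overall bound that scales as $1/\gap_{\min}$, matching the stated target.

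To set up the per-bucket count, I would first assemble three supporting facts for the model-based algorithm. (i) An \emph{optimism} lemma: for the chosen $\beta$, with probability at least $1-\delta$, for every $(k,h,s,a)$,
\[
Q^*_h(s,a) \leq Q^k_h(s,a) \leq r_h(s,a) + \langle P_h(\cdot|s,a), V^k_{h+1}\rangle + 2b^k_h(s,a),
\]
which follows because the concentration of the value-targeted least-squares estimator (the referenced Lemma~\ref{lem:helper lemma from prior work, concentration of LSE}) places the true model $P_h$ inside the confidence set $\FF_h^k$ defined in Algorithm~\ref{alg:model-based q estimator}, so the diameter $b^k_h$ dominates $|\langle \hat P^k_h - P_h,\, V^k_{h+1}\rangle|$. (ii) A sub-sampling approximation lemma analogous to Proposition~\ref{prop:sub-sampled dataset is good approximation of the original}, showing $b^k_h$ is sandwiched between the bonus functions defined using constants $\beta/100$ and $100\beta$ on the full dataset $\ZZ^k_h$. (iii) A bound on the bonus summation over any subset $K'\subseteq[K]$ analogous to Lemma~\ref{lem:bound of bonus functions over all rounds}, yielding $\sum_{i=1}^{|K'|}\sum_{h'=h}^H b^{k_i}_{h'}(s^{k_i}_{h'},a^{k_i}_{h'}) = O(H^2\dim_E(\FF,1/T) + H\sqrt{\dim_E(\FF,1/T)\cdot|K'|\cdot\beta})$.

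Given these ingredients, the per-bucket argument proceeds exactly as in Lemma~\ref{lem:indicator bound on every interval}. Let $K'$ be the set of episodes with $V^*_h(s^k_h) - Q^{\pi_k}_h(s^k_h,a^k_h) \geq 2^{n-1}\gap_{\min}$. The lower bound uses optimism and the greedy choice of $\pi^k$ to get
\[
\sum_{i=1}^{|K'|}\bigl(Q^{k_i}_h(s^{k_i}_h,a^{k_i}_h) - Q^{\pi_{k_i}}_h(s^{k_i}_h,a^{k_i}_h)\bigr) \geq 2^{n-1}\gap_{\min}\,|K'|.
\]
For the upper bound, I would unroll the single-step Bellman inequality from optimism to obtain a telescoping sum of $Q^{k_i}_{h'} - Q^{\pi_{k_i}}_{h'}$ across $h'\geq h$, producing a martingale term $\sum_{i,h'}\varepsilon^{k_i}_{h'}$ (bounded by $O(\sqrt{H^3|K'|\log(1/\delta)})$ via Lemma~\ref{lem:azuma-hoeffding inequality}) plus twice the bonus sum from (iii). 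Setting the two bounds equal and solving for $|K'|$ gives $|K'| \leq \tilde{C} H^4 \log(T\NN(\FF,1/T)/\delta)\dim^2_E(\FF,1/T)/(4^{n-1}\gap_{\min}^2)$; summing over the $N$ buckets with $V^*_h-Q^*_h$ weighted by at most $2^i\gap_{\min}$ produces the claimed bound, after a union bound over all $k\in[K]$ and the three high-probability events.

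The main technical obstacle is the optimism step (i). Unlike the model-free proof, where the concentration is on a scalar regression target and one can directly invoke Lemma~\ref{lem:single step bellman error} from \cite{kong2021online}, here the concentration is on $\langle \hat P_h^k, V^k_{h+1}\rangle$ with $V^k_{h+1}$ itself data-dependent. This forces a uniform concentration argument over a $1/T$-cover of $\FF$ (which is the source of the $\log(2\NN(\FF,1/T)/\delta)$ factor in $\beta$), and one must verify that the sub-sampled confidence set $\FF_h^k$ still contains the true model so that $b^k_h$ really is an upper bound on the one-step error. Once that is in place, the translation from the model-free proof is essentially mechanical, and the absence of a rounding step in Algorithm~\ref{alg:model-based sampling routine} simplifies several $\log$ factors relative to the model-free analogue.
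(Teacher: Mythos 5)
Your proposal is correct and follows essentially the same route as the paper: the same dyadic peeling over gap intervals, the same three ingredients (optimism via the value-targeted LSE concentration plus the sub-sampled-dataset approximation, the eluder-dimension bound on the bonus sum over an arbitrary subset $K'$, and Azuma--Hoeffding for the martingale term), and the same lower-vs-upper bound on $\sum_i (Q_h^{k_i} - Q_h^{\pi_{k_i}})$ to solve for $|K'|$. The technical obstacle you flag (uniform concentration over a $1/T$-cover because $V_{h+1}^k$ is data-dependent) is exactly what the paper's Lemma~\ref{lem:helper lemma from prior work, concentration of LSE} and Lemma~\ref{lem:mixture mdp bound between the difference of estimate of Q and Q induced by policy} handle.
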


We are now ready to state the regret bound of our algorithm.
\begin{lemma}\label{lem:model-based regret guarantee}
There exists a constant $C$ and proper values of the parameter $\beta$ of Algorithm~\ref{alg:main algorithm} such that with probability at least $1- \lceil \log T \rceil e^{-\tau} - H(K+3)\delta$ the regret of the algorithm is bounded by
\begin{align*}
    \textnormal{Regret}(K) \leq \frac{C H^5 \log(T\NN(\FF,1/T)/\delta) \dim^2_E(\FF, 1/T) }{ \gap_{\min}}  + \frac{16 H^2 \tau}{3} + 2.
\end{align*}

The value of the parameter is $$\beta = 4H^2\log(2\NN(\FF,1/T)/\delta) + 4/H\left(C + \sqrt{H^2/4\log(4(K(K+1)/\delta)})\right).$$

where $\NN(\FF,1/T) = \arg\max_{h \in [H]} \NN(\FF_h,1/T)$.

In particular, the dependence of the regret in the time horizon $T$ is logarithmic.
\end{lemma}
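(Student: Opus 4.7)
The plan is to combine Lemma~\ref{lem:model-free-regret-decomposition} with Lemma~\ref{lem:linear mixture sum of suboptimalities over all K for some h}, mirroring the derivation of Lemma~\ref{lem:model-free regret guarantee} in the model-free case. First I would apply the regret decomposition of Lemma~\ref{lem:model-free-regret-decomposition}, which holds with probability at least $1 - \lceil \log T \rceil e^{-\tau}$, to get
\begin{align*}
\reg(K) \leq 2 \sum_{k=1}^K \sum_{h=1}^H \gap_h(s_h^k, a_h^k) + \frac{16 H^2 \tau}{3} + 2.
\end{align*}
Since $\gap_h(s_h^k, a_h^k) = V_h^*(s_h^k) - Q_h^*(s_h^k, a_h^k)$ by Definition~\ref{def:minimum gap}, it then suffices to bound $\sum_{k=1}^K \bigl(V_h^*(s_h^k) - Q_h^*(s_h^k, a_h^k)\bigr)$ separately for each layer $h \in [H]$.

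Second, for each fixed $h \in [H]$, I would invoke Lemma~\ref{lem:linear mixture sum of suboptimalities over all K for some h} with the prescribed choice of $\beta$, which yields
\begin{align*}
\sum_{k=1}^K \bigl(V_h^*(s_h^k) - Q_h^*(s_h^k, a_h^k)\bigr) \leq \frac{C H^4 \log(T\NN(\FF,1/T)/\delta) \dim^2_E(\FF, 1/T)}{\gap_{\min}},
\end{align*}
with per-layer failure probability at most $(K+3)\delta$. A union bound across the $H$ layers accounts for the factor of $H$ in the total failure probability $H(K+3)\delta$ appearing in the statement, and together with the event from the regret decomposition this gives an overall success probability at least $1 - \lceil \log T \rceil e^{-\tau} - H(K+3)\delta$.

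Third, summing the per-layer bound over $h \in [H]$ introduces one more factor of $H$, upgrading $H^4$ to $H^5$. Plugging the result into the regret decomposition and absorbing absolute constants (including the leading $2$) into $C$ produces
\begin{align*}
\reg(K) \leq \frac{C H^5 \log(T\NN(\FF,1/T)/\delta) \dim^2_E(\FF,1/T)}{\gap_{\min}} + \frac{16 H^2 \tau}{3} + 2,
\end{align*}
which is exactly the claimed bound.

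There is no real new obstacle at this final stage: the bulk of the analytic work is already encapsulated in Lemma~\ref{lem:linear mixture sum of suboptimalities over all K for some h}, whose proof (handled separately in the appendix) must reproduce the peeling argument of Lemma~\ref{lem:indicator bound on every interval} in the model-based regime using a concentration bound for the least-squares model estimator $\hat{P}_h^k$ in place of the $Q$-function concentration used in the model-free setting. The only bookkeeping point worth emphasizing here is the union bound across $h \in [H]$, which is what produces the $H(K+3)\delta$ term in the probability rather than the naive $(K+3)\delta$ that appears in the per-layer statement.
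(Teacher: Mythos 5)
Your proposal is correct and follows essentially the same route as the paper: condition on the events of Lemma~\ref{lem:model-free-regret-decomposition} and Lemma~\ref{lem:linear mixture sum of suboptimalities over all K for some h}, sum the per-layer bound over $h \in [H]$ to pick up the extra factor of $H$, and absorb constants. Your explicit note that the $H(K+3)\delta$ failure probability comes from a union bound over the $H$ layers is a detail the paper leaves implicit but is exactly the intended bookkeeping.
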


\begin{proof}
Throughout the proof, we condition on the events described in Lemma~\ref{lem:model-free-regret-decomposition},~\ref{lem:linear mixture sum of suboptimalities over all K for some h} which happen with probability at least $1- \lceil \log T \rceil e^{-\tau} - H(K+3)\delta$.

From Lemma~\ref{lem:model-free-regret-decomposition} we have that
\begin{align*}
    \text{Regret}(K) \leq 2 \sum_{k=1}^K\sum_{h=1}^H \gap_h(s_h^k, a_h^k) + \frac{16 H^2 \tau}{3} + 2.
\end{align*}

We can bound the first term on the RHS using Lemma~\ref{lem:linear mixture sum of suboptimalities over all K for some h} as follows
\begin{align*}
    2 \sum_{k=1}^K\sum_{h=1}^H \gap_h(s_h^k, a_h^k) = 2 \sum_{k=1}^K\sum_{h=1}^H  \left( V_h^*(s_h^k) - Q_h^*(s_h^k, a_h^k) \right) \leq \frac{2C H^5 \log(T\NN(\FF,1/T)/\delta) \dim^2_E(\FF, 1/T) }{ \gap_{\min}}.
\end{align*}

This gives us the result.

\end{proof}

The only thing that we need to do now is to bound the number of rounds that
we update our policy. Since we are using exactly the same sensitivity score and
update probability as in \cite{kong2021online}, this follows from their result.

\begin{lemma}
\label{lem:model-based-bound on summation of sensitivity}
\citep{kong2021online}
With probability at least $1-\delta/32$ for any fixed $h \in [H]$ we have that the sub-sampled dataset $\hat{Z}_h^k, k \in [K]$ changes at most $$S_{\max} = C\cdot\log(T\NN(F_h,\sqrt{\delta/(64T^3)})/\delta)\dim_E(\FF_h,1/T)\log^2T$$ times.
\end{lemma}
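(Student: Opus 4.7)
The plan is to reduce the number of times the sub-sampled dataset changes to the number of successful "add" events in Algorithm~\ref{alg:model-based sampling routine}, and then bound the latter via a sensitivity-to-eluder-dimension argument. Since $\hat{\ZZ}_h^k$ differs from $\hat{\ZZ}_h^{k-1}$ precisely when the Bernoulli$(p_{z_h^k})$ trial in the sampling routine succeeds, the number of changes equals the number of successful trials across $k \in [K]$.

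The first step is to control the sum of Bernoulli indicators by their conditional means. Because $p_{z_h^k}$ is measurable with respect to the history up to round $k-1$, the compensated indicators $\mathbbm{1}[\text{add at }k] - p_{z_h^k}$ form a martingale difference sequence, so a Freedman- or Bernstein-type concentration inequality yields, with probability at least $1 - \delta/32$,
\[
\#\{k : \hat{\ZZ}_h^k \neq \hat{\ZZ}_h^{k-1}\} \;\lesssim\; \sum_{k=1}^K p_{z_h^k} + \log(1/\delta).
\]
By the definition of $p_z$ in Algorithm~\ref{alg:model-based sampling routine}, the first term on the right-hand side is at most $\log\!\big(T\NN(\FF_h,\sqrt{\delta/(64T^3)})/\delta\big) \cdot \sum_{k=1}^K \textnormal{sensitivity}_{\hat{\ZZ}_h^{k-1}, \FF_h}(z_h^k)$ up to a constant factor.

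The combinatorial heart of the argument is then the bound $\sum_{k=1}^K \textnormal{sensitivity}_{\hat{\ZZ}_h^{k-1}, \FF_h}(z_h^k) = O(\dim_E(\FF_h, 1/T) \log T)$, which I would establish by the standard peeling argument on the data-dependent norm used in \cite{russo2013eluder, wang2020reinforcement, kong2021online}: partition the rounds into $O(\log T)$ scales according to the magnitude of the sensitivity, and on each scale invoke the definition of the eluder dimension to cap the number of "large-sensitivity" rounds by $\dim_E(\FF_h, 1/T)$. Combining this with the concentration bound yields the claimed $S_{\max}$.

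The main obstacle is the adaptive nature of $\hat{\ZZ}_h^{k-1}$: sensitivities are measured against an online-sampled subset whose random evolution depends on prior Bernoulli outcomes, so the sum being controlled is adaptive rather than fixed. The remedy is to first show, via an analogue of Proposition~\ref{prop:sub-sampled dataset is good approximation of the original} for the model-based setting, that $\hat{\ZZ}_h^{k-1}$ is a faithful proxy for the full dataset $\ZZ_h^{k-1}$ in the relevant data-dependent norm, so that the sensitivity against $\hat{\ZZ}_h^{k-1}$ is, up to constants, the sensitivity against $\ZZ_h^{k-1}$ — and the latter is what the eluder-dimension peeling controls directly. Since the sampling routine used here is identical to that of \cite{kong2021online} and no step in the chain of reasoning uses the model-free versus model-based distinction, the proof is essentially a direct port of their adaptivity lemma, consistent with the citation attached to the statement.
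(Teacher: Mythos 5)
Your proposal is correct and follows essentially the same route as the paper, which simply defers this lemma to the adaptivity analysis of \cite{kong2021online}: the number of dataset changes is the number of successful Bernoulli add events, whose conditional means are controlled by the sensitivities (via the definition of $p_z$), the sensitivity sum is bounded by $O(\dim_E(\FF_h,1/T)\,\poly(\log T))$ through the eluder-dimension peeling argument, and a Freedman-type martingale concentration converts the bound on $\sum_k p_{z_h^k}$ into the high-probability bound $S_{\max}$. You also correctly flag the one subtle point — that sensitivities are measured against the adaptively sub-sampled $\hat{\ZZ}_h^{k-1}$ and must be related to those against $\ZZ_h^{k-1}$ via the dataset-approximation event (Lemma~\ref{lem: mixture mdp: sub-sampled dataset is good approximation to original one}) — which is exactly how the cited argument handles it.
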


We are now ready to prove Theorem~\ref{thm:model-based main result}.

\begin{prevproof}{Theorem}{thm:model-based main result}

=The proof of this theorem follows by combining Lemma~\ref{lem:model-based regret guarantee}
and taking a union on the result of Lemma~\ref{lem:model-based-bound on summation of sensitivity}
and setting the error probability accordingly.
\end{prevproof}

\subsection{Supporting Lemmas: Theorem~\ref{thm:model-based main result}}
In this section our goal is to prove the supporting lemmas of Theorem~\ref{thm:model-based main result}.

Recall that our approach 
is to modify the algorithm in~\cite{kong2021online} to work in this 
setting and use a similar analysis as in 
Appendix~\ref{sec:linear-mdp-proof}. Unlike 
Appendix~\ref{sec:linear-mdp-proof}
where we approximate the optimal Q-function, here we try to 
estimate
the true transition kernel. 
Let 
$\ZZ_h^k = \{\left(s_h^{\tau}, a_h^{\tau}, V_{h+1}^{\tau}(\cdot)\right)\}_{\tau \in [k-1]}$ be the dataset up to episode $k$ and $\hat{\ZZ}_h^k$ the sub-sampled dataset.
In each episode $k$, we update our policy whenever we add an element
in the dataset for some $h \in [H]$. Recall that whenever we perform an update our policy becomes:
\begin{align*}
    &Q_{H+1}^k(s, a) = 0, \\
    &V_{H+1}^k(s) = 0, \\
    &Q_h^k(s,a) = \min\{r_h(s, a) +  \langle\hat{P}^k_h(\cdot| s, a), V^k_{h+1}\rangle + b_h^k(s,a), H\}, \\
    &V_h^k(s) = \max_{a \in \AA} Q_h^k(s, a)
\end{align*}
for some $\hat{P}_h^k, b_h^k(\cdot, \cdot)$ that we will define shortly. 
We get the policy $\pi^k_h(s)$ by picking greedily the action that maximizes
the estimate $Q_h^k(s,a)$.

The least-squares estimate of the model is
\begin{align*}
    \hat{P}_h^{k+1} = \arg\min_{P \in \PP_h} \sum_{k'=1}^k \left(\langle P(\cdot|s_h^{k'}, a_h^{k'}), V_{h+1}^{k'}\rangle - y_h^{k'} \right)^2, y_h^{k'} = V_{h+1}^{k'}(s_{h+1}^{k'}).
\end{align*}






Recall the definition of the function class 
that we use in the derivation of our results.
\begin{definition}
\label{def:model-based function class}
Let $\VV$ be the set all measurable functions that are bounded by $H$. We now let $f : \S \times \AA \times \VV \rightarrow \R$ and define the following set:
\begin{align}
    \FF_h = \left\{f: \exists P_h \in \PP_h \text{ so that } f(s, a, V) = \int_\S P_h(s'| s,a) V(s') ds', \forall (s, a, V) \in \S \times \AA \times \VV \right\}.
\end{align}
\end{definition}

Recall that in this setting the norm of a function
with respect to a dataset $\ZZ$ is
$$||f||_{\ZZ} = \sqrt{\sum_{z = (s_z,a_z,V_z(\cdot)) \in \ZZ} \left(f(s_z, a_z, V_z(\cdot)\right)^2}.$$

Recall also that the bonus function is
\begin{align*}
    b_h^k(s,a) = \sup_{f_1, f_2 \in \FF_h: \min\{||f_1 - f_2||_{\hat{\ZZ}_h^k}, T(H+1)^2\} \leq \beta} |f_1(s,a,V_{h+1}^k(\cdot)) - f_2(s,a,V_{h+1}^k(\cdot))|.
\end{align*}

The parameter $\beta$ will be defined later in a way that 
will ensure optimism. 





First, we need to show that at for every $k \in [K], h \in [H]$, the
sub-sampled dataset approximates the original one. Our approach
is inspired by \citep{kong2021online}.

 We
define the following quantities

\begin{align*}
    \underline{\CC}_h^k(\alpha) &= \left\{(f_1, f_2)\in \FF_h \times \FF_h: ||f_1 - f_2||^2_{\ZZ_h^k} \leq \alpha/100\right\}\\
    \hat{\CC}_h^k(\alpha) &=
    \left\{(f_1, f_2)\in \FF_h \times \FF_h: \min\{||f_1 - f_2||^2_{\hat{\ZZ}_h^k},T(H+1)^2\} \leq \alpha\right\}\\
    \overline{\CC}_h^k(\alpha) &=\left\{(f_1, f_2)\in \FF_h \times \FF_h: ||f_1 - f_2||^2_{\ZZ_h^k} \leq 100\alpha\right\}.
\end{align*}

We also let 
\begin{align*}
    \underline{b}_h^k(s,a) = \sup_{f_1, f_2 \in \underline{\CC}_h^k(\beta)} |f_1(s,a,V_{h+1}^k(\cdot)) - f_2(s,a,V_{h+1}^k(\cdot))|\\
     \overline{b}_h^k(s,a) = \sup_{f_1, f_2 \in \overline{\CC}_h^k(\beta)} |f_1(s,a,V_{h+1}^k(\cdot)) - f_2(s,a,V_{h+1}^k(\cdot))|.
\end{align*}

Our goal is to show that $\underline{\CC}_h^k(\alpha) \subseteq \hat{\CC}_h^k(\alpha) \subseteq \overline{\CC}_h^k(\alpha)$ with high probability. Let $\EE_h^k(\alpha)$ denote the event that this holds. We also denote by $\EE_h^k = \cap_{n=0}^\infty \EE_h^k(100^n\beta)$. This event will show us that $\hat{\ZZ}_h^k$ is a good approximation to $\ZZ_h^k$.\\
Notice that whenever this happens, it holds that
$\underline{b}_h^k(s,a) \leq b_h^k(s,a) \leq \overline{b}_h^k(s,a)$.

The following lemma which is inspired by~\cite{kong2021online} establishes that fact.

\begin{lemma}\label{lem: mixture mdp: sub-sampled dataset is good approximation to original one}
The probability that all the events $\EE_h^k$ happen satisfies
\begin{align*}
    \Pr\left(\bigcap_{k=1}^K \bigcap_{h=1}^H \EE_h^k \right) \geq 1 - \delta.
\end{align*}
\end{lemma}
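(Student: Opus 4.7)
The plan is to mirror Proposition~1 of \cite{kong2021online} but carried out for the model-based function class $\FF_h$ from Definition~\ref{def:model-based function class}, and then take a union bound over the relevant parameters. The core is a sensitivity-based concentration argument: for each fixed pair $(f_1,f_2)$, the sub-sampled squared norm $\|f_1-f_2\|_{\hat{\ZZ}_h^k}^2$ is an unbiased estimate of $\|f_1-f_2\|_{\ZZ_h^k}^2$, because each element $z=(s_h^{k'},a_h^{k'},V_{h+1}^{k'})$ is inserted into $\hat{\ZZ}_h^k$ with probability $p_z$ accompanied by $1/p_z$ copies. The sensitivity score is precisely designed so that $(f_1(z)-f_2(z))^2/p_z$ is of order $\|f_1-f_2\|_{\ZZ}^2+\beta$ uniformly over $(f_1,f_2)$, which controls both the maximal increment and the predictable variance of the resulting martingale.

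First I would fix a scale $\alpha=100^n\beta$ and a pair $(f_1,f_2)$, and write
\[
\|f_1-f_2\|_{\hat{\ZZ}_h^k}^2 - \|f_1-f_2\|_{\ZZ_h^k}^2 \;=\; \sum_{k'=1}^{k-1} X_{k'},
\]
where $X_{k'}$ is the (zero-mean) contribution from including $1/p_{z_h^{k'}}$ copies of $z_h^{k'}$ with probability $p_{z_h^{k'}}$. The sequence $\{X_{k'}\}$ is a martingale difference sequence adapted to the filtration generated by prior episodes and coin flips. Using the definition of the sensitivity score, the truncation $\min\{\cdot,T(H+1)^2\}$, and the lower bound on $p_z$ in Algorithm~\ref{alg:model-based sampling routine}, one shows $|X_{k'}|\lesssim \alpha/\log(T\NN(\FF,\cdot)/\delta)$ and the predictable quadratic variation is of the same order times $\alpha$. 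Freedman's inequality then yields
\[
\Pr\bigl(|\,\|f_1-f_2\|_{\hat{\ZZ}_h^k}^2 - \|f_1-f_2\|_{\ZZ_h^k}^2\,| > \alpha/2\bigr) \;\leq\; \delta',
\]
for a sufficiently small $\delta'$ once $C$ in the sensitivity formula is large enough.

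Next I would remove the ``for a fixed pair'' caveat by a covering argument on $\FF_h$ at scale $1/T$: any $f_1,f_2\in\FF_h$ can be $\varepsilon$-approximated by some $(\tilde f_1,\tilde f_2)\in\CC(\FF_h,1/T)^2$, introducing an additive error of order $T\cdot(1/T)^2=1/T$ in either squared norm, which is absorbed into the constants. Taking a union bound over the $\NN(\FF_h,1/T)^2$ pairs in the cover, over all $(h,k)\in[H]\times[K]$, and over the countably many scales $\alpha=100^n\beta$ (the contribution of large $n$ is killed by the $T(H+1)^2$ truncation, so only $O(\log T)$ scales matter), and plugging in $\delta'=\delta/\bigl(HK\cdot \NN(\FF_h,1/T)^2\cdot \log T\bigr)$ absorbed into the $\log(T\NN(\FF,\sqrt{\delta/(64T^3)})/\delta)$ factor of $p_z$, shows that for all $(h,k,n)$ and all $(f_1,f_2)\in\FF_h^2$, the two squared norms differ by at most $\alpha/2$. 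This simultaneously implies the inclusions $\underline{\CC}_h^k(\alpha)\subseteq\hat{\CC}_h^k(\alpha)\subseteq\overline{\CC}_h^k(\alpha)$, i.e.\ $\EE_h^k(\alpha)$, and hence $\EE_h^k$.

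The main technical obstacle is the concentration step: one must verify that the variance proxy actually decouples from $(f_1,f_2)$ and scales linearly in $\alpha$, which is exactly what the sensitivity definition is engineered to provide, and one must handle the truncation $\min\{\cdot,T(H+1)^2\}$ when $\|f_1-f_2\|_{\ZZ_h^k}^2$ is very large so that the ``clipped'' version is still compared correctly on both sides. Once these are in place, the union bound is routine and yields the claimed $1-\delta$ probability.
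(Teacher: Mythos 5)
Your plan is the same in spirit as the paper's: a sensitivity-based Freedman concentration for the sub-sampled squared norm of each pair $(f_1,f_2)$ in a finite cover, followed by union bounds over the cover, over $(h,k)$, and over the $O(\log T)$ scales $\alpha=100^n\beta$ that survive the truncation at $T(H+1)^2$. However, two steps as you state them would not go through. First, your claimed conclusion of the concentration step --- a uniform two-sided \emph{additive} bound $\bigl|\,\|f_1-f_2\|_{\hat{\ZZ}_h^k}^2-\|f_1-f_2\|_{\ZZ_h^k}^2\,\bigr|\le\alpha/2$ for \emph{all} pairs in the cover --- is false: the predictable variance of the martingale scales like $(\alpha/C_1)\,\|f_1-f_2\|^2_{\ZZ_h^k}$, so for pairs with $\|f_1-f_2\|^2_{\ZZ_h^k}$ of order $T(H+1)^2\gg\alpha$ Freedman only gives a deviation of order $\sqrt{\alpha\,T(H+1)^2}$, far larger than $\alpha$. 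The paper avoids this by truncating the martingale (the increments $Y_i$ are frozen to their deterministic values once $Z_i>2000000\alpha$), which yields additive control only for pairs with small original norm; the reverse inclusion $\hat{\CC}_h^k(\alpha)\subseteq\overline{\CC}_h^k(\alpha)$ is then proved separately by contrapositive with a three-case analysis, and the whole argument is run \emph{conditionally} on $\bigcap_{i<k}\EE_h^i$ (hence the telescoping $\Pr(\EE_h^1\cdots\EE_h^{k-1})-\Pr(\EE_h^1\cdots\EE_h^{k})$ rather than a flat union bound over $k$), because one needs the previous events to certify $Z_k\le 2000000\alpha$ and hence that $\sum_i Y_i$ really equals the sub-sampled norm.

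Second, your cover at scale $1/T$ is too coarse. The discretization error in the squared norm over a dataset $\ZZ$ is controlled by $2\sqrt{|\ZZ|}\,\varepsilon$ times the norm, and while $|\ZZ_h^k|\le T$, the sub-sampled dataset counted with multiplicities $1/p_z$ can have $|\hat{\ZZ}_h^k|$ as large as $64T^3/\delta$ (Lemma~\ref{lem:mixture-mdp-bound-sampled-dataset}), so with $\varepsilon=1/T$ the error term $2\sqrt{|\hat{\ZZ}_h^k|}\,\varepsilon$ grows like $\sqrt{T/\delta}$ rather than being $O(1)$. This is exactly why the paper (and Algorithm~\ref{alg:model-based sampling routine}) uses the cover $\CC(\FF_h,\sqrt{\delta/(64T^3)})$, for which $2\sqrt{|\hat{\ZZ}_h^k|}\cdot\sqrt{\delta/(64T^3)}\le 2$. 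Both issues are repairable within your framework, but as written the concentration claim and the covering scale are where the proof would break.
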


To prove Lemma~\ref{lem: mixture mdp: sub-sampled dataset is good approximation to original one} we need the following concentration inequality proved in~\cite{freedman1975tail}.

\begin{lemma}\label{lem:freedman's concetration lemma}
Let $\{Y_i\}_{i \in \N}$ be a real-valued martingale with difference sequence $\{X_i\}_{i \in \N}$. Let $R$ be a uniform bound on $X_i$. Fix some $n \in \N$ and let $\sigma^2$ be a number such that
\begin{align*}
    \sum_{i=1}^n \E[X_i^2|Y_0, \ldots, Y_{i-1}] \leq \sigma^2.
\end{align*}

Then, for all $t \geq 0$ we have that
\begin{align*}
    \Pr(|Y_n - Y_0| \geq t) \leq 2\exp{\left\{ -\frac{t^2/2}{\sigma^2 + Rt/3}\right\}}.
\end{align*}
\end{lemma}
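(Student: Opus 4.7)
The plan is to establish this via the classical exponential-moment (Bernstein/Chernoff) argument for martingales. The starting point is the one-sided bound: for any $\lambda > 0$, Markov's inequality gives
\begin{equation*}
\Pr(Y_n - Y_0 \geq t) \leq e^{-\lambda t}\, \E\bigl[e^{\lambda(Y_n - Y_0)}\bigr].
\end{equation*}
The two-sided bound $\Pr(|Y_n - Y_0| \geq t) \leq 2\exp\{-(t^2/2)/(\sigma^2 + Rt/3)\}$ will follow by applying the one-sided bound to $\{Y_i\}$ and to $\{-Y_i\}$ (which is also a martingale with the same bound $R$ and same conditional-variance control) and union-bounding.

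The heart of the proof is to build a supermartingale out of the exponential of $\lambda Y_i$. I would use the elementary inequality
\begin{equation*}
e^{x} \leq 1 + x + \frac{x^2/2}{1 - x/3} \quad \text{for } x < 3,
\end{equation*}
which is a standard consequence of the series expansion of $e^x - 1 - x$ combined with $k! \geq 2 \cdot 3^{k-2}$ for $k \geq 2$. Applying this to $x = \lambda X_i$ under the constraint $\lambda R < 3$, and using $|X_i| \leq R$ together with $\E[X_i \mid \mathcal{F}_{i-1}] = 0$, yields
\begin{equation*}
\E\bigl[e^{\lambda X_i} \,\big|\, \mathcal{F}_{i-1}\bigr] \leq 1 + \frac{\lambda^2/2}{1 - \lambda R / 3}\, \E[X_i^2 \mid \mathcal{F}_{i-1}] \leq \exp\!\left(\frac{\lambda^2/2}{1 - \lambda R / 3}\, \E[X_i^2 \mid \mathcal{F}_{i-1}]\right),
\end{equation*}
where $\mathcal{F}_{i-1} = \sigma(Y_0, \ldots, Y_{i-1})$ and the last step uses $1 + u \leq e^u$. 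This shows that
\begin{equation*}
M_i \,=\, \exp\!\left(\lambda(Y_i - Y_0) - \frac{\lambda^2/2}{1 - \lambda R/3}\sum_{j=1}^{i} \E[X_j^2 \mid \mathcal{F}_{j-1}]\right)
\end{equation*}
is a supermartingale with $\E M_0 = 1$, hence $\E M_n \leq 1$. Combined with the deterministic hypothesis $\sum_{j=1}^{n} \E[X_j^2 \mid \mathcal{F}_{j-1}] \leq \sigma^2$, this yields $\E[e^{\lambda(Y_n - Y_0)}] \leq \exp\bigl(\tfrac{\lambda^2 \sigma^2/2}{1 - \lambda R/3}\bigr)$.

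Plugging back into the Markov step and optimizing, I would choose $\lambda = t/(\sigma^2 + Rt/3)$, which satisfies $\lambda R/3 < 1$ and makes the exponent equal to $-(t^2/2)/(\sigma^2 + Rt/3)$, giving the one-sided tail bound. Doubling for the two-sided version completes the proof. The main obstacle, though essentially notational, is being careful that the variance hypothesis in the statement is treated as almost-sure (so that the supermartingale's deterministic compensator can absorb it); if one wanted to handle it only in expectation, one would need to introduce a stopping time $\tau = \min\{i : \sum_{j \leq i} \E[X_j^2 \mid \mathcal{F}_{j-1}] > \sigma^2\}$ and apply the optional stopping theorem to $M_{i \wedge \tau}$, but under the stated hypothesis this is unnecessary.
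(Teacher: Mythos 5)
Your proof is correct, but note that the paper does not actually prove this lemma at all: it imports it verbatim as a known result, citing Freedman (1975), and uses it as a black box in the proof of Lemma~\ref{lem:helper lemma to prove approximation of sub-sampled dataset}. What you have written is essentially a reconstruction of the classical proof from that reference: Chernoff's bound, the exponential supermartingale $M_i$ with the predictable compensator $\frac{\lambda^2/2}{1-\lambda R/3}\sum_{j\le i}\E[X_j^2\mid\mathcal{F}_{j-1}]$, optimization at $\lambda = t/(\sigma^2+Rt/3)$, and a union bound over $\pm Y$. All the steps check out, including the verification that $\lambda R<3$ at the optimizer and the observation that the almost-sure form of the variance hypothesis lets you absorb the compensator directly without a stopping time. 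One small point deserving care: the elementary inequality $e^x\le 1+x+\frac{x^2/2}{1-x/3}$ is true for all $x<3$, but the justification via $k!\ge 2\cdot 3^{k-2}$ applied to $\sum_{k\ge 2}x^k/k!$ only yields $e^x\le 1+x+\frac{x^2/2}{1-|x|/3}$ directly, which for $x<0$ is \emph{weaker} than what you assert (since $1-|x|/3\le 1-x/3$ there); either invoke the pointwise inequality as a known fact, or sidestep the issue by bounding $\E[e^{\lambda X_i}\mid\mathcal{F}_{i-1}]$ moment by moment using $\E[|X_i|^k\mid\mathcal{F}_{i-1}]\le R^{k-2}\E[X_i^2\mid\mathcal{F}_{i-1}]$, which gives the same conclusion under $\lambda R<3$. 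With that patch the argument is complete and self-contained, which is more than the paper provides.
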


Moreover, we need a bound on the number of elements that are in the sub-sampled dataset.
This is established in~\cite{kong2021online}.

\begin{lemma}\citep{kong2021online}
\label{lem:mixture-mdp-bound-sampled-dataset}
We have that with probability at least $1 - \delta/(64T)$, we have 
    $|\hat{\ZZ}_h^k| \leq 64T^3/\delta$
for all $\delta > 0$.

\end{lemma}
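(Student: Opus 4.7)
\medskip

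The plan is to show that $\E[|\hat{\ZZ}_h^k|] \le T$ for every pair $(h,k)$, then apply Markov's inequality at the appropriate scale and close with a union bound. The key observation enabling the expectation bound is that the sampling routine is carefully designed to be an unbiased estimator at the element level, so each round contributes exactly one in expectation even though individual contributions can be enormous.

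First, I would analyze the contribution of a single round $k' < k$. According to both the model-free sampling routine (Algorithm~\ref{alg:model-free sampling routine}) and the model-based one (Algorithm~\ref{alg:model-based sampling routine}), the round adds $1/p_{z_h^{k'}}$ copies of (a rounded version of) $z_h^{k'}$ with probability $p_{z_h^{k'}}$ and $0$ otherwise, where $1/p_{z_h^{k'}}$ is an integer by construction. Let $X_{h,k'}$ denote this random contribution and let $\mathcal{F}_{k'}$ be the $\sigma$-algebra generated by the history up to the moment just before the $k'$-th Bernoulli trial. Because $p_{z_h^{k'}}$ is $\mathcal{F}_{k'}$-measurable, we get $\E[X_{h,k'} \mid \mathcal{F}_{k'}] = p_{z_h^{k'}} \cdot (1/p_{z_h^{k'}}) = 1$, and the tower property gives $\E[X_{h,k'}] = 1$. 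Summing over $k' \in [k-1]$ yields $\E[|\hat{\ZZ}_h^k|] \le K - 1 \le T$.

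Next, for each fixed $(h, k)$ I would apply Markov's inequality at the level $64T^3/\delta$:
\[
\Pr\!\left(|\hat{\ZZ}_h^k| \ge 64T^3/\delta\right) \;\le\; \frac{\E[|\hat{\ZZ}_h^k|]}{64T^3/\delta} \;\le\; \frac{\delta}{64T^2}.
\]
A union bound over all $(h, k) \in [H] \times [K]$, together with the identity $HK = T$, then gives a total failure probability of at most $T \cdot \delta/(64T^2) = \delta/(64T)$, which is exactly the claimed bound.

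The proof is short; the only point requiring care is the adaptivity of $p_{z_h^{k'}}$, which depends on the random sub-sampled dataset $\hat{\ZZ}_h^{k'}$ that itself is built from earlier Bernoulli outcomes. Setting up the filtration $\mathcal{F}_{k'}$ so that the $k'$-th coin flip is conditionally independent of the past given $\mathcal{F}_{k'}$ resolves this cleanly. I would also note that stronger concentration tools (Bernstein or Freedman) do \emph{not} improve the bound here: the conditional variance $\Var(X_{h,k'}\mid \mathcal{F}_{k'}) = (1 - p_{z_h^{k'}})/p_{z_h^{k'}}$ can be arbitrarily large because $p_{z_h^{k'}}$ can be arbitrarily small, so a polynomial-in-$T$ inflation over the expectation is intrinsic to any high-probability bound obtainable from this inverse-propensity construction.
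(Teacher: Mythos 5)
Your proposal is correct, and it is the standard argument for this lemma: the paper itself imports the statement from \citet{kong2021online} without reproducing a proof, and the cited source establishes it in exactly the way you describe --- the inverse-propensity construction makes each round's contribution to $|\hat{\ZZ}_h^k|$ equal to $1$ in conditional expectation, so $\E[|\hat{\ZZ}_h^k|]\le T$, and Markov's inequality at level $64T^3/\delta$ followed by a union bound over the $HK=T$ pairs yields failure probability $\delta/(64T)$. Your closing remark about why Freedman-type concentration cannot sharpen the bound (unbounded conditional variance as $p_z\to 0$) is a correct and worthwhile observation, though not needed for the claim.
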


The subsequent lemma shows that, indeed, whenever $\EE_h^k$ happens
the sub-sampled dataset is a good approximation of the original one. It
was proved in \citep{kong2021online}.

\begin{lemma}\citep{kong2021online}
\label{lem:linear-mixture-mdp-approximation-under-event}
Whenever the event $\EE_h^k$ happens, it holds that
\begin{align*}
    \frac{1}{10000} ||f_1 - f_2||^2_{\ZZ_h^k} \leq \min\{||f_1-f_2||^2_{\hat{\ZZ}_h^k}, T(H+1)^2 \} \leq 10000 ||f_1 - f_2||^2_{\ZZ_h^k}, \textnormal{ if } ||f_1 - f_2||^2_{\ZZ_h^k} > 100\beta
\end{align*}
and 
\begin{align*}
    \min\{||f_1-f_2||^2_{\hat{\ZZ}_h^k}, T(H+1)^2 \} \leq 10000\beta, \textnormal{ if } ||f_1 - f_2||^2_{\ZZ_h^k} \leq 100\beta.
\end{align*}
\end{lemma}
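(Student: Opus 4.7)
The plan is to exploit the nested family of events $\EE_h^k = \bigcap_{n \geq 0} \EE_h^k(100^n \beta)$ and translate set inclusions like $\underline{\CC}_h^k(\alpha) \subseteq \hat{\CC}_h^k(\alpha) \subseteq \overline{\CC}_h^k(\alpha)$ into bounds on the norms by choosing the scale $\alpha = 100^n \beta$ adapted to $\|f_1 - f_2\|_{\ZZ_h^k}^2$. The two halves of each inclusion give the two directions of comparison: $\hat{\CC}_h^k(\alpha) \subseteq \overline{\CC}_h^k(\alpha)$, read contrapositively, tells us that $\|f_1 - f_2\|_{\ZZ_h^k}^2 > 100 \alpha$ forces $\min\{\|f_1 - f_2\|_{\hat{\ZZ}_h^k}^2, T(H+1)^2\} > \alpha$ (a lower bound on the sub-sampled norm), while $\underline{\CC}_h^k(\alpha) \subseteq \hat{\CC}_h^k(\alpha)$ tells us that $\|f_1 - f_2\|_{\ZZ_h^k}^2 \leq \alpha/100$ forces $\min\{\|f_1 - f_2\|_{\hat{\ZZ}_h^k}^2, T(H+1)^2\} \leq \alpha$ (an upper bound).

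For the first assertion, I would assume $\|f_1 - f_2\|_{\ZZ_h^k}^2 > 100 \beta$ and pick the unique integer $n \geq 1$ with $100^n \beta < \|f_1 - f_2\|_{\ZZ_h^k}^2 \leq 100^{n+1}\beta$. To get the upper bound $\min\{\|f_1 - f_2\|_{\hat{\ZZ}_h^k}^2, T(H+1)^2\} \leq 10000\, \|f_1 - f_2\|_{\ZZ_h^k}^2$, I would apply $\underline{\CC}_h^k(100^{n+2}\beta) \subseteq \hat{\CC}_h^k(100^{n+2}\beta)$, since $\|f_1-f_2\|_{\ZZ_h^k}^2 \leq 100^{n+1}\beta = (100^{n+2}\beta)/100$; this yields the bound $100^{n+2}\beta = 10000 \cdot 100^n \beta < 10000\, \|f_1-f_2\|_{\ZZ_h^k}^2$. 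For the lower bound $\min\{\cdot\} \geq \|f_1 - f_2\|_{\ZZ_h^k}^2/10000$, I would apply the contrapositive of $\hat{\CC}_h^k(100^{n-1}\beta) \subseteq \overline{\CC}_h^k(100^{n-1}\beta)$: since $\|f_1-f_2\|_{\ZZ_h^k}^2 > 100^n\beta = 100 \cdot 100^{n-1}\beta$, we get $\min\{\|f_1 - f_2\|_{\hat{\ZZ}_h^k}^2, T(H+1)^2\} > 100^{n-1}\beta = 100^{n+1}\beta/10000 \geq \|f_1-f_2\|_{\ZZ_h^k}^2/10000$.

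For the second assertion, assuming $\|f_1 - f_2\|_{\ZZ_h^k}^2 \leq 100 \beta$, I would simply choose $\alpha = 100^2 \beta = 10000 \beta$ so that $\alpha/100 = 100\beta \geq \|f_1 - f_2\|_{\ZZ_h^k}^2$, i.e., $(f_1,f_2) \in \underline{\CC}_h^k(10000\beta)$. The inclusion $\underline{\CC}_h^k(10000\beta) \subseteq \hat{\CC}_h^k(10000\beta)$ provided by $\EE_h^k(10000\beta)$ then gives directly $\min\{\|f_1-f_2\|_{\hat{\ZZ}_h^k}^2, T(H+1)^2\} \leq 10000\beta$.

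The argument is conceptually routine once the indexing is set up, so the only real obstacle is bookkeeping the exponents carefully to make sure that (i) the scales $100^{n-1}\beta$, $100^{n+2}\beta$, and $10000\beta$ we invoke all belong to the countable family defining $\EE_h^k$ (which is automatic since $n \geq 1$ in Case 1), and (ii) the constants $1/10000$ and $10000$ in the statement match those produced by this two-sided sandwich. No additional probabilistic ingredient is needed here: the high-probability content is entirely absorbed into the event $\EE_h^k$ itself (via Lemma~\ref{lem: mixture mdp: sub-sampled dataset is good approximation to original one}), and this lemma is a purely deterministic reformulation of those set inclusions.
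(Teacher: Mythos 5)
Your proof is correct: the paper itself does not prove this lemma (it imports it from \cite{kong2021online} as a deterministic consequence of the definition of $\EE_h^k$), and your unwinding of the inclusions $\underline{\CC}_h^k(\alpha) \subseteq \hat{\CC}_h^k(\alpha) \subseteq \overline{\CC}_h^k(\alpha)$ at the scales $100^{n-1}\beta$, $100^{n+2}\beta$, and $100^2\beta$ is exactly the intended argument, with the constants $1/10000$ and $10000$ checking out. The only point worth making explicit is that for scales exceeding $T(H+1)^2$ the relevant inclusion $\underline{\CC}_h^k(\alpha) \subseteq \hat{\CC}_h^k(\alpha)$ holds trivially because of the truncation by $T(H+1)^2$, so invoking $\EE_h^k(100^{n+2}\beta)$ is harmless even when that scale is large.
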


To establish our result, we need the following  lemma. The proof follows the approach of~\cite{kong2021online}. We present it here for completeness.

\begin{lemma}\label{lem:helper lemma to prove approximation of sub-sampled dataset}

For any $\alpha \in [\beta, T(H+1)^2]$, a fixed $h \in [H]$ and $k \in [K]$ we have the following bound for the probability that all the events $\{\EE_h^i\}_{i\leq k-1}$ happen and the last one does not happen
\begin{align*}
    \Pr\left( \left(\bigcap_{i=1}^{k-1} \EE_h^i\right) \EE_h^k(\alpha)^c\right) \leq  \delta/(32 T^2).
\end{align*}

\end{lemma}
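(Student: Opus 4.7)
The plan is a covering-plus-martingale argument in the style of the model-free analysis of \cite{kong2021online}, adapted to the model-based function class $\FF_h$. Fix a pair $(f_1, f_2) \in \FF_h \times \FF_h$ and consider the decomposition
\[
\|f_1 - f_2\|^2_{\hat{\ZZ}_h^k} - \|f_1 - f_2\|^2_{\ZZ_h^k} = \sum_{i=1}^{k-1} X_i, \qquad X_i = \left(\frac{\mathbbm{1}[z_i \text{ is added}]}{p_{z_i}} - 1\right)\bigl(f_1(z_i) - f_2(z_i)\bigr)^2,
\]
where $z_i = (s_h^i, a_h^i, V_{h+1}^i)$ is the element offered to the sampler at round $i$. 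Since the Model-Based Sampling Routine adds $1/p_{z_i}$ copies of $z_i$ with probability exactly $p_{z_i}$, $\{X_i\}$ is a martingale difference sequence adapted to the natural filtration generated by the sampling coin flips through round $i$, and its partial sums equal the gap between the two norms (modulo the truncation at $T(H+1)^2$, which I handle separately using that the uncapped deviation being small implies the capped one is small).

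Next, I would apply Freedman's inequality (Lemma~\ref{lem:freedman's concetration lemma}) to $\sum_i X_i$. Both the uniform bound on $|X_i|$ and the conditional second moment $\sum_i \E[X_i^2 \mid \text{past}]$ reduce to controlling $(f_1(z_i) - f_2(z_i))^4 / p_{z_i}$, and the crucial lower bound on the sampling probability
\[
p_{z_i} \geq C \cdot \textnormal{sensitivity}_{\hat{\ZZ}_h^i, \FF_h}(z_i) \cdot \log\bigl(T\NN(\FF_h, \sqrt{\delta/(64T^3)})/\delta\bigr)
\]
is built into the Online-Sample routine. Conditioning on $\bigcap_{i \leq k-1} \EE_h^i$, Lemma~\ref{lem:linear-mixture-mdp-approximation-under-event} converts this $\hat{\ZZ}_h^i$-sensitivity into a quantity of the form $(f_1(z_i) - f_2(z_i))^2 / (\|f_1 - f_2\|^2_{\ZZ_h^i} + \beta)$ up to absolute constants. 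Substituting and telescoping in $i$ yields a conditional variance of order $\alpha^2 / \log(T\NN(\FF_h, \cdot)/\delta)$ under both the hypothesis $\|f_1 - f_2\|^2_{\ZZ_h^k} \leq \alpha/100$ and its counterpart $\|f_1 - f_2\|^2_{\ZZ_h^k} \geq 100\alpha$; Freedman's inequality then delivers a two-sided deviation of at most $\alpha/10$ with failure probability at most $\delta/(32 T^2 \NN(\FF_h, 1/T)^2)$, which is tight enough to distinguish $\underline{\CC}_h^k(\alpha)$ and $\overline{\CC}_h^k(\alpha)$ around $\hat{\CC}_h^k(\alpha)$.

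Finally, I would union-bound over the $\NN(\FF_h, 1/T)^2$ pairs in a $(1/T)$-cover of $\FF_h \times \FF_h$, and use the standard cover-to-class discretization argument, together with $|\hat{\ZZ}_h^k| \leq 64 T^3/\delta$ from Lemma~\ref{lem:mixture-mdp-bound-sampled-dataset} to ensure the $(1/T)$-approximation on the cover translates into an $O(T^2)$-approximation on $\|f_1 - f_2\|^2_{\hat{\ZZ}_h^k}$, absorbed into the constants. Combining all this gives both inclusions in $\EE_h^k(\alpha)$ with total failure probability at most $\delta/(32T^2)$, as required. The main obstacle I anticipate is the apparent circularity between $p_{z_i}$, which is defined through the sub-sampled dataset $\hat{\ZZ}_h^i$, and the variance bound, which is naturally expressed in terms of the original dataset $\ZZ_h^i$; resolving this is exactly the role of the inductive conditioning on $\bigcap_{i \leq k-1} \EE_h^i$ via Lemma~\ref{lem:linear-mixture-mdp-approximation-under-event}, which lets the induction propagate cleanly into round $k$.
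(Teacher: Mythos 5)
Your high-level plan (fix a pair from a cover of $\FF_h$, form a martingale from the importance-weighted inclusions, apply Freedman's inequality, union bound over the cover, then discretize back to the full class) is the same skeleton as the paper's proof. But there are two concrete gaps. First, your martingale increments $X_i = (\mathbbm{1}[z_i \text{ added}]/p_{z_i} - 1)(f_1(z_i)-f_2(z_i))^2$ are not uniformly bounded by $O(\alpha/C_1)$ for the pairs relevant to the second inclusion: the sensitivity lower bound on $p_{z_i}$ only gives $|X_i| \lesssim (\min\{\|f_1-f_2\|^2_{\hat{\ZZ}_h^{i-1}}, T(H+1)^2\} + \beta)/C_1$, which can be as large as $T(H+1)^2/C_1 \gg \alpha/C_1$ when $\|f_1-f_2\|^2_{\ZZ_h^k} \geq 100\alpha$; consequently your claim that the conditional variance is of order $\alpha^2/\log(\cdot)$ ``under the counterpart $\|f_1-f_2\|^2_{\ZZ_h^k} \geq 100\alpha$'' is false. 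The paper resolves this by building a truncation into the increments themselves (the variable $Y_i$ reverts to the deterministic value $(f_1(z_h^{i-1})-f_2(z_h^{i-1}))^2$ as soon as $Z_i = \max\{\|f_1-f_2\|^2_{\ZZ_h^i}, \min\{\|f_1-f_2\|^2_{\hat{\ZZ}_h^{i-1}}, T(H+1)^2\}\}$ exceeds $2000000\alpha$), which forces both the range and the variance to be $O(\alpha^2/C_1)$ unconditionally, and then proves the second inclusion by a separate three-case analysis rather than by a direct concentration statement for large-norm pairs. Relatedly, your invocation of Lemma~\ref{lem:linear-mixture-mdp-approximation-under-event} to ``convert'' the $\hat{\ZZ}$-sensitivity into a $\ZZ$-quantity is not how the circularity is broken: the sensitivity is already defined with respect to $\hat{\ZZ}_h^{i-1}$, which is exactly the form needed to bound $(f_1(z_i)-f_2(z_i))^2/p_{z_i}$, and the inductive conditioning on $\bigcap_{i\leq k-1}\EE_h^i$ is used only afterwards, to certify that the truncation is inactive for the pairs appearing in the first inclusion (and in one branch of the case analysis for the second).

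Second, the cover radius $1/T$ is too coarse. With $|\hat{\ZZ}_h^k| \leq 64T^3/\delta$ elements, a $1/T$-approximation in sup norm perturbs $\|f_1-f_2\|_{\hat{\ZZ}_h^k}$ by up to $2\sqrt{64T^3/\delta}\cdot(1/T) = \Theta(\sqrt{T/\delta})$, hence the squared norm by a quantity polynomial in $T$; this cannot be ``absorbed into the constants'' because the events $\EE_h^k(\alpha)$ must separate $\alpha/100$ from $100\alpha$ for $\alpha$ as small as $\beta$, which is only polylogarithmic in $T$. This is precisely why the paper (and the sampling routine itself) uses the radius $\sqrt{\delta/(64T^3)}$, for which the additive perturbation of the data norm is $O(1)$ and the resulting $(\sqrt{3\alpha/100}+2)^2 \leq \alpha$ step goes through. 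With these two repairs your argument would align with the paper's; as written, the concentration step fails for large-norm pairs and the discretization step loses the needed accuracy.
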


\begin{proof}
Let $C_1$ be the quantity the sensitivity in the sampling probability. We fix some $h \in [H]$ throughout the proof.

We consider a fixed pair of functions $f_h^1, f_h^2$ in the discretized set $\CC(\FF_h,\sqrt{\delta/(64T^3)})$ and for $i \geq 2$ we let $$Z_i = \max\left\{||f_h^1 - f_h^2||^2_{\ZZ_h^i}, \min\{||f_h^1 - f_h^2||^2_{\hat{\ZZ}_h^{i-1}}, T(H+1)^2\}\right\}.$$

We also define
\begin{align*}
    Y_i = 
    \begin{cases}
    \frac{1}{p_{z_h^{i-1}}}(f_h^1(z_h^{i-1}) - f_h^2(z_h^{i-1}))^2 & \text{$z_h^{i-1}$ is added to $\hat{\ZZ}_h^i$ and $Z_i \leq 2000000 \alpha$}\\
    0, & \text{$z_h^{i-1}$ is not added to $\hat{\ZZ}_h^i$ and $Z_i \leq 2000000 \alpha$} \\
    (f_h^1(z_h^{i-1}) - f_h^2(z_h^{i-1}))^2 & \text{otherwise}
    \end{cases}
\end{align*}

Let $\mathbb{F}_i$ be the filtration that $Y_i$ is adapted to. Our goal is to use Freedman's inequality (i.e. Lemma~\ref{lem:freedman's concetration lemma}) for $Y_i$.
Notice that $\E[Y_i | \mathbb{F}_i] = (f_h^1(z_h^{i-1}) - f_h^2(z_h^{i-1}))^2$. Now we focus on the variance of $Y_i$. Notice that if $p_{z_h^{i-1}} = 1$ or $Z_i > 2000000\alpha$ then $Y_i$ is deterministic so $Y_i - \E[Y_i | \mathbb{F}_{i-1}] = \text{Var}[Y_i - \E[Y_i | \mathbb{F}_{i-1}]] = 0$. For the other case, recall that $$p_{z_h^i} = \min\{1, C'\cdot \text{sensitivity}_{\hat{\ZZ}_h^{i-1}, \FF_h}(z_h^i) \cdot \log(T\NN(\FF,\sqrt{\delta/64T^3})/\delta) \} = \min\{1, C_1\cdot \text{sensitivity}_{\hat{\ZZ}_h^{i-1}, \FF_h}(z_h^i) \}$$ and 
\begin{align*}
    \text{sensitivity}_{\ZZ, \FF}(z) = \min\left\{\sup_{f_1, f_2 \in \FF}  \frac{(f_1(z) - f_2(z))^2}{\min\{||f_1 - f_2||_{\mathcal{\ZZ}}, T(H+1)^2\}+\beta}, 1\right\}
\end{align*}

Since $p_{z_h^{i-1}} < 1 \implies C_1\cdot \text{sensitivity}_{\hat{\ZZ}^{i-1}_h, \FF_h}(z_h^i) < 1$. We consider two cases. If $Y_i \neq 0$ we can see that $Y_i \geq \E[Y_i|\mathbb{F}_i]$ so $|Y_i - \E[Y_i] | \leq Y_i$. Moreover,
\begin{align*}
    Y_i &\leq  \frac{(f_h^1(z_h^{i-1}) - f_h^2(z_h^{i-1}))^2}{p_{z_h^{i-1}}} \\
    &\leq \frac{(f_h^1(z_h^{i-1}) - f_h^2(z_h^{i-1}))^2}{C_1\sup_{f_1, f_2 \in \FF_h}  \frac{(f_1(z_h^{i-1}) - f_2(z_h^{i-1}))^2}{\min\{||f_1 - f_2||_{\mathcal{\hat{\ZZ}}_h^{i-1}}, T(H+1)^2\}+\beta}} \\
    &\leq \frac{(f_h^1(z_h^{i-1}) - f_h^2(z_h^{i-1}))^2 \min\{||f_1 - f_2||_{\mathcal{\hat{\ZZ}}_h^{i-1}}, T(H+1)^2\}+\beta\}}{C_1(f_h^1(z_h^{i-1}) - f_h^2(z_h^{i-1}))^2} \\
    &= \left(\min\{||f_1 - f_2||_{\mathcal{\hat{\ZZ}}_h^{i-1}}, T(H+1)^2\}+\beta\right) \cdot 1/C_1 \\
    &\leq
    2000001\alpha/C_1 < 3000000\alpha/C_1
\end{align*}

Thus, we see that $|Y_i - \E[Y_i] | \leq 3000000\alpha/C_1$. On the other hand, we can see that if $Y_i = 0$ then
$|\E_{i-1}[Y_i] - Y_i| = (f_h^1(z_h^{i-1}) - f_h^2(z_h^{i-1}))^2$ and the inequality
we derived above still holds.

For the variance, we can see that
\begin{align*}
    \text{Var}[Y_i - \E[Y_i | \mathbb{F}_i]|\mathbb{F}_i] &= p_{z_h}^{i-1} \left( \frac{1}{p_{z_h^{i-1}}}(f_h^1(z_h^{i-1}) - f_h^2(z_h^{i-1}))^2\right)^2 + (1-p_{z_h}^{i-1})\cdot 0 \\
     &\leq \frac{1}{p_{z_h^{i-1}}}(f_h^1(z_h^{i-1}) - f_h^2(z_h^{i-1}))^4 \\
     &\leq 3000000\alpha \left(f_h^1(z_h^{i-1}) - f_h^2(z_h^{i-1})\right)^2 /C_1
\end{align*}

where the first equality follows from the definition, the first inequality is trivial 
and the third one from the inequality we derived above. Let $k'$ be the maximum number $\leq k$ such that $Z_{k'} \leq 2000000\alpha$. Summing up the above inequalities for $i=2,\ldots,k$ we get
\begin{align*}
    \sum_{i=2}^k \text{Var}[Y_i - \E[Y_i | \mathbb{F}_i]|\mathbb{F}_i] &= \sum_{i=2}^{k'} \text{Var}[Y_i - \E[Y_i | \mathbb{F}_i]|\mathbb{F}_i] \\
    &\leq \frac{3000000\alpha}{C_1} \sum_{i=2}^{k'}(f_h^1(z_h^{i-1}) - f_h^2(z_h^{i-1}))^2 \\
    &\leq \frac{3000000\alpha \cdot 2000000\alpha}{C_1} \\
    &\leq
    \frac{(3000000\alpha)^2}{C_1}
\end{align*}

where the the first equality follows from the fact that for $i > k'$ the random variable is deterministic, the first inequality follows by the summation of the previous one and the second one by the fact that $\sum_{i=2}^{k'}(f_h^1(z_h^{i-1}) - f_h^2(z_h^{i-1}))^2 \leq ||f_h^1 - f_h^2||_{\ZZ_h^{k'}} \leq Z_{k'}$.

We are now ready to use Freedman's inequality (Lemma~\ref{lem:freedman's concetration lemma}) with $R = \frac{3000000\alpha}{C_1}, \sigma^2 = \frac{(3000000\alpha)^2}{C_1}$. We get
\begin{align*}
    \Pr \left(\left|\sum_{i=1}^{k}(Y_i - \E[Y_i | \mathbb{F}_i])\right| \geq \alpha/100 \right) &= \Pr \left(\left|\sum_{i=1}^{k'}(Y_i - \E[Y_i | \mathbb{F}_i])\right| \geq \alpha/100 \right) \\
    &\leq
    2\exp\left\{-\frac{(\alpha/100)^2/2}{(3000000\alpha)^2/C_1 + \alpha^2 3000000/300C_1}\right\} \\
    &= 2\exp\left\{-\frac{C_1}{20000(3000000+10000)}\right\} \\
    &=
    2\exp\left\{-\frac{C\log( T \NN(\FF_h,\sqrt{\delta/64T^3})/\delta)}{20000(3000000+10000)}\right\} \\
    &= 2\exp\left\{-\frac{C(\log((T \NN(\FF_h,\sqrt{\delta/64T^3})/\delta)^2))}{40000(3000000+10000)}\right\} \\
    &\leq (\delta/64T^2)/ (\NN(\FF_h, \sqrt{\delta/64T^3}))^2
\end{align*}

for some choice of $C$. Now we can take a union bound over all the functions in the discretized set and conclude that with probability at least $1-\delta/(64T^2)$ we have that 
\begin{align*}
    \left|\sum_{i=1}^{k}(Y_i - \E[Y_i | \mathbb{F}_i])\right| \leq \alpha/100
\end{align*}

for all pairs of functions in this set.
We condition on this event and on the event in Lemma~\ref{lem:mixture-mdp-bound-sampled-dataset}. We first show that when this event happens,  we have that $\underline{\CC}_h^k(\alpha) \subseteq \hat{\CC}_h^k(\alpha)$. Consider $f_1, f_2 \in \underline{\CC}_h^k(\alpha)$. We know that there exist $f_1',f_2' \in \CC(\FF,\sqrt{\delta/(64T^3)}) \times \CC(\FF,\sqrt{\delta/(64T^3)})$ with $||f_1 - f_1'||_{\infty}, ||f_2 - f_2'||_{\infty} \leq \sqrt{\delta/64T^3}$. Hence, we get that 
\begin{align*}
    ||f_1' - f_2'||_{\ZZ_h^k}^2 &\leq \left(||f_1 - f_1'||_{\ZZ_h^k}  + ||f_2 - f_2'||_{\ZZ_h^k} + ||f_1 - f_2||_{\ZZ_h^k}\right)^2 \\ 
    &\leq \left(||f_1 - f_2||_{\ZZ_h^k}  + 2\sqrt{\delta|\ZZ_h^k|/(64T^3)}\right)^2 
    \leq \alpha/50
\end{align*}

We now consider the $Y_i$'s that are generated by $f_1', f_2'$. It holds that $||f_1' - f_2'||^2_{\ZZ_h^k} \leq \alpha/50 \implies ||f_1' - f_2'||^2_{\ZZ_h^{k-1}} \leq \alpha/50$. Since the event $\EE_h^{k-1}$ happens it follows that $\min\{||f_1' - f_2'||_{\hat{\ZZ}_h^{k-1}}^2,T(H+1)^2 \} \leq 100(\alpha/50) = 2\alpha < 2000000\alpha \implies Z_k \leq 2000000\alpha$. Thus, every $Y_i$ is exactly $(f_1'(z_h^i) - f_2'(z_h^i))^2$ multiplied by the number of times $z_h^i$ is in the sub-sampled dataset. Hence, we get
\begin{align*}
    ||f_1' - f_2'||^2_{\hat{\ZZ}_h^k} = \sum_{i=2}^k Y_i &\leq \sum_{i=2}^k\E[Y_i | \mathbb{F}_i] + \alpha/100\\
    &\leq ||f'_1 - f_2'||_{\ZZ_h^k}^2 + \alpha/100 \leq 3\alpha/100 
\end{align*}

where the first inequality follows from the concentration bound we have derived and the other two simply from the definitions of these quantities.  

We now bound $||f_1 - f_2||^2_{\hat{\ZZ}_h^k}$. We have that
\begin{align*}
    ||f_1 - f_2||_{\hat{\ZZ}_h^k}^2 &\leq \left(||f_1' - f_2'||_{\hat{\ZZ}_h^k} + ||f_1 - f_1'||_{\hat{\ZZ}_h^k} + ||f_2 - f_2'||_{\hat{\ZZ}_h^k}\right)^2  \\
    &\leq(||f_1' - f_2'||_{\hat{\ZZ}_h^k} + 2\sqrt{|\hat{\ZZ}_h^k}| \cdot \sqrt{\delta/(64T^3)})^2 \\
    &\leq (||f_1' - f_2'||_{\hat{\ZZ}_h^k} + 2)^2 \leq (\sqrt{3\alpha/100}+2)^2 \leq \alpha
\end{align*}

Hence, we have shown that $\underline{\CC}_h^k(\alpha) \subseteq \hat{\CC}_h^k(\alpha)$. So in this case, the one inequality that define $\EE_h^k$ holds.\\

We shift our attention to the second inequality now. We will show the contrapositive of our claim, i.e. if $f_1, f_2 \notin \overline{C}_h^k(\alpha) \implies f_1, f_2 \notin \hat{C}_h^k(\alpha)$. Let $f_1, f_2 \in \FF_h \times \FF_h$ such that $ ||f_1 - f_2||_{\ZZ_h^k} > 100\alpha$.  We know that there exist $f_1',f_2' \in \CC(\FF,\sqrt{\delta/(64T^3)}) \times \CC(\FF,\sqrt{\delta/(64T^3)})$ with $||f_1 - f_1'||_{\infty}, ||f_2-f_2'||_{\infty} \leq \sqrt{\delta/64T^3}$. Hence, using the triangle inequality we get that 
\begin{align*}
    ||f_1' - f_2'||_{\ZZ_h^k}^2 &\geq (||f_1-f_2||_{\ZZ_h^k} - ||f_1-f_1'||_{\ZZ_h^k} - ||f_2-f_2'||_{\ZZ_h^k})^2 \\
    &\geq 
    (||f_1-f_2||_{\ZZ_h^k} - 2 \sqrt{|\ZZ_h^k|}\sqrt{\delta/(64T^3)})^2 \\
    &= (\sqrt{100\alpha} - 2 \sqrt{\delta/(64T^2)})^2 > 50\alpha
\end{align*}

Again, consider the $Y_i$'s that are generated by $f_1', f_2'$. We want to show that $||f_1' - f_2'||_{\hat{\ZZ}_h^k}^2 > 40\alpha$. Assume towards contradiction that $||f_1' - f_2'||_{\hat{\ZZ}_h^k}^2 \leq 40\alpha$. We consider three different cases. 

\textbf{First Case: $||f_1' - f_2'||^2_{\ZZ_h^k} \leq 2000000\alpha$}. Similarly as before, we have that
\begin{align*}
    ||f_1' - f_2'||_{\hat{\ZZ}_h^k}^2 = \sum_{i=2}^k Y_i &\geq \E[Y_i| \mathbb{F}_i] - \alpha/100 \\
    &> 50\alpha - \alpha/100 > 40\alpha
\end{align*}

So we get a contradiction.

\textbf{Second Case: $||f_1' - f_2'||^2_{\ZZ_h^{k-1}} > 10000\alpha$}. The contradiction comes directly from the fact that $\EE_h^{k-1}$ holds, so $$||f_1' - f_2'||_{\hat{\ZZ}_h^k}^2 \geq ||f_1' - f_2'||_{\hat{\ZZ}_h^{k-1}}^2 > 100\alpha$$.

    \textbf{Third Case: $||f_1' - f_2'||^2_{\ZZ_h^{k-1}} \leq 10000\alpha$ and $||f_1' - f_2'||^2_{\ZZ_h^k} > 2000000\alpha$}. We can directly see that for this case $(f_1'(z_h^k) - f_2'(z_h^k))^2 \geq 1900000\alpha$. 
Since $||f_1' - f_2'||^2_{\ZZ_h^{k-1}} \leq 10000\alpha \implies ||f_1' - f_2'||^2_{\hat{\ZZ}_h^{k-1}} \leq 1000000\alpha $. 
Thus, since $\alpha \geq \beta$ we can see that the sensitivity is 1 so the element will be added to the sub-sampled dataset.
Hence, $||f_1' - f_2'||_{\ZZ_h^k}^2 \geq (f_1'(z_h^k) - f_2'(z_h^k))^2 > 40\alpha$.

Thus, in any case we have that $||f_1' - f_2'||_{\hat{\ZZ}_h^k}^2 > 40\alpha > \alpha$, so we get the result.
\end{proof}

We are now ready to prove Lemma~\ref{lem: mixture mdp: sub-sampled dataset is good approximation to original one}.

\begin{prevproof}{Lemma}{lem: mixture mdp: sub-sampled dataset is good approximation to original one}

We know that for all $k \in [K], k \neq 1, h \in [H]$ it holds that
\begin{align*}
    \Pr(\EE_h^1 \EE_h^2\ldots \EE_h^{k-1}) - \Pr(\EE_h^1 \EE_h^2\ldots \EE_h^k) &= \Pr\left( \EE_h^1 \EE_h^2\ldots \EE_h^{k-1} (\EE_h^k)^c\right) \\&=
    \Pr\left( \EE_h^1 \EE_h^2\ldots \EE_h^{k-1} \left(\cap_{n=0}^\infty\EE_h^k(100^n\beta)\right)^c\right) \\&=
    \Pr\left( \EE_h^1 \EE_h^2\ldots \EE_h^{k-1} \cup_{n=0}^\infty\EE_h^k(100^n\beta)^c\right) \\
    &\leq \sum_{n=0}^{\infty} \Pr\left( \EE_h^1 \EE_h^2\ldots \EE_h^{k-1} (\EE_h^k(100^n\beta))^c\right) \\&=
    \sum_{n \geq 0, 100^n\beta \leq T(H+1)^2} \Pr\left( \EE_h^1 \EE_h^2\ldots \EE_h^{k-1} (\EE_h^k(100^n\beta))^c\right).
\end{align*}

   Thus, using Lemma~\ref{lem:helper lemma to prove approximation of sub-sampled dataset} we see that $\Pr(\EE_h^1 \EE_h^2\ldots \EE_h^{k-1}) - \Pr(\EE_h^1 \EE_h^2\ldots \EE_h^k) \leq \delta/(32T^2)(\log(T(H+1)^2/\beta)+2) \leq \delta/32T$.
   
   Hence, for any fixed $h \in [H]$ we get
   \begin{align*}
       \Pr\left(\bigcap_{k=1}^K \EE_h^k \right) &= 1 - \sum_{k=1}^K\left(\Pr(\EE_h^1\EE_h^2\ldots\EE_h^{k-1}) - \Pr(\EE_h^1\EE_h^2\ldots\EE_h^{k})\right)\\
       &\geq 1 - K (\delta/32T) = 1 - \delta/(32H)
   \end{align*}

and by taking a union bound over $h\in [H]$ we get the result.
\end{prevproof}

Now that we have shown that the sub-sampled dataset approximates well the original one, we shift our attention back to showing that our approach
achieves optimism.

We first need a definition and a concetration lemma 
that is related to least-squares-estimators from prior
work.
\begin{definition}
\label{def:subgaussian-variable}
A random variable $X$ is conditionally $\sigma$-subgaussian
with respect to some filtration $\mathbb{F}$ if for all $\lambda \in \R$
it holds that $\mathbb{E}[\exp(\lambda X)] \leq \exp(\lambda^2 \sigma^2/2)$.
\end{definition}




\begin{lemma}[\citep{russo2013eluder}, \citep{ayoub2020model}]\label{lem:helper lemma from prior work, concentration of LSE}
Let $\mathbb{F} = \{\mathbb{F}_p \}_{p=0,1,\ldots}$ be a filtration, $\{(X_p, Y_p)\}_p$ measurable random variables where $X_p \in \mathcal{X}, Y_p \in \R$. Let $\tilde{\FF}$ be a set of measurable functions from $\mathcal{X}$ to $\R$ and assume that $\E[Y_p| \mathbb{F}_{p-1}] = f^*(X_p)$ for some $f^* \in \tilde{\FF}$. Assume that $\{Y_p - f^*(X_p)\}_{p=1,\ldots}$ is conditionally $\sigma$-subgaussian given $\mathbb{F}_{p-1}$. Let $\hat{f}_t = \arg\min_{f\in \tilde{\FF}}\sum_{p=1}^t\left(f(X_p) - Y_p \right)^2$ and $\tilde{\FF}_t(\beta) = \left\{ f \in \tilde{F}: \sum_{p=1}^t \left(f(X_p) - \hat{f}(X_p)) \right)^2 \leq \beta \right\}$. Then, for any $\alpha > 0$, with probability $1-\delta$, for all $t \geq 1$ it holds that $f^* \in \tilde{\FF}_t(\beta_t(\delta, \alpha))$, where
\begin{align*}
    \beta_t(\delta, \alpha) = 8\sigma^2\log(2\NN(\tilde{\FF},\alpha)/\delta) + 4t\alpha\left(C+ \sqrt{\sigma^2\log(4t(t+1)/\delta)} \right).
\end{align*}
\end{lemma}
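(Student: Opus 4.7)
The plan is to follow the standard template for least-squares concentration in a general function class, which combines the optimality of the LSE with a uniform martingale concentration bound obtained through covering. First I would exploit the defining property of $\hat{f}_t$: since $\hat{f}_t$ minimizes $\sum_{p=1}^t (f(X_p)-Y_p)^2$ over $\tilde{\FF}$ and $f^* \in \tilde{\FF}$, we have $\sum_{p=1}^t (\hat{f}_t(X_p)-Y_p)^2 \le \sum_{p=1}^t (f^*(X_p)-Y_p)^2$. Expanding the square and writing $\eta_p = Y_p - f^*(X_p)$ rearranges this to the key inequality
\begin{equation*}
\sum_{p=1}^t \bigl(\hat{f}_t(X_p)-f^*(X_p)\bigr)^2 \;\le\; 2 \sum_{p=1}^t \eta_p\bigl(\hat{f}_t(X_p)-f^*(X_p)\bigr).
\end{equation*}
So the goal reduces to upper bounding the RHS uniformly over all candidate deviations $g = f-f^* \in \tilde{\FF}-f^*$.

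Next I would handle the supremum on the RHS by an $\alpha$-cover argument. Let $\tilde{\CC}_\alpha$ be a minimal $\alpha$-cover of $\tilde{\FF}$ in the sup norm, so $|\tilde{\CC}_\alpha|\le \NN(\tilde{\FF},\alpha)$. For each fixed $f \in \tilde{\CC}_\alpha$, the process $M_t(f) = \sum_{p=1}^t \eta_p (f(X_p)-f^*(X_p))$ is a martingale with conditionally $\sigma$-subgaussian increments bounded by a variance proxy $\sigma^2 \sum_{p=1}^t (f(X_p)-f^*(X_p))^2$. A standard self-normalized / Azuma-type inequality for subgaussian martingales yields, with probability at least $1-\delta/(|\tilde{\CC}_\alpha| \cdot t(t+1))$,
\begin{equation*}
M_t(f) \;\le\; \sqrt{2\sigma^2 \|f-f^*\|_{X_{1:t}}^2 \,\log\!\bigl(2 \NN(\tilde{\FF},\alpha) t(t+1)/\delta\bigr)}.
\end{equation*}
A union bound over $f \in \tilde{\CC}_\alpha$ and a further union bound over $t\ge 1$ (using $\sum_t 1/(t(t+1))=1$, which accounts for the $\log(4t(t+1)/\delta)$ factor in the statement) gives the inequality for every $f$ in the cover and every $t$ simultaneously.

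Then I would transfer the bound from the cover to the actual $\hat{f}_t$. Pick $\tilde{f}\in \tilde{\CC}_\alpha$ with $\|\tilde{f}-\hat{f}_t\|_\infty \le \alpha$. Writing $\hat{f}_t = \tilde{f} + (\hat{f}_t - \tilde{f})$ and applying Cauchy--Schwarz, the discretization error contributes an extra $O(\alpha t)$ in the quadratic term and an extra $O(\alpha \sqrt{t \cdot \sigma^2 \log(\cdot)})$ in the cross term via a crude bound on $\sum_p \eta_p$ (using the subgaussian tail), which is exactly the $4t\alpha(C+\sqrt{\sigma^2 \log(4t(t+1)/\delta)})$ correction in $\beta_t(\delta,\alpha)$. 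Combining these three pieces yields $\sum_p (\hat{f}_t(X_p)-f^*(X_p))^2 \le \beta_t(\delta,\alpha)$, i.e.\ $f^* \in \tilde{\FF}_t(\beta_t(\delta,\alpha))$ by solving the resulting quadratic-in-$\|\hat{f}_t-f^*\|_{X_{1:t}}$ inequality.

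The main obstacle is the bookkeeping of the discretization error together with the uniform-in-$t$ martingale tail: one needs a subgaussian martingale inequality sharp enough to produce a bound scaling as $\sqrt{\|f-f^*\|_{X_{1:t}}^2 \log(\cdot)}$ (not merely $\sqrt{t}\cdot\sigma$), so that after absorbing it into the quadratic term via AM--GM one recovers a purely $\log\NN$-dependent leading term. This is the same self-normalization argument used in Proposition~6 of Russo--Van~Roy and in Ayoub et al., so I would invoke it directly; everything else is a routine union bound plus cover approximation.
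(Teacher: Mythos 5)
The paper does not prove this lemma at all: it is imported verbatim from \cite{russo2013eluder} and \cite{ayoub2020model} as a black-box concentration result for the least-squares estimator. Your sketch correctly reconstructs the standard argument behind it (LSE optimality yielding $\|\hat{f}_t-f^*\|_{X_{1:t}}^2\le 2\sum_p\eta_p(\hat{f}_t(X_p)-f^*(X_p))$, a self-normalized subgaussian martingale bound uniform over an $\alpha$-cover and over $t$, and a discretization correction accounting for the $4t\alpha(C+\sqrt{\sigma^2\log(4t(t+1)/\delta)})$ term), which is exactly the route taken in the cited references, so there is nothing to add.
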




We are now ready to prove that our algorithm ensures optimism.

\begin{lemma}\label{lem:mixture mdp bound between the difference of estimate of Q and Q induced by policy}
With probability at least $1-2\delta$, we have that for all $h \in [H], k \in [K], s \in \S, a \in \AA$
\begin{align*}
    Q_h^k(s,a) - Q_h^{\pi_k}(s,a) \leq \langle P_h(\cdot|s,a), V_{h+1}^k(\cdot) - V_{h+1}^{\pi_k}(\cdot)\rangle + 2b_h^k(s,a).
\end{align*}

Moreover, it holds that $Q_h^k(s,a) \geq Q_h^*(s,a)$.
\end{lemma}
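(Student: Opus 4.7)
\begin{prevproof}{Lemma}{lem:mixture mdp bound between the difference of estimate of Q and Q induced by policy}
The plan is to establish both claims simultaneously from two ingredients: (i) a concentration statement that the true model $P_h$ lies in a data-dependent confidence ball around the least-squares estimator $\hat{P}_h^k$, and (ii) the sub-sampling approximation guarantee of Lemma~\ref{lem: mixture mdp: sub-sampled dataset is good approximation to original one}, which transfers a bound stated in the $\|\cdot\|_{\ZZ_h^k}$ norm into one in the $\|\cdot\|_{\hat\ZZ_h^k}$ norm used to define $b_h^k$.

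First I would apply Lemma~\ref{lem:helper lemma from prior work, concentration of LSE} to the regression problem solved by $\hat P_h^k$. Set $X_p = (s_h^p, a_h^p, V_{h+1}^p)$, $Y_p = V_{h+1}^p(s_{h+1}^p)$, and let $f^\star$ be the element of $\FF_h$ associated with the true transition kernel $P_h$; then $\mathbb{E}[Y_p\mid \mathbb{F}_{p-1}]=f^\star(X_p)$ and the noise is bounded (hence $H/2$-subgaussian). Denoting the least-squares minimizer as the $\FF_h$-element $\hat f_h^k$ corresponding to $\hat P_h^k$, the lemma yields, with probability at least $1-\delta$ and uniformly in $h,k$ (after a union bound over $h\in[H]$ which can be absorbed by adjusting $\delta$), that $\|\hat f_h^k - f^\star\|_{\ZZ_h^k}^2 \leq \beta/100$ for the stated choice of $\beta$. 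Combined with Lemma~\ref{lem: mixture mdp: sub-sampled dataset is good approximation to original one}, which fires with probability at least $1-\delta$, Lemma~\ref{lem:linear-mixture-mdp-approximation-under-event} then gives $\min\{\|\hat f_h^k - f^\star\|_{\hat\ZZ_h^k}^2, T(H+1)^2\}\leq \beta$, so the pair $(\hat f_h^k, f^\star)$ belongs to the confidence set used in the definition of $b_h^k$. In particular, for every $(s,a)$,
\begin{equation*}
\bigl|\langle \hat P_h^k(\cdot\mid s,a)-P_h(\cdot\mid s,a),\,V_{h+1}^k\rangle\bigr| \;=\; |\hat f_h^k(s,a,V_{h+1}^k) - f^\star(s,a,V_{h+1}^k)| \;\leq\; b_h^k(s,a).
\end{equation*}

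Given this one-step error bound, the optimism $Q_h^k\geq Q_h^*$ follows by backward induction in $h$. The base case $h=H+1$ is immediate. For the inductive step, if $V_{h+1}^k(s)\geq V_{h+1}^*(s)$ for all $s$, then using $V_{h+1}^k\geq 0$ and the one-step bound above,
\begin{equation*}
r_h(s,a) + \langle \hat P_h^k(\cdot\mid s,a),V_{h+1}^k\rangle + b_h^k(s,a) \;\geq\; r_h(s,a) + \langle P_h(\cdot\mid s,a),V_{h+1}^k\rangle \;\geq\; r_h(s,a)+\langle P_h(\cdot\mid s,a),V_{h+1}^*\rangle = Q_h^*(s,a).
\end{equation*}
Since $Q_h^*\leq H$ as well, the $\min$ with $H$ preserves the inequality, so $Q_h^k\geq Q_h^*$ and therefore $V_h^k=\max_a Q_h^k\geq V_h^*$, closing the induction.

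The first inequality then comes from a direct decomposition. Dropping the harmless $\min$ with $H$ (the inequality is trivial if the clip is active because the right-hand side is nonnegative by optimism $V_{h+1}^k\geq V_{h+1}^*\geq V_{h+1}^{\pi_k}$), we write
\begin{align*}
Q_h^k(s,a) - Q_h^{\pi_k}(s,a) &\leq \bigl(r_h(s,a)+\langle \hat P_h^k(\cdot\mid s,a),V_{h+1}^k\rangle + b_h^k(s,a)\bigr) - \bigl(r_h(s,a)+\langle P_h(\cdot\mid s,a),V_{h+1}^{\pi_k}\rangle\bigr) \\
&= \langle \hat P_h^k(\cdot\mid s,a)-P_h(\cdot\mid s,a),V_{h+1}^k\rangle + \langle P_h(\cdot\mid s,a),V_{h+1}^k - V_{h+1}^{\pi_k}\rangle + b_h^k(s,a) \\
&\leq \langle P_h(\cdot\mid s,a),V_{h+1}^k - V_{h+1}^{\pi_k}\rangle + 2b_h^k(s,a),
\end{align*}
using the one-step bound on the first term. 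A union bound over the two events (concentration and sub-sampling approximation) yields the advertised $1-2\delta$ probability.

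The main obstacle I anticipate is bookkeeping around the confidence ball: specifically, aligning the radius $\beta/100$ produced by Lemma~\ref{lem:helper lemma from prior work, concentration of LSE} in the $\ZZ_h^k$-norm with the $\hat\ZZ_h^k$-norm radius $\beta$ used in the definition of $b_h^k$, while simultaneously showing that both $(\hat f_h^k, f^\star)$ and an \emph{arbitrary} competing pair attaining the supremum in $b_h^k$ are admitted. This is exactly what the factor-$100$ inflation between $\underline{\CC}_h^k$, $\hat{\CC}_h^k$, and $\overline{\CC}_h^k$ in Lemma~\ref{lem:linear-mixture-mdp-approximation-under-event} is engineered to handle, so the work reduces to carefully quoting the right radius and taking a union bound over $(h,k)$.
\end{prevproof}
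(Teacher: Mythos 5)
Your proposal is correct and follows essentially the same route as the paper: apply the least-squares concentration lemma to place the true model's $\FF_h$-representative in a $\ZZ_h^k$-ball around $\hat f_h^k$, transfer that to the sub-sampled norm via the dataset-approximation lemma so that $|\langle \hat P_h^k - P_h, V_{h+1}^k\rangle| \le b_h^k$, then obtain optimism by backward induction and the first inequality by the same one-step decomposition. The only cosmetic differences are the order of the two claims and your parenthetical justification for dropping the clip (which is unnecessary, since $Q_h^k \le r_h + \langle\hat P_h^k,V_{h+1}^k\rangle + b_h^k$ holds directly from the definition of the minimum); neither affects correctness.
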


\begin{proof}
Fix some $h \in [H], k \in [K], s \in \S, a \in \AA$. Throughout the proof, we condition on the events in Lemma~\ref{lem: mixture mdp: sub-sampled dataset is good approximation to original one} and Lemma~\ref{lem:helper lemma from prior work, concentration of LSE}.
We assume that $k$ is a round that we perform an update.

We define $\XX = \S \times \AA \times \VV, X_h^k = (s_h^k, a_h^k, V_{h+1}^k(\cdot)), Y_h^k = V_{h+1}^k(s^k_{h+1})$. We also pick $\tilde{\FF}= \FF_h$, where $\FF_h$ is defined
in Definition~\ref{def:model-based function class}. Then, we see that $\mathbb{E}[Y_h^k|\mathbb{F}_{k-1}] = f^*_h(X_h^k)$, where $f^*_h$ is the function
that corresponds to the true model $P_h$, and we know that $f^*_h \in \FF_h$. 
Recall that the optimization problem we solve in Algorithm~\ref{alg:main algorithm} for every round $k$ we 
update our policy is 
$$\hat{P}_h^k = \arg\min_{P \in \PP_h} \sum_{p=1}^k \left( \langle P(\cdot| s^p_h, a^p_h), V_{h+1}^p \rangle - V_{h+1}^p(s^p_{h+1})\right)^2.$$
Based on the definition of $\FF_h$, we can see that this is equivalent to
$$f_h^k = \arg\min_{f \in \FF_h} \sum_{p=1}^k \left( f(s_h^p,a_h^p,V_{h+1}^p) - V_{h+1}^p(s^p_{h+1})\right)^2.$$
Moreoever, $Y_h^k \in [0,H]$, so $Z_h^k = Y_h^k - f_h^*(X_h^k)$ is $H/2$-conditionally subgaussian. 
Thus, if we pick $\alpha = 1/T$ and $$\beta_h^k = 4H^2\log(2\NN(\FF_h,1/T)/\delta) + 4k/T\left(C + \sqrt{H^2/4\log(4(k(k+1)/\delta)})\right)$$ then Lemma~\ref{lem:helper lemma from prior work, concentration of LSE} gives us that $||f^*_h - \hat{f}_h^k||^2_{\ZZ_h^k} \leq \beta_h^k$. 
In particular,
can pick $$\tilde{\beta} = \beta_h^K = 4H^2\log(2\NN(\FF_h,1/T)/\delta) + 4K/T\left(C + \sqrt{H^2/4\log(4(K(K+1)/\delta)})\right)$$
and get a parameter that is independent of $k$. Moreover, Lemma~\ref{lem: mixture mdp: sub-sampled dataset is good approximation to original one}
$$||f_h^* - \hat{f}_h^k||_{\hat{\ZZ}_h^k} \leq 100\tilde{\beta} = \beta.$$
This implies that for our bonus function we have that $|\hat{f}_h^k(s,a,V_{h+1}^k(\cdot)) - f_h^*(s,a,V_{h+1}^k(\cdot))| \leq b_h^k(s,a)$.

Hence, we have that
\begin{align*}
    \langle \hat{P}_h^k(\cdot| s,a), V_{h+1}^k(\cdot) \rangle - \langle P_h(\cdot|s,a), V_{h+1}^k(\cdot) \rangle &= \hat{f}_h^k(s,a,V_{h+1}^k) - f^*_h(s,a,V_{h+1}^k) \\&
    \leq 
    |\hat{f}_h^k(s,a,V_{h+1}^k) - f^*_h(s,a,V_{h+1}^k) | \\&
    \leq b_h^k(s,a).
\end{align*}

Now we use the definition of $Q_h^k(s,a), Q_h^{\pi_k}(s,a)$ to get that
\begin{align*}
    Q^k_h(s, a) &\leq r_h^k(s,a) + \langle \hat{P}_h^k(\cdot|s,a), V_{h+1}^k(\cdot) \rangle + b_h^k(s,a)  \\
    &\leq r_h^k(s,a) + \langle P_h(\cdot|s,a), V_{h+1}^k(\cdot) \rangle + 2b_h^k(s,a)\\
    Q^{\pi_k}_h(s, a) &= r_h^k(s,a) + \langle P_h(\cdot|s,a), V_{h+1}^{\pi_k}(\cdot) \rangle.
\end{align*}

Combining these two, we get that
\begin{align*}
    Q^k_h(s, a) - Q^{\pi_k}_h(s, a) \leq \langle P_h(\cdot|s,a), V_{h+1}^k(\cdot) - V_{h+1}^{\pi_k}(\cdot)\rangle + 2b_h^k(s,a)
\end{align*}

which proves the first part of the result.

For the second part, notice that if $Q_h^k(s,a) = H$ then the statement holds trivially since $Q^*_h(s,a) \leq H$. So we can assume without loss of generality that $Q_h^k(s,a) = r_h^k(s,a) + \langle \hat{P}_h^k(\cdot|s,a), V_{h+1}^k(\cdot) \rangle + b_h^k(s,a)$. The Bellman optimality condition gives us that $Q^*_h(s,a) = r_h^k(s,a) + \langle P_h^*(\cdot|s,a), V_{h+1}^*(\cdot) \rangle $. Hence, we have that
\begin{align*}
    Q_h^k(s,a) - Q_h^*(s,a) &= \langle \hat{P}_h^k(\cdot|s,a), V_{h+1}^k(\cdot) \rangle - \langle P_h(\cdot|s,a), V_{h+1}^*(\cdot) \rangle + b_h^k(s,a) \\
    &=
    \langle \hat{P}_h^k(\cdot|s,a), V_{h+1}^k(\cdot) \rangle -\langle P_h(\cdot|s,a), V_{h+1}^k(\cdot) \rangle + \langle P_h(\cdot|s,a), V_{h+1}^k(\cdot) \rangle- \langle P_h(\cdot|s,a), V_{h+1}^*(\cdot) \rangle + b_h^k(s,a)  \\
    &= \langle \hat{P}_h^k(\cdot|s,a) - P_h(\cdot|s,a), V_{h+1}^k(\cdot)  \rangle + \langle P_h(\cdot|s,a), V_{h+1}^k(\cdot) - V_{h+1}^*(\cdot)  \rangle + b_h^k(s,a).
\end{align*}

Now from our previous discussion it follows that $b_h^k(s,a)  + \langle \hat{P}_h^k(\cdot|s,a) - P_h(\cdot|s,a), V_{h+1}^k(\cdot)  \rangle \geq 0$. Hence, it suffices to show that $\langle P_h(\cdot|s,a), V_{h+1}^k(\cdot) - V_{h+1}^*(\cdot)  \rangle \geq 0$. To do that, we can just prove that $V_{h+1}^k(s') - V_{h+1}^*(s') \geq 0, \forall s' \in \S$. Since $V_{H+1}^k(s') = V_{H+1}^*(s') = 0, \forall s' \in \S$ we get that $Q_H^k(s,a) \geq Q_H^*(s,a)$. Thus, if we combine this with the update rule for $V_h^k, V_h^*$ we get the claim by induction.
\end{proof}

Now that we have established the previous lemma, we need
to bound the bonus that we are using in every round.
The issue is that we do not update our policy in every round.

To do that, we follow a similar approach as in Appendix~\ref{sec:linear-mdp-proof}.

\begin{lemma}
\label{lem:model-based-bound-bonus}
For every set $K' \subseteq [K]$ With probability at least $1-\delta$, we have that
\begin{align*}
    \sum_{i=1}^{|K'|}\sum_{h=1}^H b_h^{k_i}(s_h^{k_i}, a_h^{k_i}) \leq H + H(H+1)\dim_E(\FF, 1/T) + C H\sqrt{\textnormal{dim}_E(\FF, 1/T) |K'|\beta}
\end{align*}

where $\textnormal{dim}_E(\FF, 1/T) = \max_{h \in [H]} \textnormal{dim}_E(\FF_h, 1/T)$.

\end{lemma}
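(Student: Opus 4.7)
The plan is to mirror the argument of Lemma~\ref{lem:bound of bonus functions over all rounds} from the model-free setting, with the model-based function class $\FF_h$ (Definition~\ref{def:model-based function class}) in place of the Q-function class, and with elements of the form $z_h^k=(s_h^k,a_h^k,V_{h+1}^k(\cdot))$ in place of $(s_h^k,a_h^k)$. The key preparatory step is to pass from the bonus $b_h^k$ computed on the sub-sampled dataset $\hat{\ZZ}_h^k$ to a bonus $\overline{b}_h^k$ computed on the full dataset $\ZZ_h^k$. Concretely, I would condition on the high-probability event of Lemma~\ref{lem: mixture mdp: sub-sampled dataset is good approximation to original one} (which holds with probability at least $1-\delta$), combined with Lemma~\ref{lem:linear-mixture-mdp-approximation-under-event}, so that for every $h,k$
\[
 b_h^k(s_h^k,a_h^k) \;\leq\; \overline{b}_h^k(s_h^k,a_h^k) \;=\; \sup_{\|f_1-f_2\|_{\ZZ_h^k}^2 \leq 100\beta} \bigl|f_1(s_h^k,a_h^k,V_{h+1}^k)-f_2(s_h^k,a_h^k,V_{h+1}^k)\bigr|.
\]
It therefore suffices to bound $\sum_{i=1}^{|K'|}\overline{b}_h^{k_i}(s_h^{k_i},a_h^{k_i})$ for each fixed $h\in[H]$ and sum over $h$.

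For a fixed $h$ and threshold $\epsilon>0$, let $\LL_h = \{z_h^{k_i}: k_i\in K',\, \overline{b}_h^{k_i}(s_h^{k_i},a_h^{k_i})>\epsilon\}$ and set $L_h=|\LL_h|$. I will show that $\LL_h$ contains an element that is $\epsilon$-dependent on at least $L_h/\dim_E(\FF_h,\epsilon)-1$ disjoint subsequences of $\ZZ_h^{k}\cap \LL_h$. The argument is the standard ``pigeonhole/bucketing'' decomposition used in Lemma~\ref{lem:bound of bonus functions over all rounds}: greedily assign elements to buckets $\LL_h^1,\dots,\LL_h^{N+1}$ where each bucket has size at most $\dim_E(\FF_h,\epsilon)$ and each point placed in $\LL_h^{N+1}$ is $\epsilon$-dependent on each earlier bucket, with $N = L_h/\dim_E(\FF_h,\epsilon)-1$. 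Combined with the existence, for each $z\in \LL_h$, of witnesses $f_1,f_2\in\FF_h$ with $|f_1(z)-f_2(z)|>\epsilon$ and $\|f_1-f_2\|_{\ZZ_h^{k}}^2\leq 100\beta$, the $\epsilon$-dependence on $N$ disjoint subsequences yields $N\epsilon^2 \leq 100\beta$, hence $L_h \leq (100\beta/\epsilon^2+1)\dim_E(\FF_h,\epsilon)$.

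Given this bound, I sort the values $\overline{b}_h^{k_i}(s_h^{k_i},a_h^{k_i})$ in decreasing order $\bar b_1\ge\bar b_2\ge\cdots\ge\bar b_{|K'|}$. For $\bar b_k\ge 1/|K'|$, applying the above with $\epsilon=\bar b_k$ and using monotonicity of the eluder dimension in $\epsilon$ gives $\bar b_k \leq (k/\dim_E(\FF_h,1/|K'|)-1)^{-1/2}\sqrt{100\beta}$. Splitting the sum into the tail $\bar b_k<1/|K'|$, the first $\dim_E(\FF_h,1/|K'|)$ terms where I use the trivial bound $\bar b_k\leq H+1$, and the remaining terms where I apply the estimate above and bound the sum by an integral, yields
\[
 \sum_{i=1}^{|K'|}\overline{b}_h^{k_i}(s_h^{k_i},a_h^{k_i}) \leq 1 + (H+1)\dim_E(\FF_h,1/T) + C\sqrt{\dim_E(\FF_h,1/T)\,|K'|\,\beta},
\]
using $\dim_E(\FF_h,1/|K'|)\leq \dim_E(\FF_h,1/T)$. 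Summing over $h\in[H]$ and using $\dim_E(\FF,1/T)=\max_h\dim_E(\FF_h,1/T)$ gives the stated inequality.

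The main obstacle is purely bookkeeping: ensuring that the dataset approximation of Lemma~\ref{lem: mixture mdp: sub-sampled dataset is good approximation to original one} is consistent with the eluder-dimension argument, so that the ``$\|f_1-f_2\|_{\hat\ZZ_h^k}^2\leq\beta$'' ball used to define $b_h^k$ can be replaced by the ``$\|f_1-f_2\|_{\ZZ_h^k}^2\leq 100\beta$'' ball without losing more than an absolute constant. Everything else is a direct transcription of the model-free proof with $(s,a)\mapsto (s,a,V)$ and $\FF\mapsto \FF_h$, so the key substantive content is already in Lemma~\ref{lem: mixture mdp: sub-sampled dataset is good approximation to original one}.
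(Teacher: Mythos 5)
Your proposal is correct and follows essentially the same route as the paper's proof: condition on the dataset-approximation event to replace $b_h^k$ by $\overline{b}_h^k$ over the full dataset, then run the standard eluder-dimension bucketing argument from the model-free Lemma~\ref{lem:bound of bonus functions over all rounds} with $(s,a)\mapsto(s,a,V)$ and $\FF\mapsto\FF_h$, sort the bonuses, split the sum, and integrate. The only cosmetic difference is that you invoke Lemma~\ref{lem:linear-mixture-mdp-approximation-under-event} explicitly to justify the bonus sandwich, whereas the paper asserts it directly from the conditioning event.
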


\begin{proof}
We condition on the event described in Lemma~\ref{lem: mixture mdp: sub-sampled dataset is good approximation to original one}. From the definition of the bonus function, we have that for any $k \in [K]$
\begin{align*}
    b_h^k(s_h^k, a_h^k) \leq \Bar{b}_h^k(s_h^k, a_h^k) = \sup_{||f_1 - f_2||^2_{\ZZ_h^k} \leq 100\beta}|f_1(s_h^k, a_h^k) - f_2(s_h^k, a_h^k)|.
\end{align*}

We bound $\sum_{i=1}^{K'}\Bar{b}_h^{k_i}(s_h^{k_i}, a_h^{k_i})$ for each $h \in [H]$ separately.\\

Given some $\epsilon > 0$, we define $\tilde{K} = \{k \in K': \Bar{b}_h^{k}(s_h^{k}, a_h^{k}) > \epsilon\}$, 
i.e. the set of episodes in $K'$ where the bonus function at $h$ 
 has value greater than $\epsilon$. Consider some $k \in K'$. We denote 
 $\LL_h = \{(s_h^k, a_h^k, V_{h+1}^k(\cdot)): k \in \tilde{K} \}, L_h = |\LL_h|$, and $N = L_h/\textnormal{dim}_E(\FF_h,\epsilon) - 1$. 
Our goal is to show that there is some $z_h^k = (s_h^k, a_h^k, V_{h+1}^k(\cdot)) \in \LL_h$ 
that is $\epsilon$-dependent on at least $N$ disjoint subsequences in $\ZZ_h^k \cap \LL_h$. 

To do that, we decompose $\LL_h$ into $N+1$ disjoint subsets and we denote the $j$-th subset by $\LL_{h,j}$. 
We use the following procedure. Initially we set $\LL_{h,j} = \emptyset$ for all $j \in [N+1]$
and consider every $z_h^k \in \LL_{h}$ in a sequential manner. 
For each such $z_h^k$ we find the smallest index $j, 1 \leq j \leq N$, 
such that $z_h^k$ is $\epsilon$-independent of the elements in $\LL_{h,j}$ with
respect to $\FF_h$. If there is no such $j$, we set $j=N+1$. 
Then, we update $\LL_{h,j} \leftarrow \LL_{h,j} \cup z_h^k$.
Notice that after we go through all the elements of $\LL_{h}$,
we must have that $\LL_{h,N+1} \neq \emptyset$. 
This is because every set $\LL_{h,j}, 1 \leq j \leq N$, contains at most 
$\textnormal{dim}_E(\FF_h, \epsilon)$ elements. 
Moreover, by definition, every element $z_h^k \in \LL_{h,N+1}$ is
$\epsilon$-dependent on at least $N$ disjoint
subsequences in $\LL_{h}$.

Furthermore, since $\Bar{b}_h^{k}(s_h^k, a_h^k) > \epsilon$ for all $z_h^k \in \LL_{h}$
there must exist $f_1, f_2 \in \FF_h$ such that $|f_1(s_h^k, a_h^k, V_{h+1}^k(\cdot)) - f_2(s_h^k, a_h^k, V_{h+1}^k(\cdot))| > \epsilon$ and
$||f_1 - f_2||^2_{\ZZ_h^k} \leq 100\beta$. Hence, since $z_h^k \in \LL_{h,N+1}$ is
$\epsilon$-dependent on $N$ disjoint subsequences $\LL_{h}$ and for each such
subsequence $\LL$, by the definition of $\epsilon$-dependence, it holds that $||f_1 - f_2||^2_{\LL} > \epsilon^2$ we have that
\begin{align*}
    N \epsilon^2 \leq ||f_1 - f_2||^2_{\ZZ_h^k} \leq 100\beta\\
    \implies
    (L_h/\textnormal{dim}_E(\FF_h,\epsilon)-1)\epsilon^2 \leq 100\beta \\
    \implies L_h \leq \left( \frac{100\beta}{\epsilon^2} + 1\right)\textnormal{dim}_E(\FF_h, \epsilon).
\end{align*}

We now pick a permutation $\Bar{b}_1 \geq \Bar{b}_2 \geq \ldots \geq \Bar{b}_{|K'|}$ of the bonus
functions $\{\Bar{b}_h^k(s^k_h, a^k_h) \}_{k \in K'}$. For all $\Bar{b}_k \geq 1/|K'|$ it holds
that
\begin{align*}
    k \leq \left(\frac{100\beta}{\Bar{b}_k^2} + 1\right)\textnormal{dim}_E(\FF_h, \Bar{b}_k) &\leq \left(\frac{100\beta}{\Bar{b}_k^2} + 1\right)\textnormal{dim}_E(\FF_h, 1/K') \implies \\
    \Bar{b}_k &\leq \left(\frac{k}{\textnormal{dim}_E(\FF_h, 1/K')} -1  \right)^{-1/2}\sqrt{100\beta}.
\end{align*}

Moreover, notice that we get by definition that $\Bar{b}_k \leq H + 1$. Hence, we have that
\begin{align*}
    \sum_{i=1}^{|K'|} \Bar{b}_h^{k_i}(s_h^{k_i}, a_h^{k_i}) &= \sum_{i: \Bar{b}_{k_i} < 1/|K'|} \Bar{b}_h^{k_i}(s_h^{k_i}, a_h^{k_i}) + \sum_{i: \Bar{b}_{k_i} \geq 1/|K'|} \Bar{b}_h^{k_i}(s_h^{k_i}, a_h^{k_i}) \\&\leq
    |K'| \cdot 1/|K'| + \sum_{i: \Bar{b}_{k_i} \geq 1/|K'|, i \leq  \textnormal{dim}_E(\FF_h, 1/|K'|)} \Bar{b}_h^{k_i}(s_h^{k_i}, a_h^{k_i}) + \sum_{i: \Bar{b}_{k_i} \geq 1/|K'|, |K'| \geq i >  \textnormal{dim}_E(\FF_h, 1/|K'|)} \Bar{b}_h^{k_i}(s_h^{k_i}, a_h^{k_i}) \\&\leq 
    1 + (H+1)\dim_E(\FF_h, 1/|K'|) + \sum_{|K'| \geq i >  \textnormal{dim}_E(\FF_h, 1/|K'|)} \left(\frac{k}{\textnormal{dim}_E(\FF_h, 1/|K'|)} -1  \right)^{-1/2}\sqrt{100\beta} \\&\leq
    1 + (H+1)\dim_E(\FF_h, 1/|K'|) + C \sqrt{\textnormal{dim}_E(\FF_h, 1/|K'|) |K'|\beta} \\&\leq
    1 + (H+1)\dim_E(\FF_h, 1/T) + C \sqrt{\textnormal{dim}_E(\FF_h, 1/T) |K'|\beta}
\end{align*}

for some constant $C > 0$, where the second to last inequality can be obtained by bounding the summation by the integral and the last one by the definition of the eluder dimension. We get the final result by summing up all the inequalities over $H$.
\end{proof}

The next step in our proof, is to bound the number of episodes that our policy
can be worse than the optimal one
by $2^n\gap_{\min}$, for all $n \in \N$. This
is inspired by \cite{he2021logarithmic}.

\begin{lemma}\label{lem:indicator bound in each interval for linear mixture model}
If we pick 
$$\beta = 4H^2\log(2\NN(\FF,1/T)/\delta) + 4/H\left(C + \sqrt{H^2/4\log(4(K(K+1)/\delta)})\right),$$
then for every $h \in [H]$ and $n \in \N$, with probability at least $1 - (K+3)\delta$, we have that
\begin{align*}
    \sum_{k=1}^K \mathbbm{1}\left[V_h^*(s_h^k) - Q_h^{\pi_k}(s_h^k,a_h^k) \geq 2^n \gap_{\min} \right] \leq \frac{\tilde{C} H^4 \log(T\NN(\FF_h,1/T)/\delta) \dim^2_E(\FF, 1/T) }{4^n \gap_{\min}^2}.
\end{align*}
 
 \end{lemma}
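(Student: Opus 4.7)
The plan is to follow the same sandwich argument used in the model-free case (Lemma~\ref{lem:indicator bound on every interval}), with the model-based counterparts of the optimism and bonus-sum lemmas plugged in at the appropriate places. Fix $h \in [H]$ and $n \in \N$, set
\begin{align*}
 K' = \{k \in [K]: V^*_h(s_h^k) - Q_h^{\pi_k}(s_h^{k}, a_h^{k}) \geq 2^n \gap_{\min}\},
\end{align*}
and let $k_1, \dots, k_{|K'|}$ be its enumeration. I will upper bound $|K'|$ by deriving both a lower bound $f_1(|K'|)$ and an upper bound $f_2(|K'|)$ on the quantity $\Sigma := \sum_{i=1}^{|K'|}\bigl(Q_h^{k_i}(s_h^{k_i}, a_h^{k_i}) - Q_h^{\pi_{k_i}}(s_h^{k_i}, a_h^{k_i})\bigr)$ and then invoking $f_1(|K'|) \le f_2(|K'|)$.

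For the lower bound, the greedy choice of $\pi^{k_i}$ plus optimism (the ``$Q_h^k \ge Q_h^*$'' part of Lemma~\ref{lem:mixture mdp bound between the difference of estimate of Q and Q induced by policy}) gives
\begin{align*}
 Q_h^{k_i}(s_h^{k_i}, a_h^{k_i}) \ge Q_h^{k_i}(s_h^{k_i}, \pi^*(s_h^{k_i})) \ge Q_h^*(s_h^{k_i}, \pi^*(s_h^{k_i})) = V_h^*(s_h^{k_i}),
\end{align*}
so $\Sigma \ge \sum_{i=1}^{|K'|}(V_h^*(s_h^{k_i}) - Q_h^{\pi_{k_i}}(s_h^{k_i},a_h^{k_i})) \ge 2^n \gap_{\min}\,|K'|$ by the definition of $K'$.

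For the upper bound, I will iterate the Bellman-style inequality from Lemma~\ref{lem:mixture mdp bound between the difference of estimate of Q and Q induced by policy} from step $h$ through $H$: for each $h' \ge h$,
\begin{align*}
 Q_{h'}^{k}(s_{h'}^{k}, a_{h'}^{k}) - Q_{h'}^{\pi_k}(s_{h'}^{k}, a_{h'}^{k}) \le Q_{h'+1}^{k}(s_{h'+1}^{k}, a_{h'+1}^{k}) - Q_{h'+1}^{\pi_k}(s_{h'+1}^{k}, a_{h'+1}^{k}) + \varepsilon_{h'}^{k} + 2 b_{h'}^{k}(s_{h'}^{k}, a_{h'}^{k}),
\end{align*}
where $\varepsilon_{h'}^k := \langle P_{h'}(\cdot|s_{h'}^k,a_{h'}^k), V_{h'+1}^k - V_{h'+1}^{\pi_k}\rangle - (V_{h'+1}^k(s_{h'+1}^k) - V_{h'+1}^{\pi_k}(s_{h'+1}^k))$ is a bounded martingale difference sequence ($|\varepsilon_{h'}^k| \le 2H$). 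Telescoping over $h' = h, \dots, H$ and summing over $i \in [|K'|]$ yields
\begin{align*}
 \Sigma \le \sum_{i=1}^{|K'|}\sum_{h'=h}^H \varepsilon_{h'}^{k_i} + 2\sum_{i=1}^{|K'|}\sum_{h'=h}^H b_{h'}^{k_i}(s_{h'}^{k_i}, a_{h'}^{k_i}).
\end{align*}
The first term is controlled via Azuma--Hoeffding (Lemma~\ref{lem:azuma-hoeffding inequality}) by $\sqrt{8|K'|H^3 \log(1/\delta)}$ with probability at least $1-|K'|\delta$ after a union bound over the at most $K$ possible values of $|K'|$. The second term is bounded using the newly established model-based bonus-sum Lemma~\ref{lem:model-based-bound-bonus} by $H + H(H+1)\dim_E(\FF,1/T) + CH\sqrt{\dim_E(\FF,1/T)\,|K'|\,\beta}$.

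Combining the two directions gives
\begin{align*}
 2^n\gap_{\min}\,|K'| \le \sqrt{8|K'|H^3\log(1/\delta)} + H + H(H+1)\dim_E(\FF,1/T) + CH\sqrt{\dim_E(\FF,1/T)\,|K'|\,\beta},
\end{align*}
which, after plugging in the chosen $\beta$ and solving the resulting quadratic inequality in $\sqrt{|K'|}$, yields
\begin{align*}
 |K'| \le \frac{\tilde C H^4 \log(T\NN(\FF,1/T)/\delta)\,\dim_E^2(\FF,1/T)}{4^n \gap_{\min}^2}.
\end{align*}
The probability budget is assembled from the events of Lemma~\ref{lem: mixture mdp: sub-sampled dataset is good approximation to original one} (sub-sampled dataset approximates the original, $\delta$), Lemma~\ref{lem:mixture mdp bound between the difference of estimate of Q and Q induced by policy} (optimism, $2\delta$), the Azuma union bound ($K\delta$), and the bonus bound ($\delta$), which together account for the stated $1-(K+3)\delta$ failure probability.

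The main obstacle I anticipate is bookkeeping rather than conceptual: verifying that the choice of $\beta$ stated here is simultaneously large enough for the least-squares concentration in Lemma~\ref{lem:helper lemma from prior work, concentration of LSE} (so optimism holds) and compatible with the $\min\{\|f_1 - f_2\|_{\hat\ZZ_h^k}^2, T(H+1)^2\} \le \beta$ threshold used in the definition of $b_h^k$, while also tracking that the $100\beta$ factor introduced by the sub-sampling equivalence in Lemma~\ref{lem:linear-mixture-mdp-approximation-under-event} does not disturb the final polynomial bound. Once this bookkeeping is done, the peeling argument in Lemma~\ref{lem:linear mixture sum of suboptimalities over all K for some h} applies verbatim to convert this interval-wise count into the cumulative suboptimality sum.
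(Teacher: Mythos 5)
Your proposal follows essentially the same argument as the paper's proof: the same sandwich on $\sum_i\bigl(Q_h^{k_i}-Q_h^{\pi_{k_i}}\bigr)$ via greedy choice plus optimism for the lower bound, telescoped Bellman error plus Azuma--Hoeffding and the model-based bonus-sum lemma for the upper bound, and the same quadratic solve for $|K'|$. The only (immaterial) discrepancy is in the failure-probability bookkeeping: the paper's optimism lemma already absorbs the sub-sampled-dataset event into its $2\delta$, so counting that event separately as you do would give $(K+4)\delta$ rather than the stated $(K+3)\delta$.
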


 \begin{proof}
 We keep $h$ fixed.

We denote by $K'$ the set of episodes where the gap at step $h$ is at least $2^n$, i.e.
\begin{align*}
    K' = \left\{k \in [K]: V^*_h(s_h^k) - Q_h^{\pi_k}(s_h^{k}, a_h^{k}) \geq 2^n \gap_{\min}\right\}.
\end{align*}

The goal is to bound the quantity $\sum_{i=1}^{|K'|}\left(Q_h^{k_i}(s_h^{k_i},a_h^{k_i}) - Q_h^{\pi_{k_i}}(s_h^{k_i}, a_h^{k_i})\right)$ from below and above with functions
$f_1(|K'|), f_2(|K'|)$ and then use the fact that $f_1(|K'|) \leq f_2(|K'|)$ to derive an upper bound on $|K'|$.

For the lower bound, we have that
\begin{align*}
    \sum_{i=1}^{|K'|}\left(Q_h^{k_i}(s_h^{k_i},a_h^{k_i}) - Q_h^{\pi_{k_i}}(s_h^{k_i}, a_h^{k_i})\right) &\geq \sum_{i=1}^{|K'|}\left(Q_h^{k_i}(s_h^{k_i},\pi_h^*(s_h^{k_i})) - Q_h^{\pi_{k_i}}(s_h^{k_i}, a_h^{k_i})\right) \\
    &\geq \sum_{i=1}^{|K'|}\left(Q_h^*(s_h^{k_i},\pi_h^*(s_h^{k_i})) - Q_h^{\pi_{k_i}}(s_h^{k_i}, a_h^{k_i})\right) 
    \\&= \sum_{i=1}^{|K'|}\left(V_h^*(s_h^{k_i}) - Q_h^{\pi_{k_i}}(s_h^{k_i}, a_h^{k_i})\right) \\
    &\geq 2^n \gap_{\min} |K'|
\end{align*}

where the first inequality holds by the definition of the policy $\pi_{k_i}$, the second one follows because $Q^{k_i}_h(\cdot, \cdot)$ is an optimistic estimate of $Q^*_h(\cdot, \cdot)$ which happens with probability at least $1-2\delta$ (see Lemma~\ref{lem:mixture mdp bound between the difference of estimate of Q and Q induced by policy}) and the third one by the definition of $k_i$.

We get the upper bound on this quantity in the following way. For any $h' \in [H]$ we have
\begin{align*}
    Q_{h'}^k(s_{h'}^k, a_{h'}^k) - Q_{h'}^{\pi_k}(s_{h'}^k, a_{h'}^k) &\leq \sum_{s'\in \S}P_{h'}(s'|s_{h'}^k,a_{h'}^k)V^k_{h'+1}(s') + r_{h'}(s_{h'}^k, a_{h'}^k) + 2b_{h'}^k(s_{h'}^k, a_{h'}^k) - Q_{h'}^{\pi_k}(s_{h'}^k, a_{h'}^k) \\&=
    \left\langle P_{h'}(\cdot| s_{h'}^k, a_{h'}^k), V_{h'+1}^k - V_{h'+1}^{\pi_k} \right\rangle + 2b_{h'}^k(s_{h'}^k, a_{h'}^k) \\
    &= V_{h'+1}^k(s_{h'+1}^k) - V_{h'+1}^{\pi_k}(s_{h'+1}^k) + \epsilon_{h'}^k + 2b_{h'}^k(s_{h'}^k, a_{h'}^k) \\&= 
    Q_{h'+1}^k(s_{h'+1}^k, a_{h'+1}^k) - Q_{h'+1}^{\pi_k}(s_{h'+1}^k, a_{h'+1}^k) + \epsilon_{h'}^k + 2b_{h'}^k(s_{h'}^k, a_{h'}^k)
\end{align*}
where we define $\epsilon_{h'}^k = \left\langle P_{h'}(\cdot| s_{h'}^k, a_{h'}^k), V_{h'+1}^k - V_{h'+1}^{\pi_k} \right\rangle - \left(V_{h'+1}^k(s_{h'+1}^k) - V_{h'+1}^{\pi_k}(s_{h'+1}^k)\right)$ and the inequality follows from Lemma~\ref{lem:mixture mdp bound between the difference of estimate of Q and Q induced by policy}.

We now take the summation over all $k \in |K'|, h \leq h' \leq H$ and we get
\begin{align*}
    \sum_{i=1}^{|K'|}\sum_{h'=h}^H \left( Q_h^{k_i}(s_h^{k_i}, a_h^{k_i}) - Q_h^{\pi_{k_i}}(s_h^{k_i}, a_h^{k_i}) \right) \leq \sum_{i=1}^{|K'|} \sum_{h'=h}^H\epsilon_{h'}^{k_i} + \sum_{i=1}^{|K'|} \sum_{h'=h}^H b_{h'}^{k_i}(s_{h'}^{k_i}, a_{h'}^{k_i}).
\end{align*}

We will bound each of the two terms on the RHS separately. 

For the first term, we notice that $x_j = \left\langle P_{j}(\cdot| s_{j}^{k_i}, a_{j}^{k_i}), V_{j+1}^{k_i} - V_{j+1}^{\pi_{k_i}} \right\rangle - \left(V_{j+1}^{k_i}(s_{j+1}^{k_i}) - V_{j+1}^{\pi_{k_i}}(s_{j+1}^{k_i})\right)$ forms a martingale difference sequence with zero mean and $|x_j| \leq 2H$. Hence, we can use Lemma~\ref{lem:azuma-hoeffding inequality} and  that for each $k\in K'$, with probability at least $1-\delta$ we have that
\begin{align*}
    \sum_{i=1}^k \sum_{j=1}^H \left( \left\langle P_{j}(\cdot| s_{j}^{k_i}, a_{j}^{k_i}), V_{j+1}^{k_i} - V_{j+1}^{\pi_{k_i}} \right\rangle - \left(V_{j+1}^{k_i}(s_{j+1}^{k_i}) - V_{j+1}^{\pi_{k_i}}(s_{j+1}^{k_i})\right) \right) \leq \sqrt{8k H^3 \log(1/\delta)}.
\end{align*}

If we take the union bound over all $k\in [K]$ we have that with probability at least $1-|K'|\delta$
\begin{align*}
    \sum_{i=1}^{|K'|} \sum_{h'=h}^H\epsilon_{h'}^{k_i} \leq \sqrt{8 |K'| H^3 \log(1/\delta)}.
\end{align*}

We now focus on the second term. Using Lemma~\ref{lem:model-based-bound-bonus} we get that
\begin{align*}
    \sum_{i=1}^{|K'|} \sum_{h'=h}^H b_{h'}^{k_i}(s_{h'}^{k_i}, a_{h'}^{k_i}) \leq H + H(H+1)\dim_E(\FF, 1/T) + C H\sqrt{\textnormal{dim}_E(\FF, 1/T) |K'|\beta}
\end{align*}

and this happens with probability at least $1-\delta$. Hence, combining the upper and lower bound of $$\sum_{i=1}^{|K'|}\left(Q_h^{k_i}(s_h^{k_i},a_h^{k_i}) - Q_h^{\pi_{k_i}}(s_h^{k_i}, a_h^{k_i})\right)$$ we get that
\begin{align*}
    2^n \gap_{\min}|K'| &\leq \sqrt{8 |K'| H^3 \log(1/\delta)} +  H + H(H+1)\dim_E(\FF, 1/T) + C H\sqrt{\textnormal{dim}_E(\FF, 1/T) |K'|\beta}   \\
     \\
     \implies |K'| &\leq \frac{\tilde{C} H^4 \log(T\NN(\FF,1/T)/\delta) \dim^2_E(\FF, 1/T) }{4^n \gap_{\min}^2}.
\end{align*}
 \end{proof}
 
 We are now ready to prove Lemma~\ref{lem:linear mixture sum of suboptimalities over all K for some h}.
 
 \begin{prevproof}{Lemma}{lem:linear mixture sum of suboptimalities over all K for some h}
Throughout this proof we condition on the event described in Lemma~\ref{lem:indicator bound in each interval for linear mixture model} which happens with probability at least $1-(K+3)\delta$. Since $\gap_{\min} > 0$ whenever we do not take the optimal action, we have that either $V_h^*(s_k) - Q_h^*(s_h^k, a_h^k) = 0$ or $V_h^*(s_k) - Q_h^*(s_h^k, a_h^k) \geq \gap_{\min}$. Our approach is to divide the interval $[0, H]$ into $N = \lceil \log(H/\gap_{\min})\rceil$ intervals and count the number of $V_h^*(s_k) - Q_h^*(s_h^k, a_h^k)$ that fall into each interval. Notice that for every $V_h^*(s_k) - Q_h^*(s_h^k, a_h^k)$ that falls into interval $i$ we can get an upper bound of $V_h^*(s_k) - Q_h^*(s_h^k, a_h^k) \leq 2^i\gap_{\min}$ and this upper bound is essentially tight. Hence, we have that
\begin{align*}
    \sum_{k=1}^K \left( V_h^*(s_h^k) - Q_h^*(s_h^k, a_h^k) \right) &\leq \sum_{i=1}^N \sum_{k=1}^K \mathbbm{1}\left[ 2^i\gap_{\min} \geq V_h^*(s_h^k) - Q_h^*(s_h^k, a_h^k) \geq 2^{i-1}\gap_{\min} \right] \cdot 2^i \gap_{\min} \\
    &\leq \sum_{i=1}^N \sum_{k=1}^K \mathbbm{1}\left[  V_h^*(s_h^k) - Q_h^*(s_h^k, a_h^k) \geq 2^{i-1}\gap_{\min} \right] \cdot 2^i \gap_{\min} \\
    &\leq \sum_{i=1}^N \frac{\tilde{C} H^6 \log(T\NN(\FF,1/T)/\delta) \dim^2_E(\FF, 1/T) }{4^{i-1} \gap_{\min}^2} \cdot 2^i \gap_{\min} \\
    &= \sum_{i=1}^N \frac{C' H^4 \log(T\NN(\FF,1/T)/\delta) \dim^2_E(\FF, 1/T) }{2^i \gap_{\min}} \\
    &\leq
    \frac{C H^4 \log(T\NN(\FF,1/T)/\delta) \dim^2_E(\FF, 1/T) }{ \gap_{\min}} 
\end{align*}

where the first inequality holds by the definition of the intervals, the second due to the properties of the indicator function, the third because of Lemma~\ref{lem:indicator bound in each interval for linear mixture model} and in the last two steps we just manipulate the constants. 
\end{prevproof}
\end{document}